\newtheorem{lemma}{Lemma}
\newtheorem{corollary}{Corollary}
\newtheorem{assumption}{Assumption}
\newcommand{\Gk}{\ensuremath{{\mathcal G}_k}}
\newcommand{\Hk}{\ensuremath{{\mathcal H}_k}}
\DeclareMathOperator{\Tr}{Tr}
\DeclareMathOperator{\M}{\mathcal{M}}
\DeclareMathOperator{\D}{\mathcal{D}}
\DeclareMathOperator{\C}{\mathcal{C}}
\DeclareMathOperator{\Scal}{\mathcal{S}}
\DeclareMathOperator{\N}{\mathcal{N}}
\DeclareMathOperator{\Z}{\textbf{Z}}
\DeclareMathOperator{\V}{\textbf{V}}
\DeclareMathOperator{\T}{\textbf{T}}
\DeclareMathOperator{\X}{\textbf{X}}
\DeclareMathOperator{\F}{F_{\bm a}}
\DeclareMathOperator{\G}{\textbf{G}}
\DeclareMathOperator{\A}{\textbf{A}}
\DeclareMathOperator{\Cm}{\textbf{C}}
\DeclareMathOperator{\Dm}{\textbf{D}}
\DeclareMathOperator{\I}{\textbf{I}}
\DeclareMathOperator{\Q}{\textbf{Q}}
\DeclareMathOperator{\Y}{\textbf{Y}}
\DeclareMathOperator{\one}{\textbf{1}}
\DeclareMathOperator{\Ppos}{P}
\DeclareMathOperator{\diag}{diag}
\newcommand{\xdim}{n}
\newcommand{\Hm}{\mathbf{H}}
\algnewcommand\algorithmicparfor{\textbf{parallel for}}
\algnewcommand\algorithmicpardo{\textbf{do}}
\algnewcommand\algorithmicendparfor{\textbf{end\ parallel for}}
\title{Multi-Level Local SGD: Distributed SGD for Heterogeneous Hierarchical Networks}
\author{Timothy Castiglia, Anirban Das, and Stacy Patterson%
    \thanks{T. Castiglia, A. Das, and S. Patterson are with the Department of Computer Science,
        Rensselaer Polytechnic Institute, 110 8th St, Troy, NY 12180,
        {\tt\small castit@rpi.edu, dasa2@rpi.edu, sep@cs.rpi.edu}.}%
}
\begin{document}
\maketitle

\begin{abstract}
    We propose Multi-Level Local SGD, a distributed stochastic gradient method for learning a smooth, non-convex objective in a multi-level communication network with heterogeneous workers. Our network model consists of a set of disjoint sub-networks, with a single hub and multiple workers; further, workers may have different operating rates. The hubs exchange information with one another via a connected, but not necessarily complete, communication network. In our algorithm, sub-networks execute a distributed SGD algorithm, using a hub-and-spoke paradigm, and the hubs periodically average their models with neighboring hubs. We first provide a unified mathematical framework that describes the Multi-Level Local SGD algorithm. We then present a theoretical analysis of the algorithm; our analysis shows the dependence of the convergence error on the worker node heterogeneity, hub network topology, and the number of local, sub-network, and global iterations. We illustrate the effectiveness of our algorithm in a multi-level network with slow workers via simulation-based experiments. 
\end{abstract}

\section{Introduction}

Stochastic Gradient Descent (SGD) is a key algorithm in modern
Machine Learning and optimization~\citep{amari1993backpropagation}. 
To support
distributed data as well as reduce training time, 
\citet{zinkevich2010parallelized} introduced a distributed form of SGD. 
%Distributed SGD splits up tasks among the worker machines that hold local data.
Traditionally, distributed SGD is run within a hub-and-spoke
network model: a central parameter server (hub) coordinates with
worker nodes. At each iteration, the hub sends a model to
the workers. The workers each train on their local data, taking a gradient step, then 
return their locally trained model to the hub to be averaged. 
Distributed SGD can be an efficient training mechanism when message latency 
is low between the hub and workers, allowing gradient updates
to be transmitted quickly at each iteration. 
However, as noted in~\citet{moritz2016sparknet}, message transmission latency is 
often high in distributed settings, which 
causes a large increase in overall training
time.  
A practical way to reduce this communication overhead is to allow 
the workers to take multiple local gradient steps before communicating
their local models to the hub.
%centralized hub which subsequently aggregates these models. 
%To reduce communication time, workers train for several local gradient steps before models are averaged at the hub.
This form of distributed SGD 
%where workers train for more than one local gradient step 
is referred to as Local SGD~\citep{lin2018don,stich2018local}. 
There is a large body of work that analyzes the convergence 
of Local SGD and the benefits of multiple local training rounds~\citep{pmlr-v54-mcmahan17a,
wang2018cooperative,li2019convergence}.

Local SGD is not applicable to all scenarios.
Workers may be heterogeneous in terms of their computing capabilities, 
and thus the time required for local training is not uniform.
For this reason, it can be either costly or impossible for workers
to train in a fully synchronous manner, as stragglers may hold up
global computation.
However, the vast majority of previous work uses a synchronous model, 
where all clients train for the same number of rounds before sending updates 
to the hub~\citep{dean2012large,ho2013more,cipar2013solving}. 
Further, most works assume a hub-and-spoke model, but this does not 
capture many real world settings.
For example, devices in an ad-hoc
network may not all be able to communicate to a central hub
in a single hop due to network or communication range limitations. 
In such settings, a multi-level communication network model may be beneficial.
In flying ad-hoc networks (FANETs), a network architecture has been 
proposed to improve scalability by partitioning the UAVs into mission areas~\citep{bekmezci2013flying}.
Here, clusters of UAVs have their own clusterheads, or hubs, and these hubs communicate through an upper level network, e.g., via satellite. 
Multi-level networks have also been utilized in Fog and Edge computing,
a paradigm designed to improve data aggregation and analysis 
    in wireless sensor networks,
autonomous vehicles, power systems, and more~\citep{10.1145/2342509.2342513,
NREL2017,satyanarayanan2017emergence}.

Motivated by these observations, we propose
Multi-Level Local SGD (MLL-SGD), a distributed learning algorithm 
for heterogeneous multi-level networks.
Specifically, we consider a two-level network structure. The lower
level consists of a disjoint set of hub-and-spoke \emph{sub-networks},  
each with a single hub server and a set of workers.
The upper level network consists of a connected, but not necessarily complete,
\emph{hub network} by which the hubs communicate.
For example, in a Fog Computing application, 
the sub-network workers may be edge devices connected to their
local data center, and the data centers act as hubs communicating
over a decentralized network.
Each sub-network runs one or more Local SGD rounds, in which its workers 
%train for some local rounds, 
train for a local training period, 
followed by model averaging at the sub-network's hub.
Periodically, the hubs average their 
models with neighbors in the hub network. 
We model heterogeneous workers using a stochastic approach; 
each worker executes a local training iteration in each 
time step with a probability proportional to its computational resources.
Thus, different workers may take different numbers of gradient 
steps within each local training period. 
Note since MLL-SGD averages every local training period, 
regardless of how many gradient steps each worker takes, 
slow workers do not slow algorithm execution.

We prove the convergence of MLL-SGD for smooth and potentially 
non-convex loss functions.
We assume data is distributed in an IID manner to all workers.
Further, we analyze the relationship between 
the  convergence error and algorithm parameters
and find that, for a fixed step size, the error is quadratic
in the number of local training iterations and the number 
of sub-network training iterations, and
linear in the average worker operating
rate.
Our algorithm and analysis are general enough to encompass several variations of 
SGD as special cases, including
classical SGD~\citep{amari1993backpropagation}, 
SGD with weighted workers~\citep{pmlr-v54-mcmahan17a}, 
and Decentralized Local SGD with an arbitrary hub communication 
network~\citep{wang2018cooperative}. 
Our work provides novel analysis of a distributed learning algorithm 
in a multi-level network model with heterogeneous workers. 

The specific contributions of this paper are as follows.
$1)$ We formalize the multi-level network model with heterogeneous workers, and we define
        the MLL-SGD algorithm for training models
        in such a network. 
$2)$ We provide theoretical analysis of the convergence guarantees of MLL-SGD with heterogeneous workers. 
$3)$ We present an experimental evaluation that highlights our theoretical 
        convergence guarantees. 
        The experiments show that in multi-level networks, MLL-SGD achieves a 
        marked improvement in convergence rate over algorithms 
        that do not exploit the network hierarchy. Further, when 
        workers have heterogeneous operating rates, MLL-SGD converges 
        more quickly than algorithms that require all 
        workers to execute the same number of training steps in each local training period.

The rest of the paper is structured as follows.
In Section~\ref{related.sec}, we discuss related
work. Section~\ref{problem.sec} introduces the
system model and problem formulation. 
We describe MLL-SGD in Section~\ref{alg.sec}, and we present our main theoretical results in Section~\ref{main.sec}.
Proofs of these results are deferred to the appendix.
We provide experimental results in Section~\ref{exp.sec}.
Finally, we conclude in Section~\ref{conclusion.sec}.

\section{Related Work}\label{related.sec}

Distributed SGD is a well studied subject 
in Machine Learning.
\citet{zinkevich2010parallelized} introduced 
parallel SGD in a hub-and-spoke model. Variations on Local SGD in the
hub-and-spoke model
have been studied in several works~\citep{moritz2016sparknet,
zhang2016parallel, pmlr-v54-mcmahan17a}.
Many works have provided convergence bounds of SGD within this 
model~\citep{wang2019adaptive,li2019convergence}. 
There is also a large body of work on decentralized approaches
for optimization using gradient based methods, dual averaging,
and deep learning~\citep{tsitsiklis1986distributed,
jin2016scale,wang2019matcha}.
These previous works, however, do not address a multi-level network
structure. 

In practice, workers may be heterogeneous in nature, which means that 
they may execute training iterations at different rates.
\citet{lian2017asynchronous} 
addressed this heterogeneity by defining a gossip-based 
asynchronous SGD algorithm. 
In \citet{stich2018local}, workers are modeled to
take gradient steps at an arbitrary subset of all iterations.
However, neither of these works address a multi-level network model. 
Grouping-SGD~\citep{jiang2019novel} considers a scenario where workers
can be clustered into groups, for example, based on their operating rates. 
Workers within a group train in a synchronous manner, while
the training across different groups may be asynchronous. The system model
differs significantly from that in MLL-SGD in that as the model parameters are partitioned vertically across
multiple hubs, and workers communicate with every hub.

Several recent works analyze Hierarchical Local SGD (HL-SGD), an algorithm
for training a model in a hierarchical network. 
Different from MLL-SGD, HL-SGD assumes the hub network topology 
is a hub-and-spoke and also that
 workers are homogeneous. 
\citet{zhou2019distributed} and 
\citet{liu2020client} analyze the 
convergence error of HL-SGD, 
while \citet{abad2020hierarchical} analyzes convergence time. 
Unlike HL-SGD, MLL-SGD accounts for an arbitrary hub communication
    graph, and MLL-SGD algorithm execution does not slow down in the
presence of heterogeneous worker operating rates.

Several other works
%Some works 
seek to encapsulate many variations of SGD under
a single framework. 
\citet{koloskova2020unified} created a generalized
model that considers a gossip-based decentralized 
SGD algorithm where the communication network is
time-varying.
However, this work does not account for a multi-level
network model nor worker heterogeneity. 
Wang et al. introduced the Cooperative SGD 
framework~\citep{wang2018cooperative}, a model
that includes communication reduction through local SGD steps
and decentralized mixing between homogeneous workers.
Cooperative SGD also allows for auxiliary 
variables. These auxiliary variables can be used to model SGD 
in a multi-level network, but only when sub-network averaging is
immediately followed by hubs averaging with their neighbors in the hub network.
Our model is more general; it considers heterogeneous workers
and it allows for an arbitrary number of averaging 
rounds within each sub-network between averaging rounds across sub-networks, 
which is more practical in multi-level networks where inter-hub 
communication is slow or costly.

\section{System Model and Problem Formulation}\label{problem.sec}
In this section, we introduce our system model, 
the objective function that we seek to minimize,
and the assumptions we make about the function.

%\paragraph{Notation.}
%Given a vector $\bm x$, we indicate the $i$-th
%entry by $\bm x_i$. All vectors are column vectors.
%The vector $\one_N$ is an $N$-vector of all ones.
%Given a matrix $\A$, the element in row $i$
%and column $j$ is indicated by $\A_{i,j}$.
%Let $\Vert \cdot \Vert$ denote the $l_2$ vector norm.
%For a variable $y$, we let $y^{(i)}$ indicate the
%value of $y$ stored at participant $i$.

%\subsection{System Model}
We consider a set of $D$ sub-networks $\D=\{1,\ldots ,D\}$. 
Each sub-network $d \in \D$ has a single hub and 
a set of workers $\M^{(d)}$,  
with $|\M^{(d)}| = N^{(d)}$. 
Workers in $\M^{(d)}$ only communicate with
their own hub and not with any other workers or hubs.
We define the set of all workers in the system as
$\M = \bigcup_{d=1}^{D} \M^{(d)}$. Let $|\M| = N$.
Each worker $i$ holds a set $\Scal^{(i)}$ of local training data.
Let $\Scal = \bigcup_{i=1}^{N} \Scal^{(i)}$. 
%Each worker $i$ has a model $x_k^{(i)} \in \mathbb{R}^m$
%at iteration $k$ where integer $m \geq 1$ is the dimension. 
The set of all $D$ hubs is denoted $\C$.
The hubs communicate with one another via an undirected,
connected communication graph $G=(\C,E)$. 
Let $\mathcal{N}_d=\{j~|~e_{d,j} \in E\}$ denote the set of neighbors 
of the hub in sub-network $d$ in the hub graph $G$. 

%Each worker may complete a different number 
%of local iterations in the same amount of time.
%We explain how we model this in Section~\ref{main.sec}.

%\subsection{Problem Formulation}

Let the model parameters be denoted by $\bm x \in \mathbb{R}^{\xdim}$.
Our goal is to find an $\bm x$ that minimizes 
the following objective function over the training set:
\begin{align}
    F(\bm x) = \frac{1}{|\Scal|} \sum_{s \in \Scal} f(\bm x ; s)
\end{align}
where $f(\cdot)$ is the loss function. 
The workers collaboratively minimize this loss function, in part by
executing local iterations of SGD over their training sets.
For each executed local iteration, a worker samples a 
mini-batch of data uniformly at random from its local data.
Let $\xi$ be a randomly sampled mini-batch of data 
and let $g(\bm x ; \xi) = \frac{1}{|\xi|} 
\sum_{s \in \xi} \nabla f(\bm x ; s)$ be the mini-batch gradient.
For simplicity, we use $g(\bm x)$ instead of 
$g(\bm x ; \xi)$ from here on.

%\begin{restatable}{assumption}{Fassum} \label{F.assum}
\begin{assumption} \label{F.assum}
The objective function and the mini-batch gradients satisfy the following:
\begin{enumerate}[label=1\alph*] 
    \item \label{smooth.asp} 
        The objective function $F: \mathbb{R}^{\xdim} \rightarrow \mathbb{R}$ 
        is continuously differentiable, and the gradient is Lipschitz 
        with constant $L  > 0$, i.e., 
        $\| \nabla F(\bm x) - \nabla F(\bm y) \|_2 \leq L \| \bm x - \bm y \|_2$
        for all $\bm x, \bm y \in  \mathbb{R}^{\xdim}$.
    \item \label{lower.asp} 
        The function $F$ is lower bounded, i.e.,  
        $F(\bm x) \geq F_{inf} > - \infty$
        for all $\bm x \in \mathbb{R}^{\xdim}$.
    \item \label{unbias.asp} 
        The mini-batch gradients are unbiased, i.e., 
        $\mathbb{E}_{\xi|\bm x}[g(\bm x)] = \nabla F(\bm x)$
        for all $\bm x \in \mathbb{R}^{\xdim}$.         
    \item \label{variance.asp} 
        There exist scalars $\beta \geq 0$ and $\sigma \geq 0$ 
        such that
        $\mathbb{E}_{\xi|\bm x} \| g(\bm x)- \nabla F(\bm x) \|_2^2 
                \leq \beta ||\nabla F(\bm x)||_2^2 + \sigma^2$
        for all $\bm x \in \mathbb{R}^{\xdim}$. 
\end{enumerate}
\end{assumption}
%\end{restatable}
Assumption~\ref{smooth.asp} requires that the gradients do not 
change too rapidly, and Assumption~\ref{lower.asp} requires 
that our objective function is lower bounded by some $F_{inf}$.
%and Assumption~\ref{bound.asp} requires gradient magnitudes
%be bounded.
Assumptions~\ref{unbias.asp} and \ref{variance.asp} 
assume that the local data at
each worker can be used as an unbiased estimate for the
full dataset with the same bounded variance.
These assumptions are common in convergence analysis of SGD 
algorithms (e.g., ~\cite{bottou2018optimization}).
%is independent and identically distributed (IID),
%common in scenarios where all workers have access to the full dataset.
%This is a strong assumption for most modern distributed networks, though
%we believe our work can be extended to the Non-IID setting. 

\begin{algorithm}[t]
    \begin{algorithmic}[1]
        \State {\textbf{Initialize:}} $\bm y^{(d)}_1$ for hubs $d=1, \ldots, D$
     \For {$k = 1, \ldots, K$}
     \ParFor {$d \in \D$}
        \If{$k$ mod $\tau = 1$}
            \State $\bm x^{(i)}_{k} \gets \bm y^{(d)}_k$ \label{localsgd_start.line}
            \Comment Workers receive updated model from hub
        \EndIf
		\ParFor {$i \in \M^{(d)}$}
        \State $\bm x^{(i)}_{k+1} \gets \bm x^{(i)}_k - \eta \bm g^{(i)}_k$ \label{local.line}
		 	\Comment Local iteration (probabilistic)
		\EndParFor  
     \If{$k$ mod $\tau = 0$}
         \State $\bm z^{(d)} \gets \sum_{i \in \M^{(d)}} v^{(i)} \bm x^{(i)}_{k+1}$     \label{localsgd_end.line}
         \Comment Hub $d$ computes average of its workers' models
     \EndIf
     \If{$k$ mod $q\cdot \tau = 0$}
     \State  $\bm y^{(d)}_{k+1} \gets \sum_{j \in \N^{(d)}} \Hm_{j,d} \bm z^{(j)}$ \label{globalavg.line}
	     	\Comment Hub $d$ averages its model with neighboring hubs
	\Else
        \State $\bm y^{(d)}_{k+1} \gets \bm z^{(d)}$
     \EndIf
  \EndParFor
   \EndFor
    \end{algorithmic}
    \caption{Multi-Level Local SGD}
    \label{mll-sgd.alg}
\end{algorithm}

\section{Algorithm}\label{alg.sec}

We now present our Multi-Level Local SGD (MLL-SGD) algorithm. The
pseudocode is shown in Algorithm~\ref{mll-sgd.alg}.
Each sub-network trains in parallel
and, periodically, the hubs average their models with neighboring hubs.
The steps corresponding to Local SGD are shown in lines \ref{localsgd_start.line}-\ref{localsgd_end.line}.
Each hub and worker stores a copy of the model.
For worker $i \in \M^{(d)}$, we denote its copy of the local model by $\bm x^{(i)}$.
We denote the model at hub $d$ by  $\bm y^{(d)}$.
The hub first sends its model to its workers, and the workers update their local models to match their hub's model.  
%The hub first sends the model $\bm y^{(d)}$ to its workers, and the workers update their local models to $\bm y^{(d)}$.  
Workers then execute multiple local training iterations, 
shown in line \ref{local.line}, to refine their local models independently.
To represent the different rates of computation at each worker, we use a probabilistic approach.
We assume that, in expectation, a worker $i$ execute $\tau^{(i)}$ local 
iterations for every $\tau$ time steps ($\tau^{(i)} \leq \tau$). We thus define
the N-vector $\bm p$ where each entry
$\bm p_i=\frac{\tau^{(i)}}{\tau}$ is the probability with which worker $i$ 
executes a local gradient step in each iteration $k$.
Worker $i$ updates its local model at iteration $k$ as follows: 
\begin{align}
    \bm x^{(i)}_{k+1} = \bm x^{(i)}_k - \eta \bm g_k^{(i)} 
\end{align}
where $\eta$ is the step size and $\bm g_k^{(i)}$ is a random variable such that
\begin{align}
    \bm g_k^{(i)} =   
    \begin{cases}
        g(\bm x_k^{(i)}) & \text{w/ probability } \bm p_{i} \\
        \bm 0 & \text{w/ probability } 1-\bm p_{i}.
   \end{cases}
\end{align}
After $\tau$ time steps, the hub updates its model based on the models of its workers (line \ref{localsgd_end.line}).
% similar to previous works~\citep{pmlr-v54-mcmahan17a,li2018federated,li2019convergence}. 
For each worker $i$, we assign a positive weight $w^{(i)}$. 
Let $v^{(i)}$ be the weight for worker $i$ normalized within its 
sub-network:  $ v^{(i)} = \frac{w^{(i)}}{\sum_{j \in \M^{(d(i))}} w^{(j)}}$,
where $d(i)$ denotes the sub-network of worker $i$.
Each hub's updates its model to be a weighted average over the workers' models in its sub-network:
$\bm y^{(d)} = \sum_{i \in \M^{(d)}} v^{(i)} \bm x^{(i)}$.
Weights may be assigned for
different reasons. If all worker gradients are 
treated equally, then $w^{(i)} = 1$ and $v^{(i)}=\frac{1}{N^{(d(i))}}$. 
We may also weight a worker's gradient proportional to its local dataset
%sample set 
size, in which case
$w^{(i)} = |\Scal^{(i)}|$ and
$v^{(i)} = \frac{|\Scal^{(i)}|}{\sum_{r \in \M^{(d(i))}} |\Scal^{(r)}|}$. The latter approach 
is used in Federated Averaging~\citep{pmlr-v54-mcmahan17a}.
%Each hub computes the weighted model average over the workers in its sub-network. 
%as follows:
%\begin{align}
%    \bm y^{(d)} = \sum_{i \in \M^{(d)}} v^{(i)} \bm x^{(i)} .
%\end{align}

After $q$ iterations of Local SGD in each sub-network 
($q \cdot \tau$ time steps), the hubs average their 
models with their neighbors in the hub communication network (line  \ref{globalavg.line}). 
The weight assigned to each hub's model is defined by 
a $D \times D$ matrix $\Hm$ so that:  
\begin{align}
    \bm y^{(d)} = \sum_{j \in \N^{(d)}} \Hm_{j,d} \bm y^{(j)} .
\end{align}
Define the total weight in the network to be 
${w_{tot} = \sum_{i \in \M} w^{(i)}}$.
%$w_{tot} = \sum_{d \in \D} \sum_{i \in \M^{(d)}} w^{(i)}$.
Let $\bm b$ be a $D$-vector with each component $d$ given 
by $\bm b_d = (\sum_{i \in \M^{(d)}} w^{(i)}) / w_{tot}$.
We assume $\Hm$ meets the following requirements.
\begin{assumption} \label{H.assum}
    The matrix $\Hm$ satisfies the following:
%    2a) If $(i,j) \in E$, then $\Hm_{i,j} > 0$. Otherwise, $\Hm_{i,j}  = 0$.
%    2b) $\Hm$ is column stochastic, i.e.,  $\sum_{i=1}^D \Hm_{i,j} = 1$.
%    2c) For all $i,j \in \D$, we have $\bm b_i \Hm_{i,j} = \bm b_j \Hm_{j,i}$.
    \begin{enumerate}[label=2\alph*]
        \item If $(i,j) \in E$, then $\Hm_{i,j} > 0$. Otherwise, $\Hm_{i,j}  = 0$.
        \item $\Hm$ is column stochastic, i.e.,  $\sum_{i=1}^D \Hm_{i,j} = 1$.
        \item For all $i,j \in \D$, we have $\Hm_{i,j} \bm b_j  = \Hm_{j,i} \bm b_i$. 
    \end{enumerate}
\end{assumption}
Assumption~\ref{H.assum} implies that $\Hm$ has one as a simple 
eigenvalue, with corresponding right eigenvector $\bm b$ and 
left eigenvector $\one_{D}$.
Further, all of its other eigenvalues have magnitude strictly less than $1$ 
(since $G$ is connected) \citep{rotaru2004dynamic}.
By defining $\Hm$ in this way, we ensure that the contributions 
from the workers' gradients in each hub are incorporated in proportion to the workers' weights.
This weighted averaging approach allows us to naturally extend Federated Averaging 
to the multi-level network model. 
%One candidate implementation of $\Hm$ is 
%$\Hm = \I - \alpha \Lg \diag(\bm b)^{-1}$, where $\Lg$ is the 
%Laplacian matrix of the hub communication graph $G$ and 
%$1/\alpha$ is larger than the maximum diagonal entry
%in $\Lg \diag(\bm b)^{-1}$.

\section{Analysis}\label{main.sec}

We note that hubs are essentially stateless in MLL-SGD,
as the hub models are copied to all workers after each
sub-network or hub averaging. Thus, our analysis focuses
on how worker models evolve. We first present an equivalent 
formulation of the MLL-SGD algorithm in terms of the evolution of the worker models.
We then present our main result on the convergence of MLL-SGD.

%\subsection{Unified System Model}
The system behavior can be summarized by the
following update rule for worker models:
\begin{align}
\X_{k+1} = (\X_k - \eta \G_k)\T_k
\label{x_recur.eq}
\end{align}
where $n \times N$ matrix $\X_k = [\bm x_k^{(1)}, \ldots ,\bm x_k^{(N)}]$,
$n \times N$ matrix $\G_k = [{\bm g}_k^{(1)}, \ldots ,{\bm g}_k^{(N)}]$,
and $N \times N$ matrix $\T_k$ is a time-varying operator that captures the three stages in
MLL-SGD: local iterations, hub-and-spoke
 averaging within each sub-network, 
and averaging across the hub network. 
We define  $\T_k$ as follows:
\begin{align} \label{Tk.def}
    \T_k = 
    \begin{cases}
        \Z & \text{if } k \bmod q\tau = 0 \\
        \V & \text{if } k \bmod \tau = 0~\text{and}~ k \bmod q\tau \neq 0\\
        \I & \text{otherwise}. 
   \end{cases}
\end{align}
For
local iterations, $\T_k = \I$, as there are no interactions
between workers or hubs.
For sub-network averaging,
$\V$ is an $N \times N$ block diagonal matrix, with each block 
$\V^{(d)}$ corresponding to a single sub-network $d$.
The matrix $\V^{(d)}$ is an $N^{(d)} \times N^{(d)}$ matrix where each
entry is $\V^{(d)}_{i,j} = v^{(i)}$.
Finally,
%for hub network averaging 
we define
an $N \times N$  matrix $\Z$
that captures the sub-network averaging and hub network
averaging in one operation that involves all workers.
The components of $\Z$ are given by
    \begin{align}
        \Z_{i,j} = \Hm_{d(i),d(j)} v^{(i)}.
    \end{align}
Let $\bm a$ be an $N$-vector with each component 
$\bm a_i = \frac{w^{(i)}}{w_{tot}}$ representing the weight of
worker $i$, normalized over all worker weights. 
We observe that  $\Z$ and $\V$ satisfy the following:
each have a right eigenvector of $\bm a$ and left eigenvector of 
$\one_{N}^T$ with eigenvalue $1$ and all other eigenvalues 
have magnitude strictly less than $1$.
The proof of these properties can be found in the appendix.
%the following proposition, whose proof is given in the appendix.
%\begin{restatable}{prop}{Za} \label{Za.prop}
%The matrices $\Z$ and $\V$ satisfy the following properties:
%\begin{enumerate}
%\item $\Z$ and $\V$ each have a right eigenvector of $\bm a$ with eigenvalue 1.
%\item $\Z$ and $\V$ each have a left eigenvector of $\one_{N}^T$ with eigenvalue 1.
%\item All other eigenvalues of $\Z$ and $\V$ have magnitude strictly less than 1.
%\end{enumerate}
%\end{restatable}
These properties are necessary (but not sufficient) to ensure that 
the worker models converge to a consensus model, 
where each worker's updates have been incorporated according 
to the worker's weight.

As is common, 
we study an averaged model
over all workers in the system~\citep{
yuan2016convergence,wang2018cooperative}. 
Specifically, we define a weighted average model: 
\begin{align}
\bm u_k = \X_k \bm a.
\end{align}
We identify the recurrence relation of $\bm u_k$. If we multiply $\bm a$
on both sides of (\ref{x_recur.eq}):
\begin{align}
    \X_{k+1} \bm a &= (\X_k - \eta \G_k)\T_k \bm a \\ 
    \bm u_{k+1} &= \bm u_k - \eta \G_k \bm a \label{Tgone.eq} \\ 
    \bm u_{k+1} &= \bm u_k - \eta \sum_{i=1}^{N} \bm a_i \bm g_k^{(i)} 
\end{align}
where (\ref{Tgone.eq}) follows from $\bm a$ being a right eigenvector
of $\V$ and $\Z$ with eigenvalue $1$. 
We note that $\bm u_k$ is updated via a
stochastic gradient descent step using a weighted average 
of several mini-batch gradients. 
%We study the convergence of $\bm u_k$ when applying this approximate SGD.
Since $F(\cdot)$ may be non-convex, SGD may converge to a local minimum
or saddle point. Thus, we study the gradients of $\bm u_k$ as $k$ increases.
%Our goal is to put a bound on the expected average gradient of $\bm u_k$:
%\begin{align}
%\mathbb{E} \left[ \frac{1}{K} \sum_{k=1}^K \| \nabla F(\bm u_k) \|_2^2 \right].
%\end{align}

%\subsection{Main Result}

%In this section, 
We next provide the main theoretical result of the paper.
\begin{restatable}{thm}{main} \label{main.thm}
    Under Assumptions~\ref{F.assum} and \ref{H.assum}, 
   if $\eta$ satisfies the following for all $i \in \M$:
\begin{align}
       (4\bm p_i - \bm p_i^2 - 2) \geq \eta L \left(\bm a_i\bm p_{i}(\beta+1)-\bm a_i\bm p_i^2 + \bm p_i^2\right) + 8L^2\eta^2q^2\tau^2 \Gamma \label{probreq.eq}
\end{align}
where $\Gamma = \frac{\zeta}{1 - \zeta^2} + \frac{2}{1-\zeta} + \frac{\zeta}{(1-\zeta)^2}$ 
and $\zeta=\max\{|\lambda_2(\Hm)|, |\lambda_N(\Hm)|\}$,
then the 
expected square norm of the average model gradient, averaged over $K$ iterations, 
is bounded as follows: 
\begin{align} 
    \mathbb{E}\Bigg[ \frac{1}{K} \sum_{k=1}^K &\Vert\nabla F(\bm u_k)\Vert_2^2 \Bigg]
\leq \frac{2\left(F(\bm x_1) - F_{inf}]\right)}{\eta K}
        + \sigma^2 \eta L\sum_{i=1}^N  \bm a_i^2 \bm p_i  
        \nonumber \\ &
        + 4L^2 \eta^2\sigma^2  
        q^3\tau^3 \left(\frac{1}{q\tau}-\frac{1}{K} \right) 
        \left(\frac{\zeta^2}{1 - \zeta^2} + \frac{2\zeta}{1-\zeta} + \frac{1}{(1-\zeta)^2} \right) \Ppos 
        \nonumber \\ &
        + 4L^2\eta^2\sigma^2 \left( \frac{2-\zeta}{1-\zeta} \right)
        \left(\tau^2\frac{(q-1)(2q-1)}{6}
        +\frac{(\tau-1)(2\tau-1)}{6}\right) \Ppos 
    \label{con.thm} \\
    \xrightarrow{\mathit{K \rightarrow \infty}}~~~&
        \sigma^2 \eta L\sum_{i=1}^N  \bm a_i^2 \bm p_i  
        + 4L^2 \eta^2\sigma^2  
        q^2\tau^2  
        \left(\frac{\zeta^2}{1 - \zeta^2} + \frac{2\zeta}{1-\zeta} + \frac{1}{(1-\zeta)^2} \right) \Ppos 
        \nonumber \\ &
        + 4L^2\eta^2\sigma^2 \left( \frac{2-\zeta}{1-\zeta} \right)
        \left(\tau^2\frac{(q-1)(2q-1)}{6}
        +\frac{(\tau-1)(2\tau-1)}{6}\right) \Ppos 
\end{align}
where $\Ppos = \sum_{i=1}^N \bm a_i \bm p_i$.
\end{restatable}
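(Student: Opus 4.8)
The plan is a perturbed-iterate analysis: I would track the weighted average $\bm u_k = \X_k\bm a$, whose SGD-like recursion $\bm u_{k+1} = \bm u_k - \eta\sum_i\bm a_i\bm g_k^{(i)}$ is already in hand from (\ref{Tgone.eq}), and separately control the drift of each worker model $\bm x_k^{(i)}$ away from $\bm u_k$.

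\emph{Descent step.} Apply $L$-smoothness (Assumption~\ref{smooth.asp}) to the update of $\bm u_k$ and take expectations conditioned on the history. Since $\bm g_k^{(i)} = g(\bm x_k^{(i)})$ with probability $\bm p_i$ and $\bm 0$ otherwise, unbiasedness (Assumption~\ref{unbias.asp}) gives $\mathbb{E}[\bm g_k^{(i)}] = \bm p_i\nabla F(\bm x_k^{(i)})$, and independence of the activations and mini-batches across workers makes $\mathbb{E}\Vert\G_k\bm a\Vert_2^2$ split into the squared mean $\Vert\sum_i\bm a_i\bm p_i\nabla F(\bm x_k^{(i)})\Vert_2^2$ plus $\sum_i\bm a_i^2\,\mathrm{Var}(\bm g_k^{(i)})$. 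Bounding $\mathrm{Var}(\bm g_k^{(i)}) = \bm p_i\mathbb{E}\Vert g(\bm x_k^{(i)})\Vert_2^2 - \bm p_i^2\Vert\nabla F(\bm x_k^{(i)})\Vert_2^2$ via Assumption~\ref{variance.asp} produces the $\sigma^2\eta L\sum_i\bm a_i^2\bm p_i$ term of (\ref{con.thm}) along with some $\Vert\nabla F(\bm x_k^{(i)})\Vert_2^2$ terms to be absorbed later. For the inner product I would write $\nabla F(\bm x_k^{(i)}) = \nabla F(\bm u_k) + (\nabla F(\bm x_k^{(i)}) - \nabla F(\bm u_k))$, apply the three-point identity, and use Lipschitzness of $\nabla F$ to trade the cross term for $\tfrac{\eta L^2}{2}\sum_i\bm a_i\bm p_i\Vert\bm x_k^{(i)} - \bm u_k\Vert_2^2$. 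The conditional one-step inequality then involves $-\Vert\nabla F(\bm u_k)\Vert_2^2$, the noise floor, residual $\Vert\nabla F(\bm x_k^{(i)})\Vert_2^2$ terms, and the weighted consensus error $\sum_i\bm a_i\bm p_i\mathbb{E}\Vert\bm x_k^{(i)} - \bm u_k\Vert_2^2$.

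\emph{Consensus lemma -- the crux.} The main work, and the hardest part, is bounding the consensus error. I would unroll (\ref{x_recur.eq}) back to the most recent hub-averaging step and, within it, to the most recent sub-network-averaging step, using the three cases of $\T_k$ in (\ref{Tk.def}). Writing $\X_k - \bm u_k\one_N^T = \X_k(\I - \bm a\one_N^T)$ and using that $\V$ and $\Z$ have right eigenvector $\bm a$ and left eigenvector $\one_N^T$ with eigenvalue $1$ (shown in the appendix), products of $\I,\V,\Z$ on the disagreement subspace contract: $\V$ is idempotent and removes within-sub-network disagreement each local period, while $\Z$ contracts between-sub-network disagreement by $\zeta = \max\{|\lambda_2(\Hm)|,|\lambda_N(\Hm)|\}$ per hub round (working in the $\bm b$-weighted inner product that symmetrizes $\Hm$, by Assumption~\ref{H.assum}). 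The gradient noise injected at each of the at most $q\tau$ steps since the last hub round has second moment controlled as above; summing the geometric series in $\zeta$ over all prior hub rounds yields the factors $\tfrac{\zeta^2}{1-\zeta^2}+\tfrac{2\zeta}{1-\zeta}+\tfrac{1}{(1-\zeta)^2}$ and $\Gamma$, summing the arithmetic series over the local steps and sub-network rounds since the last hub round yields $\tau^2\tfrac{(q-1)(2q+1)}{6}$ and $\tfrac{(\tau-1)(2\tau+1)}{6}$ with the bounded prefactor $\tfrac{2-\zeta}{1-\zeta}$, and counting the completed hub rounds up to iteration $K$ yields the $q^3\tau^3(\tfrac{1}{q\tau}-\tfrac{1}{K})$ factor. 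The obstacle here is bookkeeping: the three averaging operators are not symmetric in the standard inner product, the weights $\bm a_i$ and probabilities $\bm p_i$ are heterogeneous, and one must cleanly separate within- from between-sub-network components at every stage of the recursion.

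\emph{Assembling and telescoping.} I would substitute the consensus bound into the one-step inequality, sum over $k = 1,\dots,K$, and divide by $\eta K$. The residual $\Vert\nabla F(\bm x_k^{(i)})\Vert_2^2$ terms -- from the variance bound, the three-point identity, and the feedback of the consensus error with gain of order $L^2\eta^2 q^2\tau^2\Gamma$ -- collect with a coefficient proportional to $(4\bm p_i - \bm p_i^2 - 2) - \eta L(\bm a_i\bm p_i(\beta+1) - \bm a_i\bm p_i^2 + \bm p_i^2) - 8L^2\eta^2 q^2\tau^2\Gamma$; condition (\ref{probreq.eq}) makes this nonnegative, so those terms drop, and also pins the coefficient of $\Vert\nabla F(\bm u_k)\Vert_2^2$ at $\tfrac{\eta}{2}$ (up to the $\Ppos$ scaling carried on the noise terms). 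What remains is exactly the right-hand side of (\ref{con.thm}); letting $K\to\infty$ and using $q^3\tau^3(\tfrac{1}{q\tau}-\tfrac{1}{K})\to q^2\tau^2$ gives the stated limit.
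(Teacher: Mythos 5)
Your proposal is correct and follows essentially the same route as the paper's proof: a descent lemma for the weighted average $\bm u_k$ (handling the Bernoulli-masked gradients via their mean $\bm p_i\nabla F(\bm x_k^{(i)})$ and a mean-plus-variance split of $\mathbb{E}\Vert\G_k\bm a\Vert^2$), combined with a consensus bound obtained by unrolling $\X_k(\I-\A)$ through the $\I,\V,\Z$ operators, contracting by $\zeta$ per hub round in the $\bm a$-weighted norm, and summing the resulting geometric and arithmetic series. The way you attribute each constant in (\ref{con.thm}) to its source, and the role of condition (\ref{probreq.eq}) in absorbing the residual $\Vert\nabla F(\bm x_k^{(i)})\Vert^2$ terms, matches the paper's argument.
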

The proof of Theorem~\ref{main.thm} is provided in the appendix.
The first term in (\ref{con.thm}) is the same as in 
centralized SGD~\citep{bottou2018optimization}. As $K \rightarrow \infty$,
this term goes to zero. The second term is similar to centralized
SGD as well. 
%If the gradient of $F$ has a large Lipschitz constant, 
%the step size in the algorithm is
%large, and/or 
If the stochastic gradients have high variance, 
then the convergence
error will be larger. This term is also related to the convergence
error in distributed SGD~\citep{bottou2018optimization}, which is equivalent 
to MLL-SGD when there is one sub-network,
$q=\tau=1$, $\bm a_i=1/N$, and $\bm p_i=1$ for all $i$.
MLL-SGD has a dependence on 
the probabilities of gradient steps and worker weights,
replacing the $\frac{1}{N}$ in the equivalent
term in distributed SGD.

The third and fourth terms in (\ref{con.thm}) are additive errors that 
depend on the topology of the hub network.
The value of $\zeta$ is given by 
the second largest eigenvalue of $\Hm$, by magnitude, which is
 an indication of the sparsity of the hub network.
When worker weights are uniform, a fully connected hub graph $G$ will 
have $\zeta=0$, while a sparse $G$ will typically have $\zeta$ close to $1$. 
%Also, for large hub networks, many topologies will have a $\zeta$ close to $1$.
%The convergence error increases as $\zeta$ grows larger. 
It is interesting to note that $\zeta$ only depends on $\Hm$, and not
$\Z$ or $\V$, meaning the convergence error does not depend on how 
worker weights are distributed within sub-networks.
% as $\zeta$ is only affected by the hub network topology.

We also note the third and fourth terms depend on $\Ppos$, the weighted average probability
of the workers. 
The convergence error increases
as the average worker operating rate increases. This relation is expected
as more local iterations will increase convergence error~\citep{wang2018cooperative}.
It is interesting to note that the convergence error does not depend on 
the distribution of $\bm p$, meaning that a skewed and uniform distribution with
the same average probability would have the same convergence error.
We observe that the condition on $\eta$ in (\ref{probreq.eq})
cannot always be satisfied
given certain probabilities. Specifically, when there exists a 
$\bm p_i \leq 2 - \sqrt{2} \approx 0.59$, then the left-hand side will be non-positive,
and the inequality can no longer be satisfied. 
%Under our current assumptions, we cannot
%guarantee convergence in this situation; 
%this would require a bounded gradient assumption.
Although this may be a conservative bound, intuitively, 
when $\bm p_i$'s are below this threshold, the algorithm
may not make sufficient progress in each time step to guarantee 
convergence.
%We explore how probabilities less than $2 - \sqrt{2}$ affect convergence
%in practice in Section~\ref{exp.sec}.
%We explore how a skew in probabilities affects convergence rate and error
%in practice in Section~\ref{exp.sec}. 

The third and fourth terms also grow with $q$ and $\tau$, 
the number of local iterations per hub network averaging and 
sub-network averaging steps, respectively.
%As expected, the larger $q$ or $\tau$, the larger the convergence error. 
The longer workers train locally without reconciling their models, the more 
their models will diverge, leading to larger convergence error. 
We can see that $\tau$ plays a slightly larger role in convergence
error than $q$. For a given $q \cdot \tau$, meaning a given 
number of time steps between hub averaging steps,
a larger $\tau$ leads to higher convergence error
than a larger $q$ would. Thus, there is a slight penalty to 
performing more local iterations between sub-network averaging steps.
We explore this more in Section~\ref{exp.sec}.

We note that when setting $a_i=1/N$ and $p_i=1$ for all workers $i$, and 
    setting $q=1$, MLL-SGD reduces to Cooperative SGD. 
    However, the bound in Theorem~\ref{main.thm}
    differs from that of Cooperative SGD. Specifically, Theorem~\ref{main.thm} 
    has error terms dependent on $\tau^2$ as opposed to $\tau$ 
    in Cooperative SGD. 
This discrepancy is due to accommodating all possible values of $p_i$.
More details can be found in Appendix~\ref{coop.sec}.  

    In the following corollary, we analyze the convergence rate of 
    Algorithm~\ref{mll-sgd.alg} when $\eta=\frac{1}{L\sqrt{K}}$.
    \begin{corollary} \label{main.cor}
    Let $\eta = \frac{1}{L\sqrt{K}}$ and let $q^2\tau^2 \leq \sqrt{K}$. 
    If $q\tau < K$, then 
\begin{align} 
    \mathbb{E}\Bigg[ \frac{1}{K} \sum_{k=1}^K \Vert\nabla F(\bm u_k)\Vert_2^2 \Bigg]
    &\leq O\left(\frac{L}{\sqrt{K}}\right) \left(F(\bm x_1) - F_{inf}]\right)
    + O\left(\frac{\sigma^2}{\sqrt{K}}\right) 
\end{align} 
\end{corollary}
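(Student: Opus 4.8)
The corollary is a direct consequence of Theorem~\ref{main.thm}, so the plan is to substitute $\eta = \frac{1}{L\sqrt{K}}$ into the bound~(\ref{con.thm}) and track how each of the four terms scales with $K$. First I would verify that the hypothesis on $\eta$ in~(\ref{probreq.eq}) is eventually satisfied: the right-hand side is $\eta L(\ldots) + 8L^2\eta^2 q^2\tau^2\Gamma = O(1/\sqrt{K}) + O(q^2\tau^2/K)$, and since $q^2\tau^2 \leq \sqrt{K}$ the second piece is $O(1/\sqrt{K})$, so for $K$ large enough the right-hand side is dominated by the (positive, $K$-independent) left-hand side $(4\bm p_i - \bm p_i^2 - 2)$ — here one implicitly assumes each $\bm p_i > 2-\sqrt 2$ so the left side is positive, consistent with the discussion following the theorem. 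Then I would bound the four terms of~(\ref{con.thm}) in turn.

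\textbf{Term-by-term scaling.} Term one is $\frac{2(F(\bm x_1) - F_{inf})}{\eta K} = \frac{2L(F(\bm x_1)-F_{inf})}{\sqrt K}$, giving the $O\!\left(\frac{L}{\sqrt K}\right)(F(\bm x_1)-F_{inf})$ contribution. Term two is $\sigma^2\eta L \sum_i \bm a_i^2\bm p_i = \frac{\sigma^2}{\sqrt K}\sum_i \bm a_i^2 \bm p_i \leq \frac{\sigma^2}{\sqrt K}$ since $\sum_i \bm a_i^2\bm p_i \leq \sum_i \bm a_i \leq 1$; this is $O(\sigma^2/\sqrt K)$. For term three, $4L^2\eta^2\sigma^2 q^3\tau^3\left(\frac1{q\tau} - \frac1K\right)(\cdots)\Ppos \leq 4L^2\eta^2\sigma^2 q^2\tau^2(\cdots)\Ppos = \frac{4\sigma^2 q^2\tau^2}{K}(\cdots)\Ppos$; using $q^2\tau^2 \leq \sqrt K$ and $\Ppos \leq 1$ and treating $\zeta$ (hence the bracketed constant) as fixed, this is $O(\sigma^2/\sqrt K)$. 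Term four is $4L^2\eta^2\sigma^2\left(\frac{2-\zeta}{1-\zeta}\right)\left(\tau^2\frac{(q-1)(2q+1)}{6} + \frac{(\tau-1)(2\tau+1)}{6}\right)\Ppos$; the inner factor is $O(q^2\tau^2)$, so again we get $\frac{O(\sigma^2 q^2\tau^2)}{K} = O(\sigma^2/\sqrt K)$ under $q^2\tau^2 \leq \sqrt K$. Summing the four bounds yields exactly $O\!\left(\frac{L}{\sqrt K}\right)(F(\bm x_1)-F_{inf}) + O\!\left(\frac{\sigma^2}{\sqrt K}\right)$.

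\textbf{Main obstacle.} The calculation itself is routine; the only delicate point is bookkeeping about what is hidden inside the $O(\cdot)$. In particular, the constants $\Gamma$ and $\frac{2-\zeta}{1-\zeta}$ and $\frac{\zeta^2}{1-\zeta^2} + \frac{2\zeta}{1-\zeta} + \frac{1}{(1-\zeta)^2}$ blow up as $\zeta \to 1$, so the $O$-notation is implicitly treating the hub-network topology (and hence $\zeta$) as a fixed problem parameter, not part of the asymptotics; I would state this explicitly. Likewise one should note that $L$ is kept inside the first $O$ but absorbed in the others, and that the constraint $q^2\tau^2 \leq \sqrt K$ is precisely what is needed to prevent the $\tau^2$-style error terms from dominating. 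Once these conventions are fixed, the proof is a half-page of inequalities chaining the above four estimates.
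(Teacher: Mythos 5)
Your proof is correct and follows exactly the route the paper intends: substitute $\eta = \frac{1}{L\sqrt{K}}$ into the bound of Theorem~\ref{main.thm} and use $q^2\tau^2 \leq \sqrt{K}$ to show the third and fourth terms are $O(\sigma^2/\sqrt{K})$ (the paper provides no separate proof of the corollary, so direct substitution is the whole argument). Your added observations --- that the step-size condition~(\ref{probreq.eq}) is satisfiable for large $K$ only when each $\bm p_i > 2-\sqrt{2}$, and that $\zeta$-dependent constants are absorbed into the $O(\cdot)$ --- are accurate and, if anything, more careful than the paper's presentation.
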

Under the conditions given in Corollary~\ref{main.cor}, MLL-SGD achieves the same
asymptotic convergence rate as Local SGD 
and HL-SGD. 

\begin{figure}[t]
    \begin{subfigure}{0.5\textwidth}
        \centering
        \includegraphics[width=0.65\textwidth]{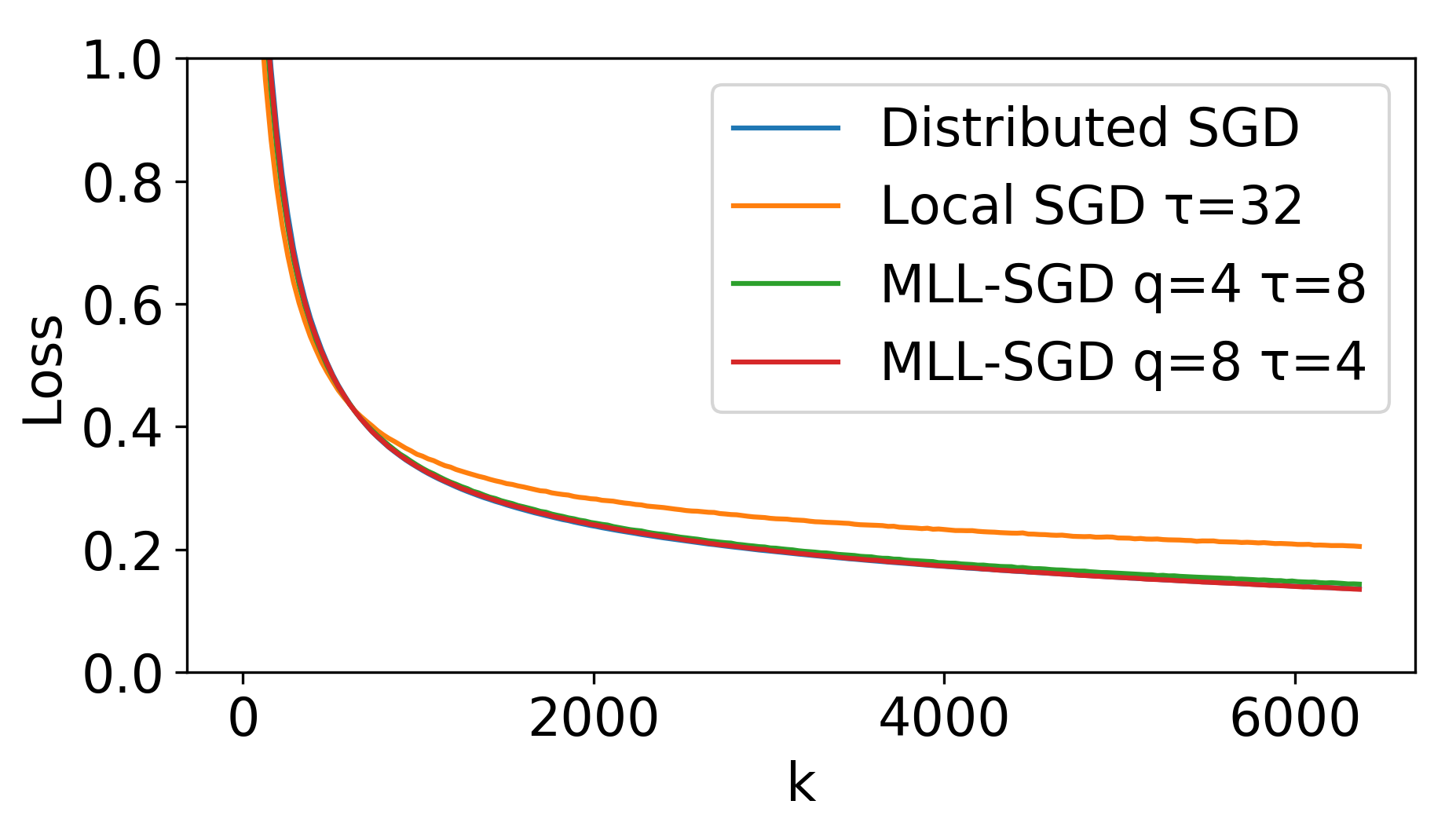}
        \caption{Training loss of CNN trained on EMNIST.}
        \label{l_qtau1.fig}
    \end{subfigure}
    \begin{subfigure}{0.5\textwidth}
        \centering
        \includegraphics[width=0.65\textwidth]{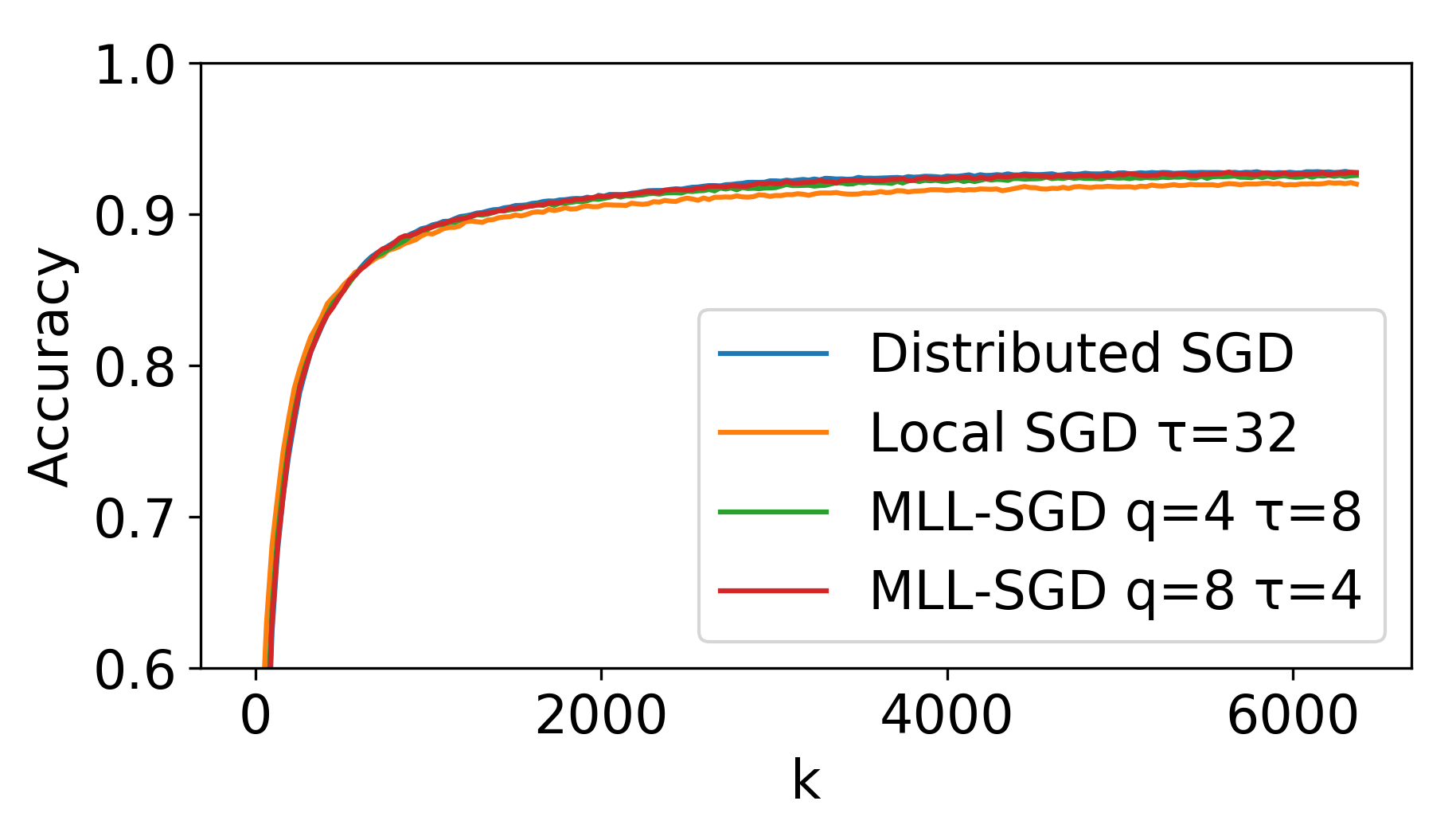}
        \caption{Test accuracy of CNN trained on EMNIST.}
        \label{a_qtau1.fig}
    \end{subfigure}
    \begin{subfigure}{0.5\textwidth}
        \centering
        \includegraphics[width=0.65\textwidth]{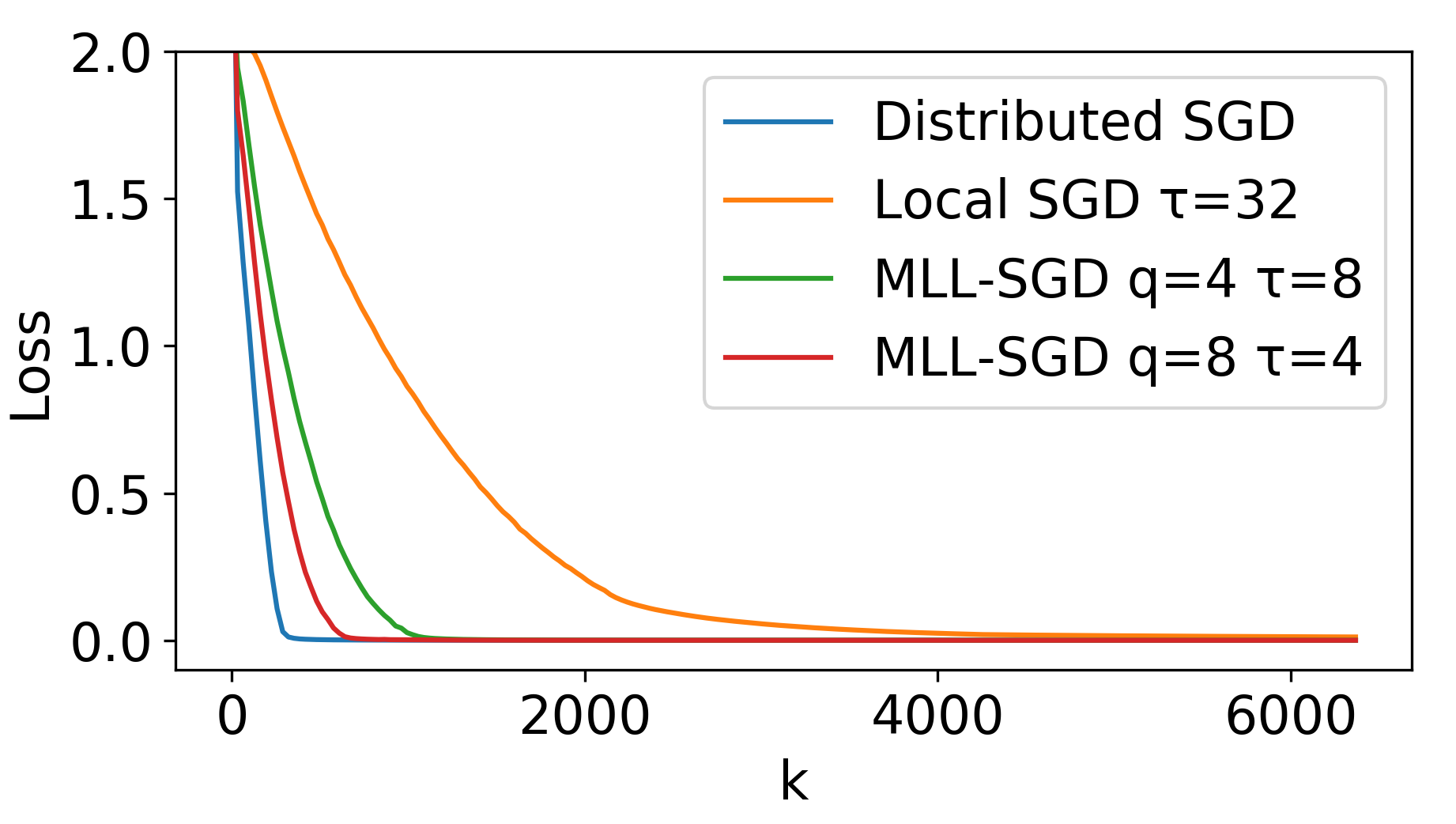}
        \caption{Training loss of ResNet-18 trained on CIFAR-10.}
        \label{l_qtau2.fig}
    \end{subfigure}
    \begin{subfigure}{0.5\textwidth}
        \centering
        \includegraphics[width=0.65\textwidth]{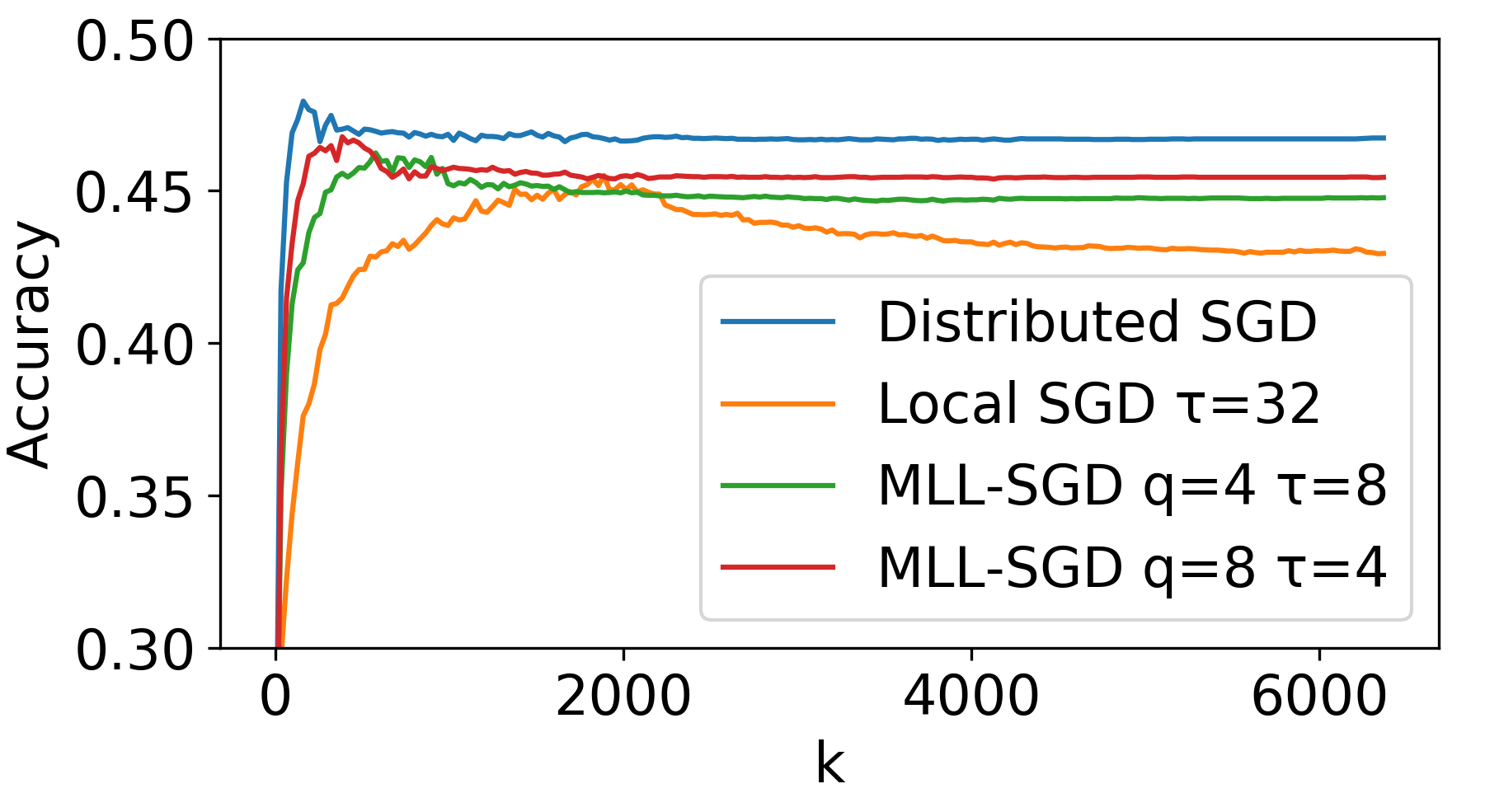}
        \caption{Test accuracy of ResNet-18 trained on CIFAR-10.}
        \label{a_qtau2.fig}
    \end{subfigure}
    \caption{Effect of a hierarchy with different values of $\tau$ and $q$.}
    \label{qtau.fig}
    \vspace{-1em}
\end{figure}

\section{Experiments}\label{exp.sec}
% intro paragraph
In this section, we show the performance of MLL-SGD compared
    to  algorithms that do not account for hierarchy and heterogeneous worker rates.  
    We also explore the impact of the different algorithm parameters
    that show up in Theorem~\ref{main.thm}.
    
We use the EMNIST~\citep{7966217} and CIFAR-10~\citep{krizhevsky2009learning} 
datasets. For all experiments, we provide results
for training a simple Convolutional Neural Network (CNN)
on EMNIST and training ResNet-18 on CIFAR-10.
The CNN has two convolutional layers and two fully connected layers.
We train the CNN with a step size of $0.01$.
For ResNet, we use a standard approach of changing the step size from
$0.1$ to $0.01$ to $0.001$ over the course of training~\citep{he2016deep}. 
We conduct experiments using Pytorch 1.4.0 and Python 3.
%We use a Convolutional Neural Network (CNN) 
%with two convolutional
%layers and two fully connected layers and train it on 
%the EMNIST~\citep{7966217} dataset with a step size of $0.01$.
%We also train ResNet-18 on the CIFAR-10~\citep{krizhevsky2009learning} 
%dataset with a standard
%approach of changing the step size from
%$0.1$ to $0.01$ to $0.001$ over the course of training.
%PYTORCH HERE

% algorithms that we compare to (and why)
We compare 
MLL-SGD with Distributed SGD, Local SGD, and HL-SGD. 
Distributed SGD is equivalent to MLL-SGD when there is one hub, 
$q = \tau = 1$, and $a_i = 1/N$ and $p_i = 1$ 
for all $i$, which means Distributed SGD averages all 
worker models at every iteration. Thus, we use Distributed 
SGD as a baseline for convergence error and accuracy in some experiments.
Local SGD is equivalent to MLL-SGD when 
$\bm a_i=1/N$ and $\bm p_i=1$ for all $i$,
when the hub network is fully connected, and $q=1$.
HL-SGD extends Local SGD to allow $q>1$.
For all experiments, we let $\tau=32$ for Local SGD.
We let $q\tau=32$ for all HL-SGD and MLL-SGD variations to be
comparable with Local SGD.
In all experiments, we measure training loss and
test accuracy of the averaged model $\bm u_k$ 
every $32$ iterations.

% first set of experiments?
We first explore the effect of different values of $\tau$ and $q$ in MLL-SGD.
We configure 
a multi-level network with a fully connected hub network
and with $10$ hubs, each with $10$ workers.
We use two configurations for MLL-SGD, one with $\tau=8$ and $q=4$,
and one with $\tau=4$ and $q=8$.
Distributed SGD and Local SGD treat the hubs 
as pass-throughs, and average all workers every
iteration and every $\tau$ iterations respectively.
%For this experiment, each worker is 
%assigned a different sized dataset so that 
%their weights are heterogeneous.
Workers are split into five groups of $20$ workers each.
Each group is assigned a percentage of the full dataset:
$5\%$, $10\%$, $20\%$, $25\%$, and $40\%$.
Workers within a group partition the data evenly.
The workers weights are assigned based on
dataset sizes.
In Figures \ref{l_qtau1.fig} and \ref{l_qtau2.fig} 
we plot the training loss, and in 
Figures \ref{a_qtau1.fig} and \ref{a_qtau2.fig} we plot the test accuracy
for the CNN and ResNet, respectively. 
We observe that as $q$ increases, while keeping $q\tau=32$, MLL-SGD improves
    and approaches the Distributed SGD baseline. 
    Thus, increasing the number of sub-network training 
rounds improves the convergence behavior of MLL-SGD.
The benefit is more pronounced in training ResNet
on CIFAR.

\begin{figure}[t]
    \begin{minipage}{0.5\textwidth}
        \centering
        \captionsetup{width=.9\linewidth,font=footnotesize}
        \includegraphics[width=0.65\textwidth]{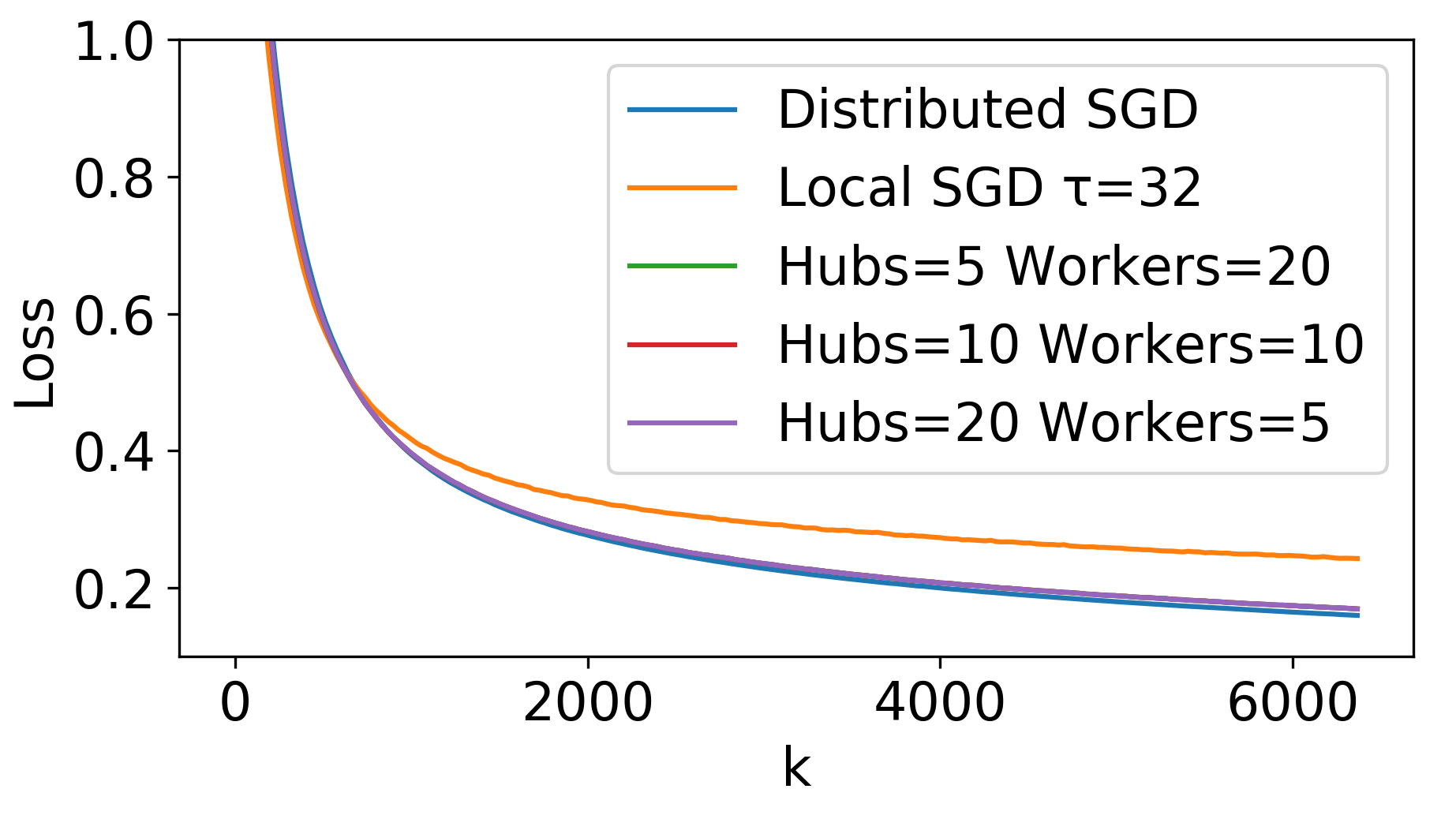}
        \caption{Effect of worker distribution on CNN trained on EMNIST.}
        \label{zetanew1.fig}
    \end{minipage}
    \begin{minipage}{0.5\textwidth}
        \centering
        \captionsetup{width=.9\linewidth,font=footnotesize}
        \includegraphics[width=0.65\textwidth]{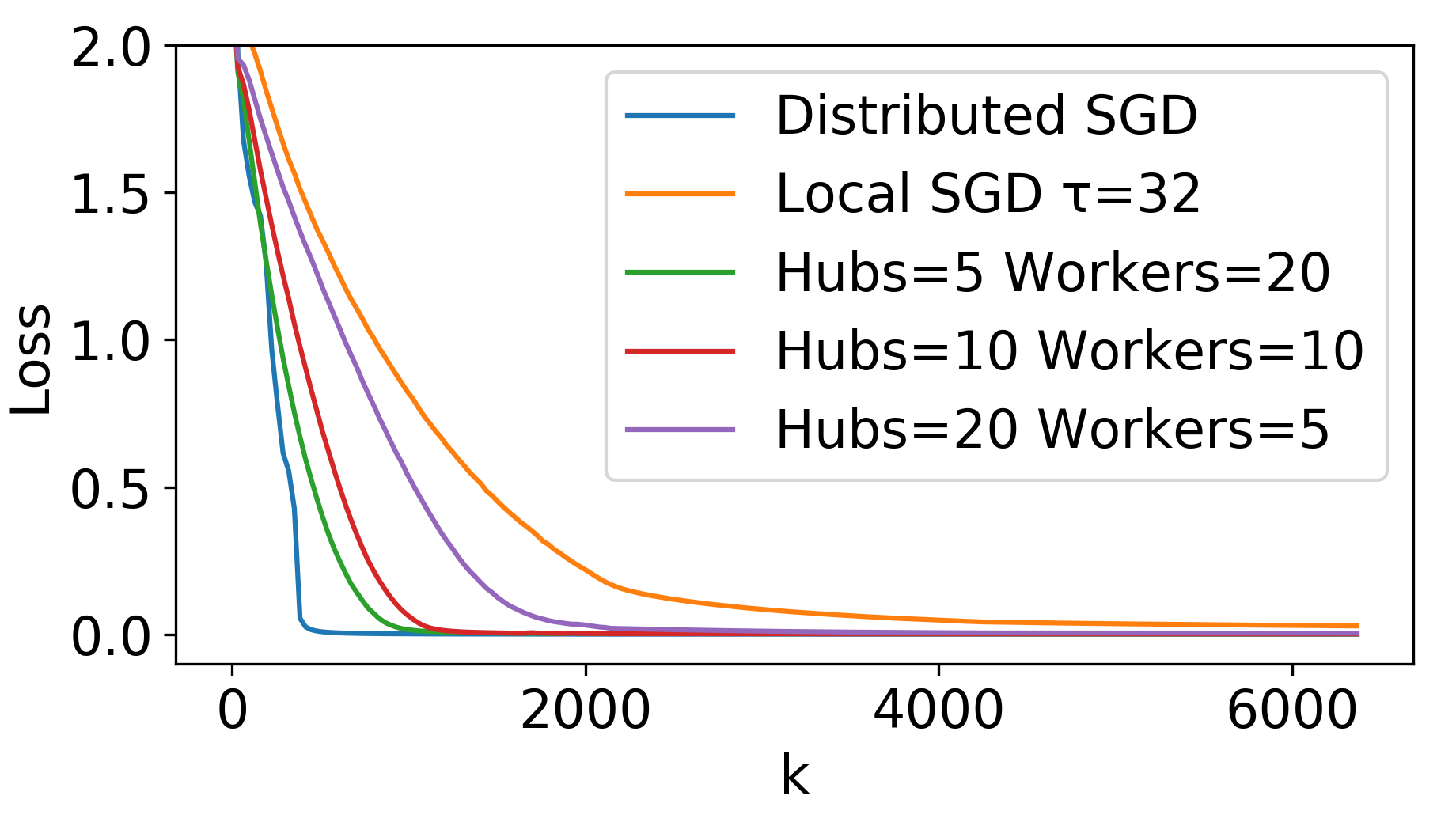}
        \caption{Effect of worker distribution on ResNet-18 trained on CIFAR-10.}
        \label{zetanew2.fig}
    \end{minipage}
    \begin{minipage}{0.5\textwidth}
        \centering
        \captionsetup{width=.9\linewidth,font=footnotesize}
        \includegraphics[width=0.65\textwidth]{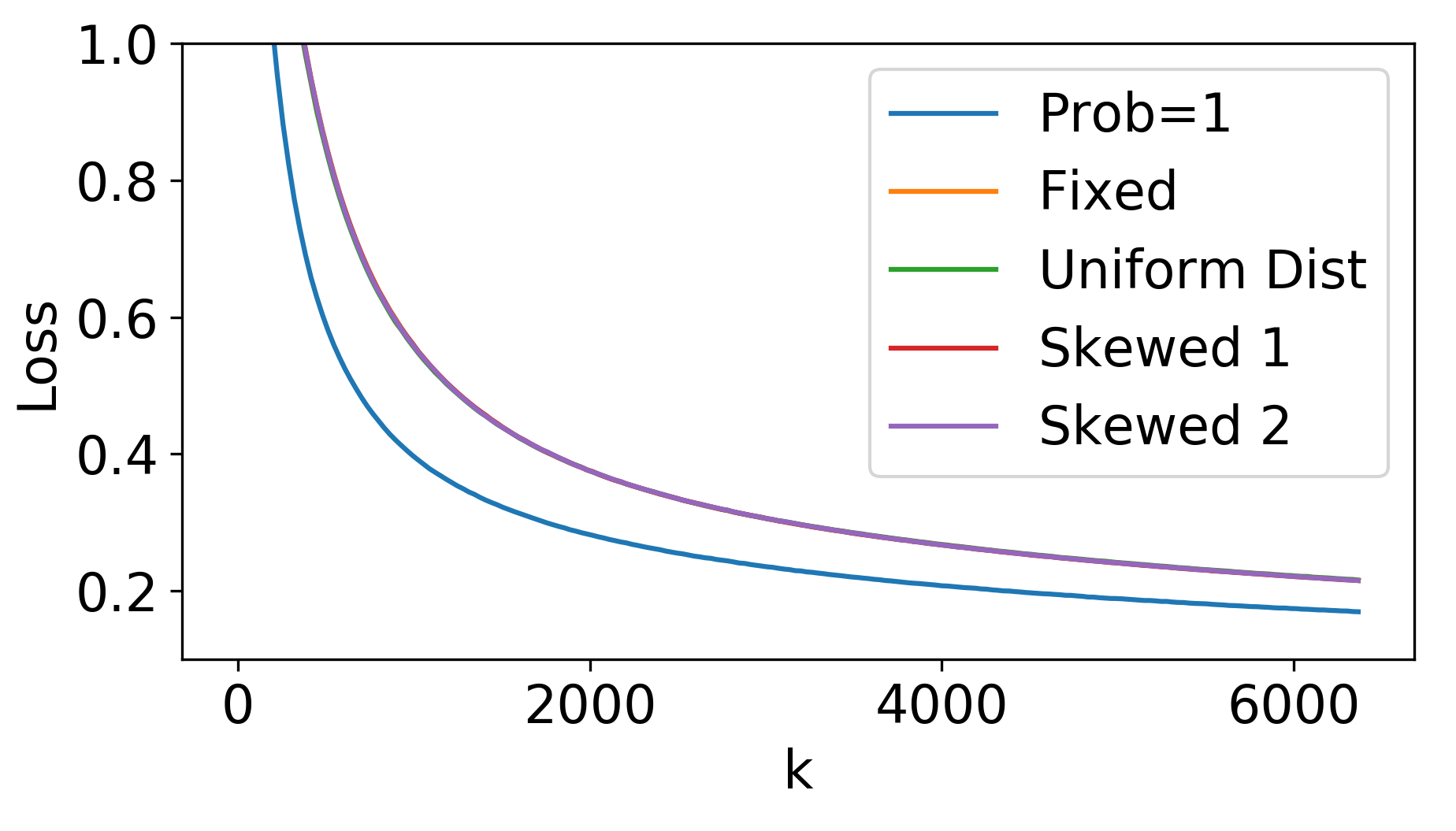}
        \caption{Effect of heterogeneous operating rates on CNN trained on EMNIST.}
        \label{probnew1.fig}
    \end{minipage}
    \begin{minipage}{0.5\textwidth}
        \centering
        \captionsetup{width=.9\linewidth,font=footnotesize}
        \includegraphics[width=0.65\textwidth]{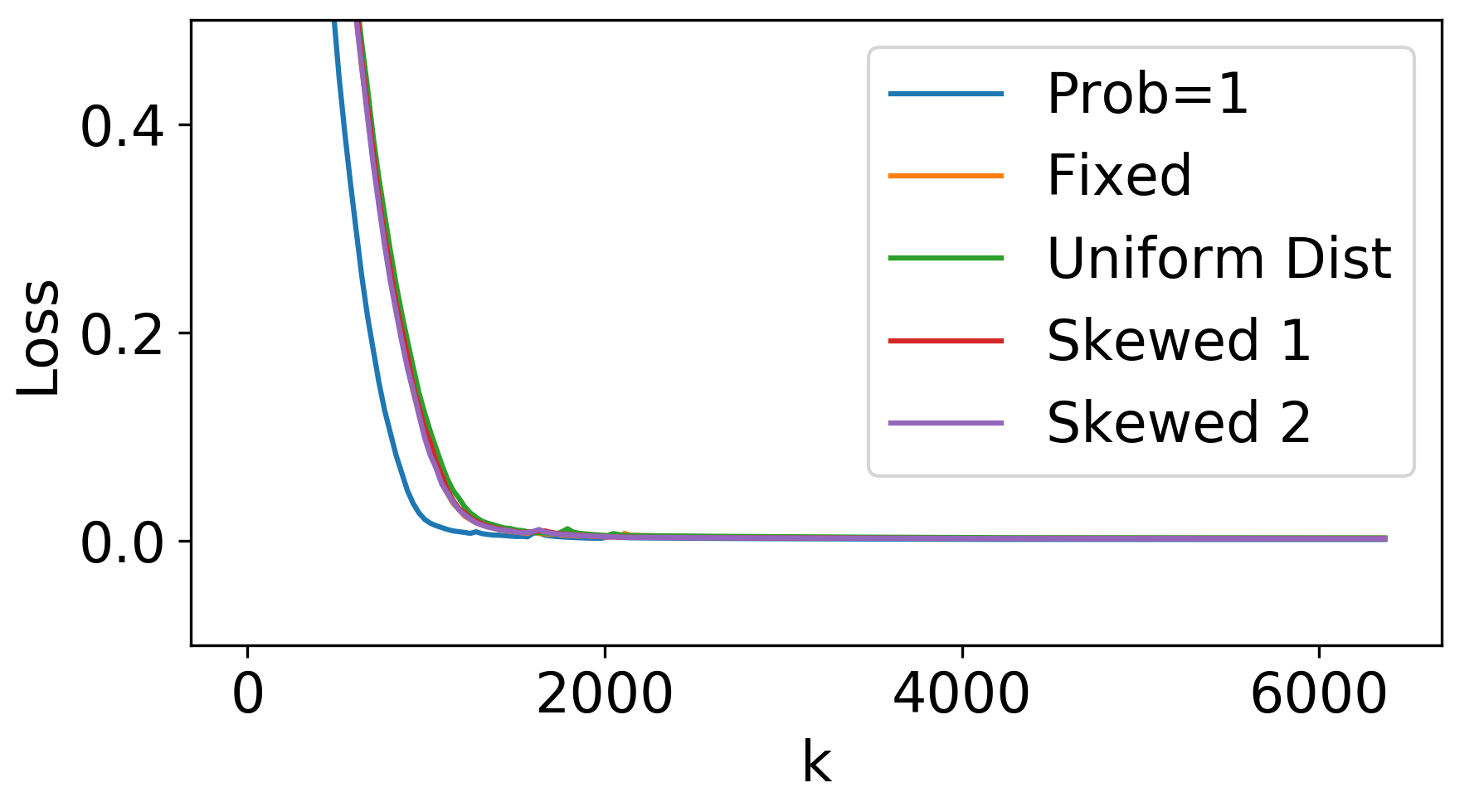}
        \caption{Effect of heterogeneous operating rates on ResNet-18 trained on CIFAR-10.}
        \label{probnew2.fig}
    \end{minipage}
    \vspace{-2em}
\end{figure}

%Exploring the effects of a hierarchy more, we consider 
%a scenario where we are designing a multi-level
%network. If hub connectivity is poor, how will our choice 
%of clustering affect convergence?
We next investigate how the number and sizes of the sub-networks impacts the convergence of MLL-SGD.
From a pool of $100$
workers, we distribute them across $5$, $10$, and $20$ sub-networks. 
The hub network is a path graph, which yields the largest $\zeta$ 
while keeping the network connected. 
This hub network topology the worst-case scenario in terms
of the convergence bound. Note that as the number of hubs
increases, the larger $\zeta$ becomes.
We let $\bm a_i=1/N$ and $\bm p_i = 1$ for all workers $i$.
We set $q=4$ and $\tau=8$.
We also include results using Local SGD with $1$ hub and $100$ workers.
The results of this experiment are shown in Figures \ref{zetanew1.fig}
and \ref{zetanew2.fig}.
In the case of the CNN, the difference in training loss is minimal 
among the MLL-SGD variations. In the case of ResNet
we can see that as the number of hubs increase, the convergence
rate decreases. 
This is in line with Theorem~\ref{main.thm} since an increased number hubs corresponds with an increased $\zeta$.
%Our results worsen as $\zeta$ increases, matching Theorem~\ref{main.thm}.
Interestingly, despite the low hub network connectivity, MLL-SGD
outperforms Local SGD. 
This shows that MLL-SGD still benefits from a hierarchy even when hub
connectivity is sparse.

Next, we explore the impact of different distributions of 
worker operating rates.
According to Theorem~\ref{main.thm}, the average probability
across workers plays a role in the error bound.
To see if this holds in practice, we compare four
different MLL-SGD setups, all of which includes 
a complete hub network, $10$ hubs, each with $10$ 
workers, $\bm a_i=1/N$, and an average probability amongst workers of $0.55$: 
(i) all workers with a $\bm p_i=0.55$ (Fixed); 
(ii) workers in each sub-network with probability ranging
from $0.1$ to $1$ at steps of $0.1$ 
(Uniform Distribution); 
(iii) $90$ workers with $\bm p_i=0.5$
and $10$ workers with $\bm p_i=1$ (Skewed 1); 
(iv) $90$ workers 
with $\bm p_i=0.6$ and $10$ workers with $\bm p_i=0.1$ (Skewed 2).
We include a case where all workers have $\bm p_i=1$ as a baseline (Prob=1).
In Figures \ref{probnew1.fig} and \ref{probnew2.fig} 
we can see that in all cases except the baseline,
the convergence rate is similar in both models. 
This is in line with our theoretical results, since all cases have the same average worker probability.

\begin{figure}[t]
    \begin{subfigure}{0.5\textwidth}
        \centering
        \includegraphics[width=0.65\textwidth]{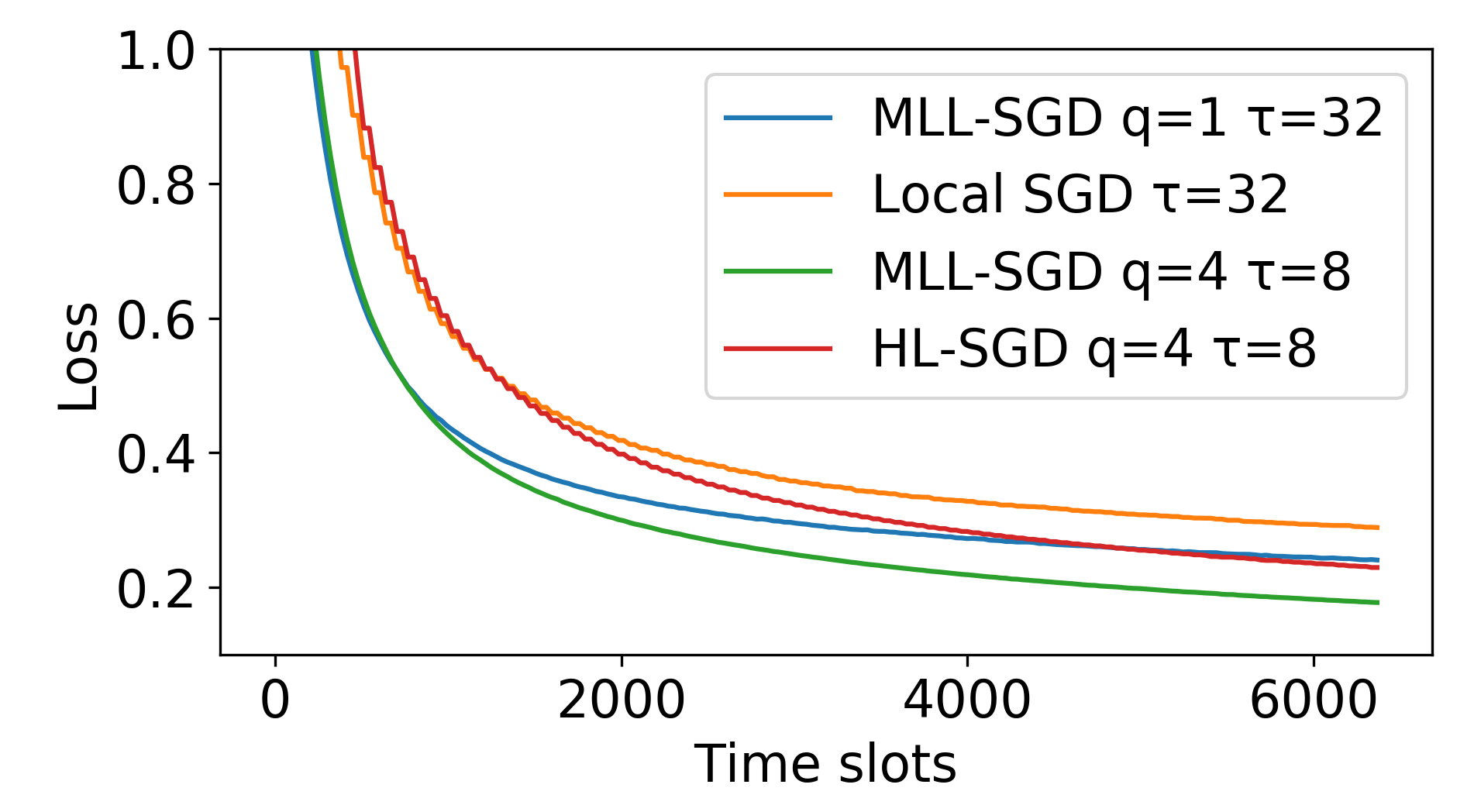}
        \caption{Training loss of CNN with respect to time slots.}
        \label{l_slot1.fig}
    \end{subfigure}
    \begin{subfigure}{0.5\textwidth}
        \centering
        \includegraphics[width=0.65\textwidth]{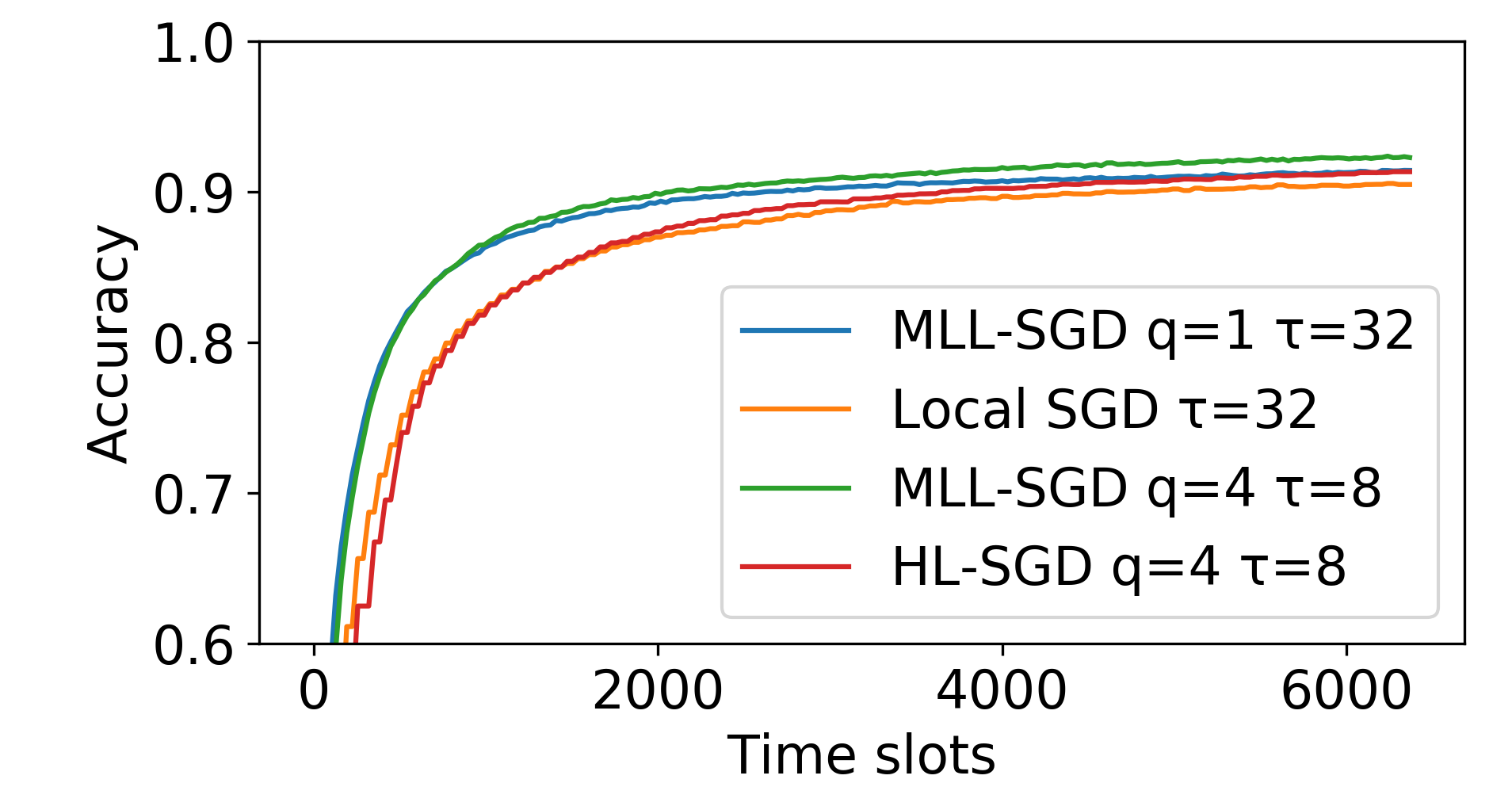}
        \caption{Test accuracy of CNN with respect to time slots.}
        \label{a_slot1.fig}
    \end{subfigure}
    \begin{subfigure}{0.5\textwidth}
        \centering
        \includegraphics[width=0.65\textwidth]{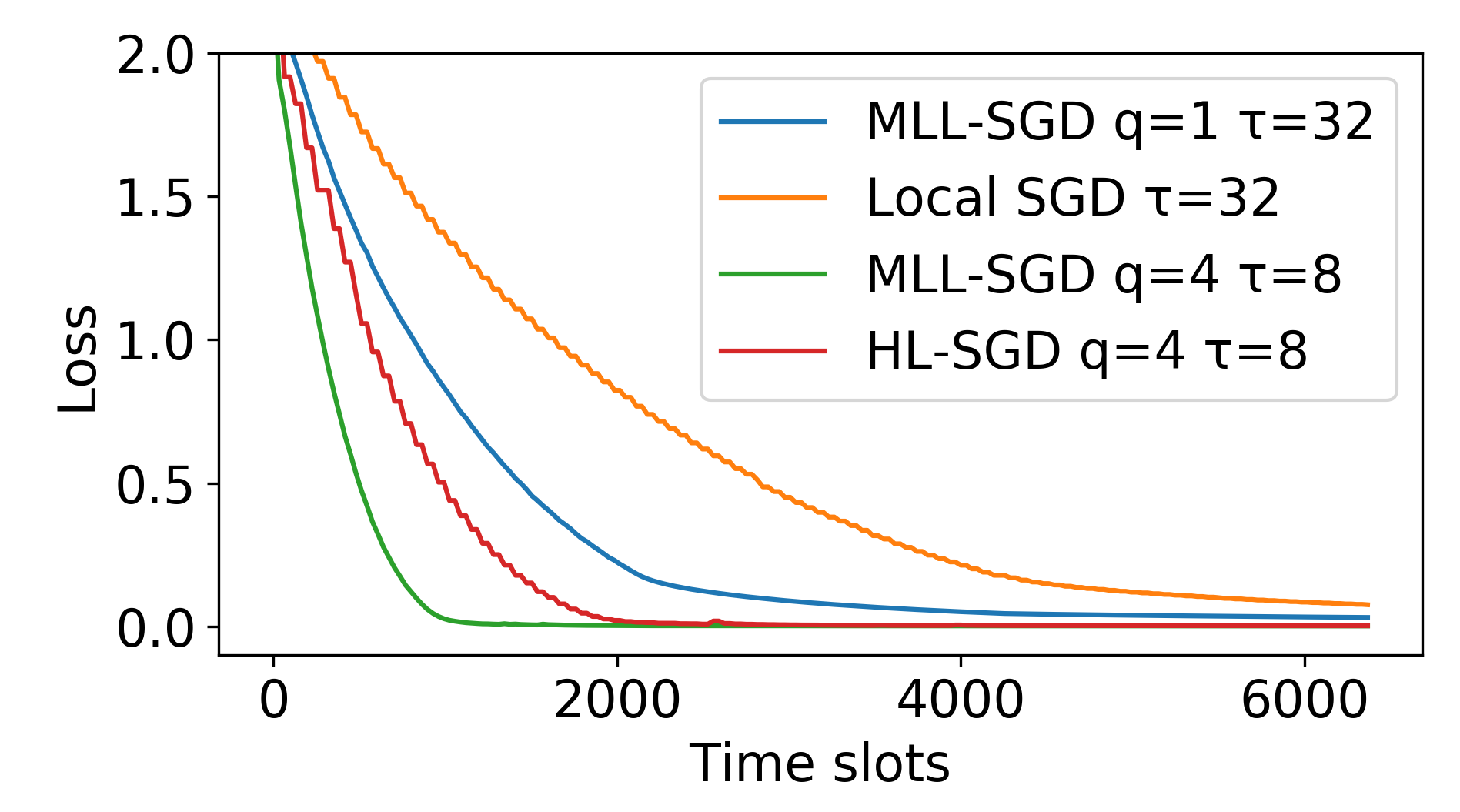}
        \caption{Training loss of ResNet with respect to time slots.}
        \label{l_slot2.fig}
    \end{subfigure}
    \begin{subfigure}{0.5\textwidth}
        \centering
        \includegraphics[width=0.65\textwidth]{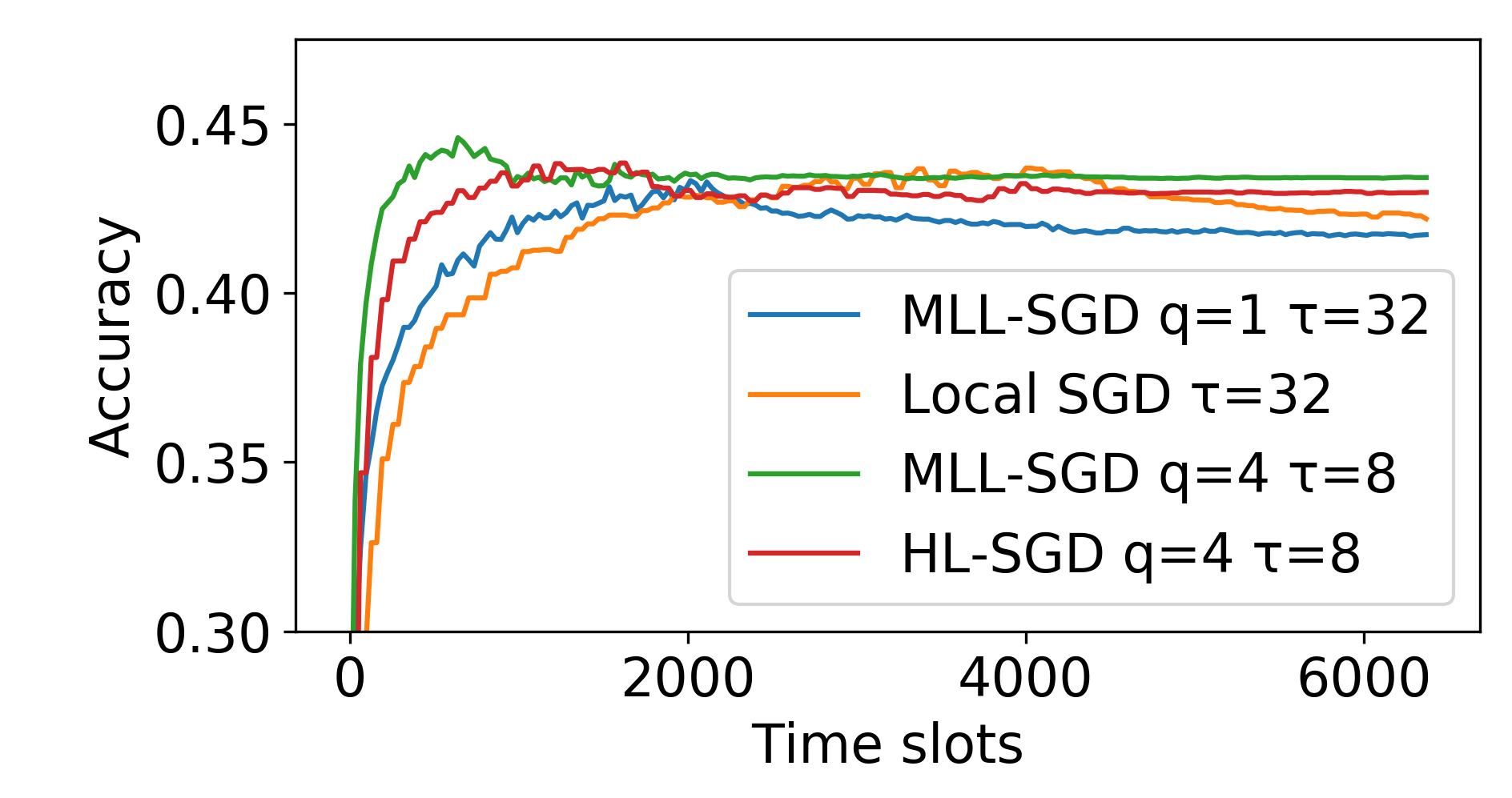}
        \caption{Test accuracy of ResNet with respect to time slots.}
        \label{a_slot2.fig}
    \end{subfigure}
    \caption{Comparing convergence time of Local SGD, HL-SGD, and MLL-SGD.}
    \label{slot.fig}
    \vspace{-1em}
\end{figure}

Finally, we compare the convergence time of MLL-SGD against algorithms
that wait for slower workers: Local SGD and HL-SGD.
We simulate real-time with time slots. 
In every time slot, each worker will take a gradient 
step with a probability $p_i$. 
Note when $p_i = 1$ for a worker $i$, the number of gradient steps 
taken will match the number of time slots $T$. Otherwise, 
the number of gradient steps taken will be $T \cdot p_i$ in expectation. 
MLL-SGD will wait $\tau$ 
time slots before averaging worker models in a sub-network, regardless of the number of gradient 
steps taken, while Local SGD and HL-SGD will wait for all workers 
to take $\tau$ gradient steps. This approach allows us to compare the 
progress of each algorithm over time.
In this experiment, we set $p_i=0.9$ for $90\%$ of workers and $p_i=0.6$
for $10\%$ of the workers. 
As in the previous experiments, we use
a multi-level network with a fully connected hub network
and with $10$ hubs, each with $10$ workers.
We study MLL-SGD
 with two parameter settings, $\tau=32$, $q=1$ and $\tau=8$, $q=4$.
We also include results for Local-SGD and HL-SGD.
By comparing MLL-SGD with $\tau=32$, $q=1$ with Local-SGD, we can evaluate the impact 
of using a local training period based on time rather than a number of worker iterations. 
By comparing MLL-SGD with $\tau=8$, $q=4$ with HL-SGD, 
we can evaluate this impact in a multi-level network.

In Figures \ref{l_slot1.fig} and \ref{l_slot2.fig}, we plot the training loss, and in 
Figures \ref{a_slot1.fig} and \ref{a_slot2.fig},
we plot the test accuracy for the CNN and ResNet, respectively. 
We can see that MLL-SGD with $q=1$ converges more quickly, in both loss and accuracy, than Local SGD, and
that MLL-SGD with $q=4$ converges more quickly than HL-SGD. These trends hold in both the CNN and ResNet models.
The results show that in this experimental setup, waiting for slow workers 
is detrimental to the overall convergence time.

\section{Conclusion}\label{conclusion.sec}
We have introduced MLL-SGD, a variation of Distributed SGD in a 
multi-level network model.
Our algorithm incorporates the heterogeneity of worker devices using a stochastic approach.
We provide theoretical analysis of the algorithm's convergence, and we show how the convergence error
depends on the average worker rate, the hub network topology, and the number of local, sub-network averaging, and hub averaging steps.
Finally, we provide experimental results that illustrate the effectiveness of
MLL-SGD over Local SGD and HL-SGD. 
In future work, we plan to analyze the effects of non-IID data on convergence error.

\subsubsection*{Acknowledgments}
This work is supported by the Rensselaer-IBM AI Research Collaboration (http://airc.rpi.edu), part of the IBM AI Horizons Network (http://ibm.biz/AIHorizons), and by the National Science Foundation under grants CNS 1553340 and CNS 1816307.

\bibliography{references}

\begin{thebibliography}{33}
\providecommand{\natexlab}[1]{#1}
\providecommand{\url}[1]{\texttt{#1}}
\expandafter\ifx\csname urlstyle\endcsname\relax
  \providecommand{\doi}[1]{doi: #1}\else
  \providecommand{\doi}{doi: \begingroup \urlstyle{rm}\Url}\fi

\bibitem[Abad et~al.(2020)Abad, Ozfatura, Gunduz, and
  Ercetin]{abad2020hierarchical}
M.~Abad, E.~Ozfatura, D.~Gunduz, and O.~Ercetin.
\newblock Hierarchical federated learning across heterogeneous cellular
  networks.
\newblock In \emph{ICASSP 2020}, pp.\  8866--8870. IEEE, 2020.

\bibitem[Amari(1993)]{amari1993backpropagation}
S.~Amari.
\newblock Backpropagation and stochastic gradient descent method.
\newblock \emph{Neurocomputing}, 5\penalty0 (4-5):\penalty0 185--196, 1993.

\bibitem[Bekmezci et~al.(2013)Bekmezci, Sahingoz, and
  Temel]{bekmezci2013flying}
I.~Bekmezci, O.~Sahingoz, and {\c{S}}.~Temel.
\newblock Flying ad-hoc networks (fanets): A survey.
\newblock \emph{Ad Hoc Networks}, 11\penalty0 (3):\penalty0 1254--1270, 2013.

\bibitem[Bonomi et~al.(2012)Bonomi, Milito, Zhu, and
  Addepalli]{10.1145/2342509.2342513}
F.~Bonomi, R.~Milito, J.~Zhu, and S.~Addepalli.
\newblock Fog computing and its role in the internet of things.
\newblock In \emph{Proceedings of the First Edition of the MCC Workshop on
  Mobile Cloud Computing}, 2012.

\bibitem[Bottou et~al.(1994)Bottou, Cortes, Denker, Drucker, Guyon, Jackel,
  LeCun, Muller, Sackinger, Simard, et~al.]{bottou1994comparison}
L.~Bottou, C.~Cortes, J.~Denker, H.~Drucker, I.~Guyon, L.~Jackel, Y.~LeCun,
  U.~Muller, E.~Sackinger, P.~Simard, et~al.
\newblock Comparison of classifier methods: a case study in handwritten digit
  recognition.
\newblock In \emph{Proceedings of the 12th IAPR International Conference on
  Pattern Recognition}, volume~2, pp.\  77--82. IEEE, 1994.

\bibitem[Bottou et~al.(2018)Bottou, Curtis, and
  Nocedal]{bottou2018optimization}
L.~Bottou, F.~Curtis, and J.~Nocedal.
\newblock Optimization methods for large-scale machine learning.
\newblock \emph{Siam Review}, 60\penalty0 (2):\penalty0 223--311, 2018.

\bibitem[Cipar et~al.(2013)Cipar, Ho, Kim, Lee, Ganger, Gibson, Keeton, and
  Xing]{cipar2013solving}
J.~Cipar, Q.~Ho, J.~Kim, S.~Lee, G.~Ganger, G.~Gibson, K.~Keeton, and E.~Xing.
\newblock Solving the straggler problem with bounded staleness.
\newblock In \emph{14th Workshop on Hot Topics in Operating Systems}, 2013.

\bibitem[{Cohen} et~al.(2017){Cohen}, {Afshar}, {Tapson}, and {van
  Schaik}]{7966217}
G.~{Cohen}, S.~{Afshar}, J.~{Tapson}, and A.~{van Schaik}.
\newblock Emnist: Extending mnist to handwritten letters.
\newblock In \emph{2017 International Joint Conference on Neural Networks},
  pp.\  2921--2926, 2017.

\bibitem[Dean et~al.(2012)Dean, Corrado, Monga, Chen, Devin, Mao, Ranzato,
  Senior, Tucker, Yang, et~al.]{dean2012large}
J.~Dean, G.~Corrado, R.~Monga, K.~Chen, M.~Devin, M.~Mao, M.~Ranzato,
  A.~Senior, P.~Tucker, K.~Yang, et~al.
\newblock Large scale distributed deep networks.
\newblock In \emph{Advances in neural information processing systems}, pp.\
  1223--1231, 2012.

\bibitem[He et~al.(2016)He, Zhang, Ren, and Sun]{he2016deep}
K.~He, X.~Zhang, S.~Ren, and J.~Sun.
\newblock Deep residual learning for image recognition.
\newblock In \emph{CVPR 2016}, pp.\  770--778, 2016.

\bibitem[Ho et~al.(2013)Ho, Cipar, Cui, Lee, Kim, Gibbons, Gibson, Ganger, and
  Xing]{ho2013more}
Q.~Ho, J.~Cipar, H.~Cui, S.~Lee, J.~Kim, P.~Gibbons, G.~Gibson, G.~Ganger, and
  E.~Xing.
\newblock More effective distributed ml via a stale synchronous parallel
  parameter server.
\newblock In \emph{Advances in neural information processing systems}, pp.\
  1223--1231, 2013.

\bibitem[Jiang et~al.(2019)Jiang, Ye, Yang, Zhu, Ma, Xie, and
  Jin]{jiang2019novel}
W.~Jiang, G.~Ye, L.~Yang, J.~Zhu, Y.~Ma, X.~Xie, and H.~Jin.
\newblock A novel stochastic gradient descent algorithm based on grouping over
  heterogeneous cluster systems for distributed deep learning.
\newblock In \emph{CCGRID 2019}, pp.\  391--398. IEEE, 2019.

\bibitem[Jin et~al.(2016)Jin, Yuan, Iandola, and Keutzer]{jin2016scale}
P.~Jin, Q.~Yuan, F.~Iandola, and K.~Keutzer.
\newblock How to scale distributed deep learning?
\newblock \emph{arXiv preprint arXiv:1611.04581}, 2016.

\bibitem[Koloskova et~al.(2020)Koloskova, Loizou, Boreiri, Jaggi, and
  Stich]{koloskova2020unified}
A.~Koloskova, N.~Loizou, S.~Boreiri, M.~Jaggi, and S.~Stich.
\newblock A unified theory of decentralized sgd with changing topology and
  local updates.
\newblock \emph{arXiv preprint arXiv:2003.10422}, 2020.

\bibitem[Krizhevsky et~al.(2009)]{krizhevsky2009learning}
A.~Krizhevsky et~al.
\newblock Learning multiple layers of features from tiny images.
\newblock 2009.

\bibitem[Laboratory(2017)]{NREL2017}
National Renewable~Energy Laboratory.
\newblock Demonstrating distributed grid-edge control hierarchy.
\newblock \url{https://www.nrel.gov/docs/fy17osti/67784.pdf}, 2017.
\newblock Accessed: 2020-07-24.

\bibitem[Li et~al.(2019)Li, Huang, Yang, Wang, and Zhang]{li2019convergence}
X.~Li, K.~Huang, W.~Yang, S.~Wang, and Z.~Zhang.
\newblock On the convergence of fedavg on non-iid data.
\newblock \emph{arXiv preprint arXiv:1907.02189}, 2019.

\bibitem[Lian et~al.(2017)Lian, Zhang, Zhang, and Liu]{lian2017asynchronous}
X.~Lian, W.~Zhang, C.~Zhang, and J.~Liu.
\newblock Asynchronous decentralized parallel stochastic gradient descent.
\newblock \emph{arXiv preprint arXiv:1710.06952}, 2017.

\bibitem[Lin et~al.(2018)Lin, Stich, Patel, and Jaggi]{lin2018don}
T.~Lin, S.~Stich, K.~Patel, and M.~Jaggi.
\newblock Don't use large mini-batches, use local sgd.
\newblock \emph{arXiv preprint arXiv:1808.07217}, 2018.

\bibitem[Liu et~al.(2020)Liu, Zhang, Song, and Letaief]{liu2020client}
L.~Liu, J.~Zhang, S.~Song, and K.~Letaief.
\newblock Client-edge-cloud hierarchical federated learning.
\newblock In \emph{ICC 2020}, pp.\  1--6. IEEE, 2020.

\bibitem[McMahan et~al.(2017)McMahan, Moore, Ramage, Hampson, and
  y~Arcas]{pmlr-v54-mcmahan17a}
B.~McMahan, E.~Moore, D.~Ramage, S.~Hampson, and B.~Aguera y~Arcas.
\newblock Communication-efficient learning of deep networks from decentralized
  data.
\newblock In \emph{Proc. of the 20th Intl. Conf. on Artificial Intelligence and
  Statistics}, 2017.

\bibitem[Moritz et~al.(2016)Moritz, Nishihara, Stoica, and
  Jordan]{moritz2016sparknet}
P.~Moritz, R.~Nishihara, I.~Stoica, and M.~Jordan.
\newblock Sparknet: Training deep networks in spark.
\newblock \emph{4th International Conference on Learning Representations},
  2016.

\bibitem[Rotaru \& N{\"a}geli(2004)Rotaru and N{\"a}geli]{rotaru2004dynamic}
T.~Rotaru and H.~N{\"a}geli.
\newblock Dynamic load balancing by diffusion in heterogeneous systems.
\newblock \emph{Journal of Parallel and Distributed Computing}, 64\penalty0
  (4):\penalty0 481--497, 2004.

\bibitem[Satyanarayanan(2017)]{satyanarayanan2017emergence}
M.~Satyanarayanan.
\newblock The emergence of edge computing.
\newblock \emph{Computer}, 50\penalty0 (1):\penalty0 30--39, 2017.

\bibitem[Stich(2019)]{stich2018local}
S.~Stich.
\newblock Local {SGD} converges fast and communicates little.
\newblock In \emph{International Conference on Learning Representations}, 2019.

\bibitem[Tsitsiklis et~al.(1986)Tsitsiklis, Bertsekas, and
  Athans]{tsitsiklis1986distributed}
J.~Tsitsiklis, D.~Bertsekas, and M.~Athans.
\newblock Distributed asynchronous deterministic and stochastic gradient
  optimization algorithms.
\newblock \emph{IEEE transactions on automatic control}, 31\penalty0
  (9):\penalty0 803--812, 1986.

\bibitem[Wang \& Joshi(2018)Wang and Joshi]{wang2018cooperative}
J.~Wang and G.~Joshi.
\newblock Cooperative sgd: A unified framework for the design and analysis of
  communication-efficient sgd algorithms.
\newblock \emph{arXiv preprint arXiv:1808.07576}, 2018.

\bibitem[Wang et~al.(2019{\natexlab{a}})Wang, Sahu, Yang, Joshi, and
  Kar]{wang2019matcha}
J.~Wang, A.~K. Sahu, Z.~Yang, G.~Joshi, and S.~Kar.
\newblock Matcha: Speeding up decentralized sgd via matching decomposition
  sampling.
\newblock \emph{arXiv preprint arXiv:1905.09435}, 2019{\natexlab{a}}.

\bibitem[Wang et~al.(2019{\natexlab{b}})Wang, Tuor, Salonidis, Leung, Makaya,
  He, and Chan]{wang2019adaptive}
S.~Wang, T.~Tuor, T.~Salonidis, K.~Leung, C.~Makaya, T.~He, and K.~Chan.
\newblock Adaptive federated learning in resource constrained edge computing
  systems.
\newblock \emph{IEEE Journal on Selected Areas in Communications},
  2019{\natexlab{b}}.

\bibitem[Yuan et~al.(2016)Yuan, Ling, and Yin]{yuan2016convergence}
K.~Yuan, Q.~Ling, and W.~Yin.
\newblock On the convergence of decentralized gradient descent.
\newblock \emph{SIAM Journal on Optimization}, 26\penalty0 (3):\penalty0
  1835--1854, 2016.

\bibitem[Zhang et~al.(2016)Zhang, De~Sa, Mitliagkas, and
  R{\'e}]{zhang2016parallel}
J.~Zhang, C.~De~Sa, I.~Mitliagkas, and C.~R{\'e}.
\newblock Parallel sgd: When does averaging help?
\newblock \emph{arXiv preprint arXiv:1606.07365}, 2016.

\bibitem[Zhou \& Cong(2019)Zhou and Cong]{zhou2019distributed}
F.~Zhou and G.~Cong.
\newblock A distributed hierarchical sgd algorithm with sparse global
  reduction.
\newblock \emph{arXiv preprint arXiv:1903.05133}, 2019.

\bibitem[Zinkevich et~al.(2010)Zinkevich, Weimer, Li, and
  Smola]{zinkevich2010parallelized}
M.~Zinkevich, M.~Weimer, L.~Li, and A.~Smola.
\newblock Parallelized stochastic gradient descent.
\newblock \emph{Advances in Neural Information Processing Systems}, 2010.

\end{thebibliography}
\bibliographystyle{iclr2021_conference}

\appendix

\section{Code Repository}
The code used in our experiments can be found at: https://github.com/rpi-nsl/MLL-SGD.
This code simulates a multi-level network with heterogeneous
workers, and trains a model using MLL-SGD. 

\section{Additional Experiments}
\begin{figure}[t]
    \begin{subfigure}{0.5\textwidth}
        \centering
        \captionsetup{width=.9\linewidth,font=footnotesize}
        \includegraphics[width=0.75\textwidth]{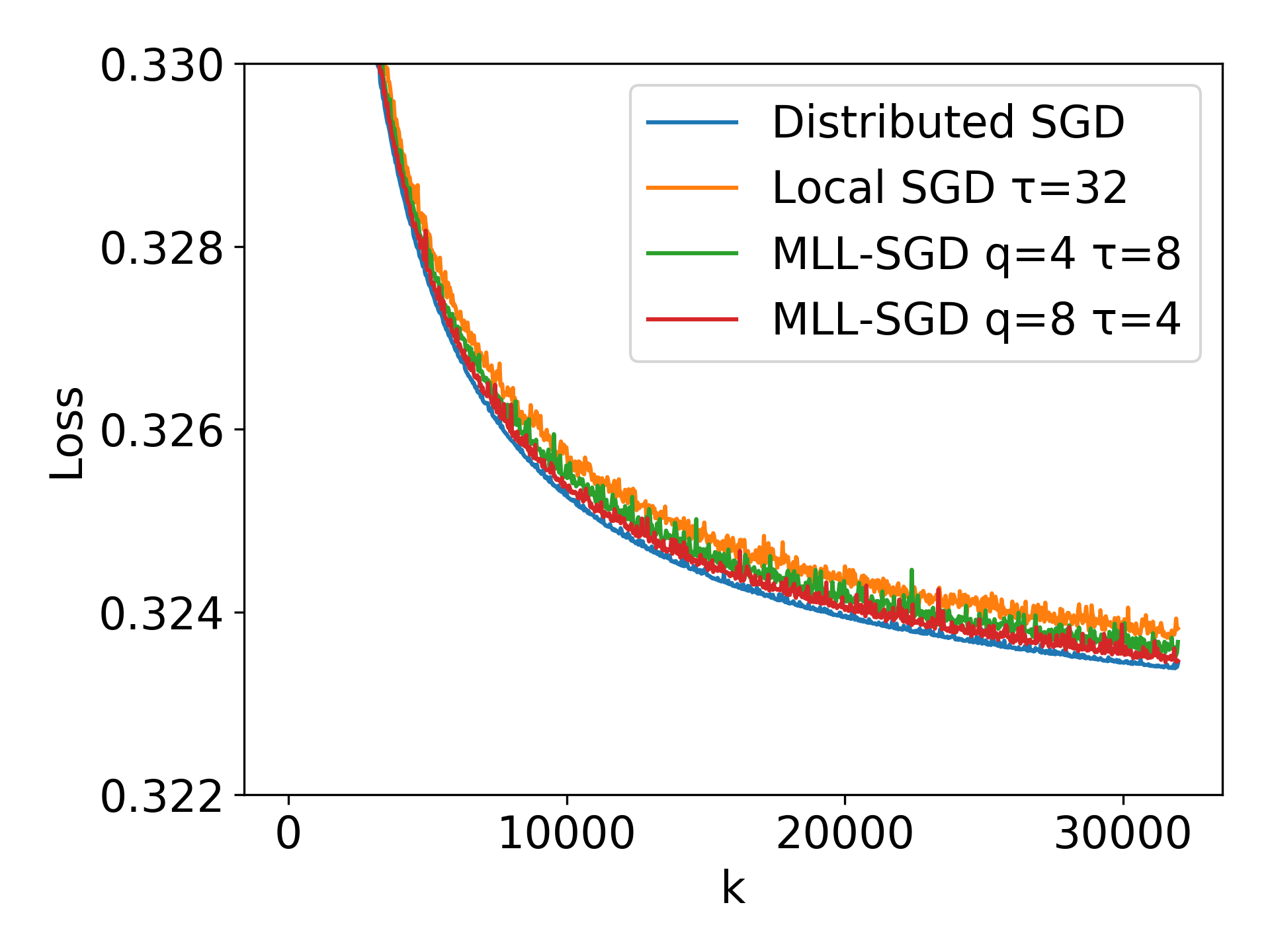}
        \caption{Training loss of logistic regression trained on MNIST.}
        \label{l_qtau3.fig}
    \end{subfigure}
    \begin{subfigure}{0.5\textwidth}
        \centering
        \captionsetup{width=.9\linewidth,font=footnotesize}
        \includegraphics[width=0.75\textwidth]{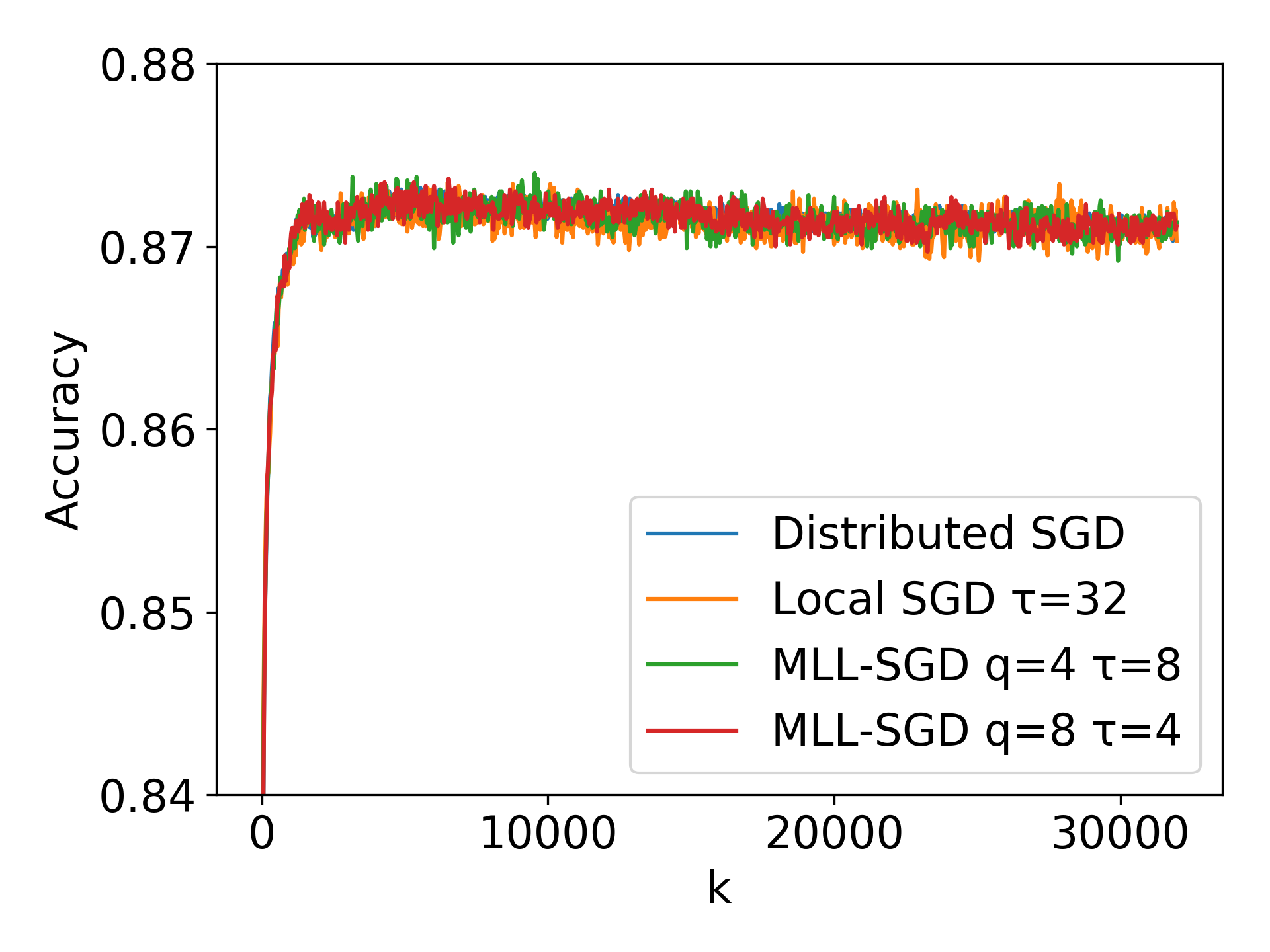}
        \caption{Test accuracy of logistic regression trained on MNIST.}
        \label{a_qtau3.fig}
    \end{subfigure}
    \caption{Effect of a hierarchy with different values of $\tau$ and $q$.}
\end{figure}

\begin{figure}[t]
    \begin{minipage}{0.5\textwidth}
        \centering
        \captionsetup{width=.9\linewidth,font=footnotesize}
        \includegraphics[width=0.75\textwidth]{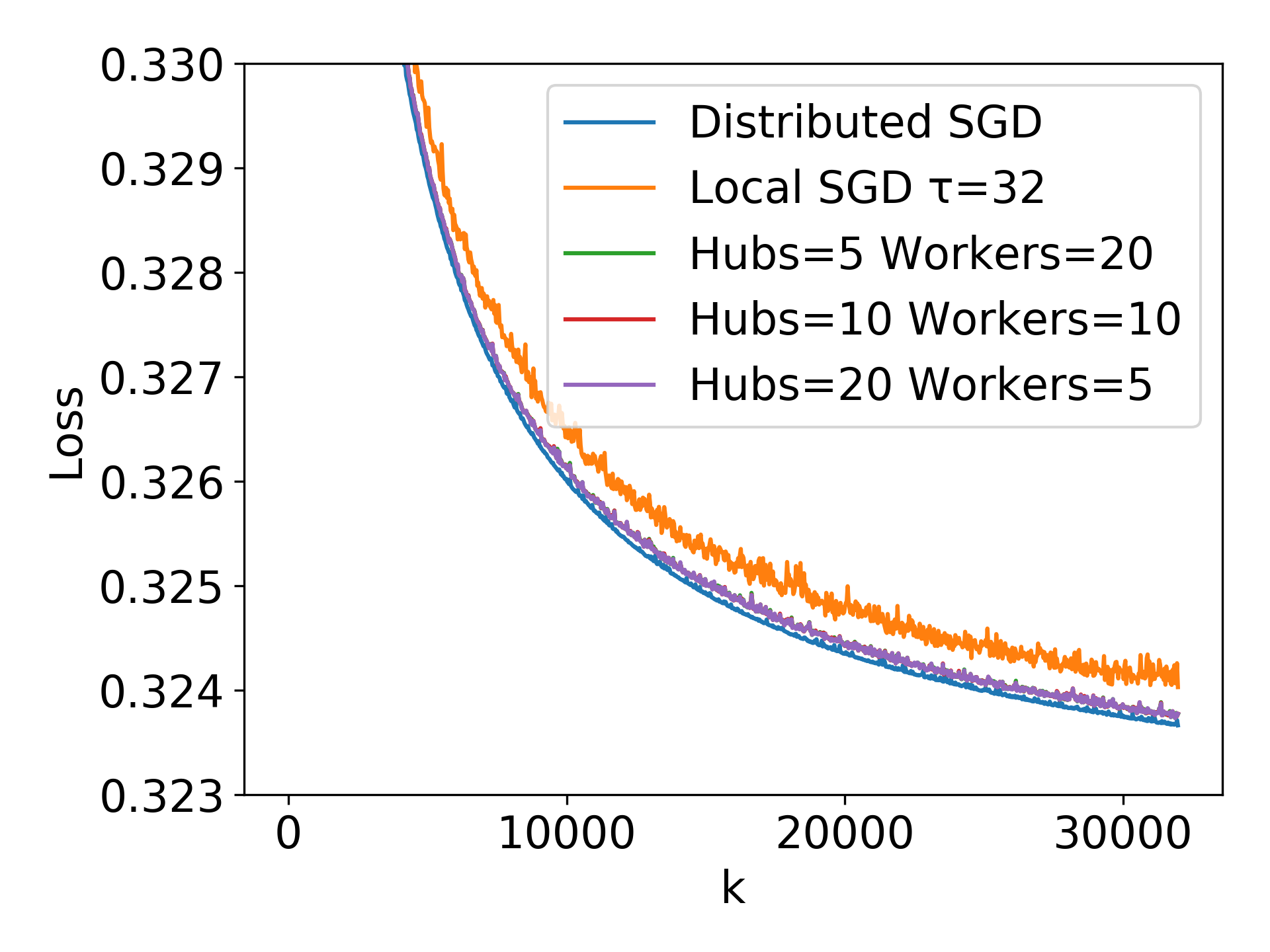}
        \caption{Effect of worker distribution on logistic regression trained on MNIST.}
        \label{zetanew3.fig}
    \end{minipage}
    \begin{minipage}{0.5\textwidth}
        \centering
        \captionsetup{width=.9\linewidth,font=footnotesize}
        \includegraphics[width=0.75\textwidth]{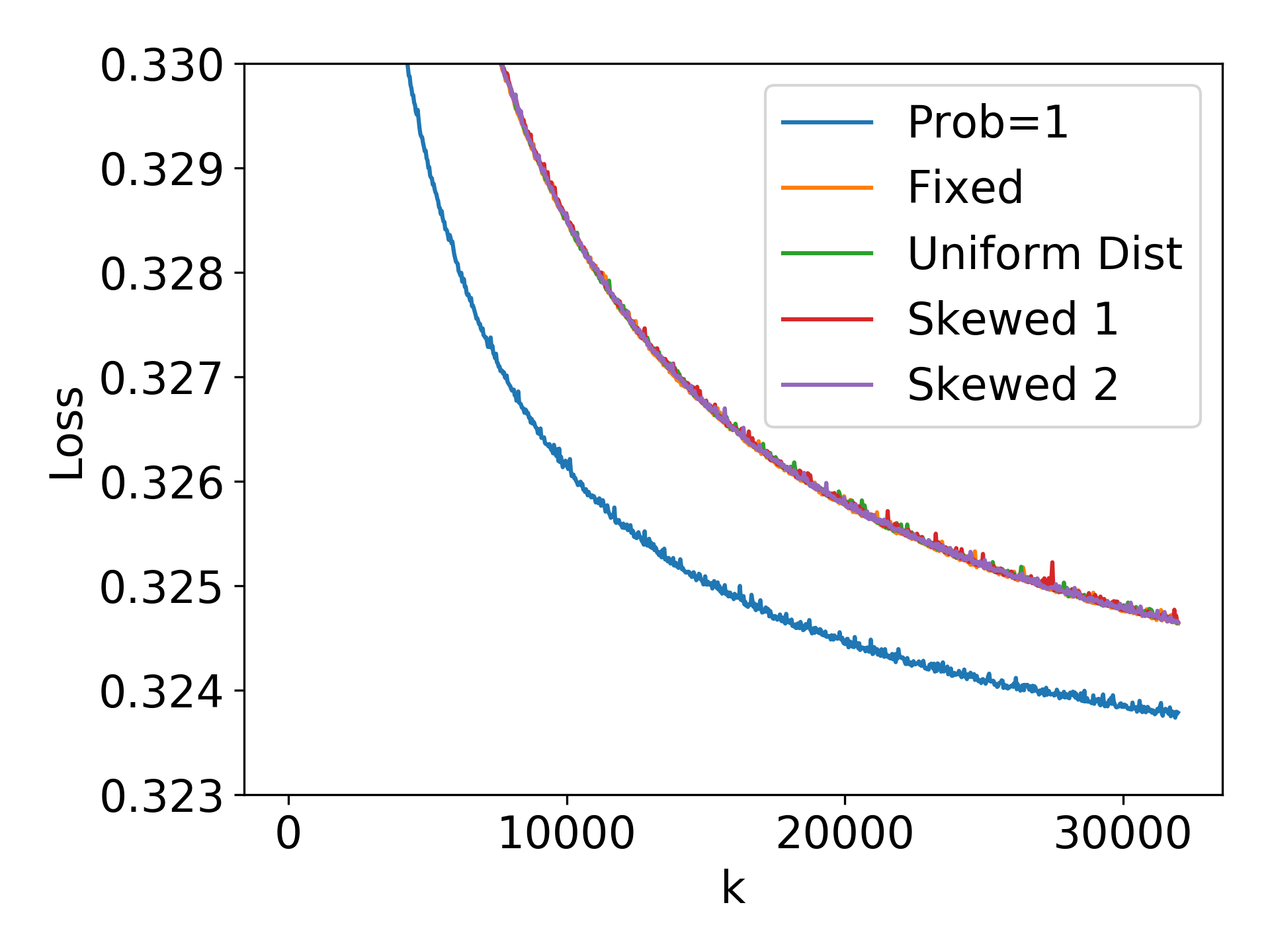}
        \caption{Effect of heterogeneous operating rates on logistic regression trained on MNIST.}
        \label{probnew3.fig}
    \end{minipage}
\end{figure}

Our experiments in Section~\ref{exp.sec} explore how changing 
MLL-SGD parameters affect training on a non-convex function. 
In this section, show the results of the same experiments on a convex loss function.
We train a logistic 
regression model on the MNIST 
dataset~\citep{bottou1994comparison}.
We train a binary classification model with half the classes being $0$ and
the other half being $1$ and use a step size of $0.2$.
We run all experiments for $32$,$000$ iterations.

% paragraph describing results
We rerun our first experiment from Figure~\ref{qtau.fig} with
logistic regression trained on MNIST.
Figures \ref{l_qtau3.fig} and \ref{a_qtau3.fig} show the training
loss and test accuracy, respectively. 
As with the non-convex functions, we can see that
MLL-SGD with larger $q$ approaches the Distributed SGD baseline.

% paragraph describing results
We rerun our second experiment comparing different
hub and worker distributions with
logistic regression trained on MNIST.
Figure~\ref{zetanew3.fig} shows the training
loss. 
The three variations of MLL-SGD do not show much
difference in terms of convergence rate, indicating
that $\zeta$ has little effect in this case. However,
they still outperform Local SGD due to $q$ being
larger.

% paragraph describing results
We rerun our third experiment comparing different
worker operating rates distributions with
logistic regression trained on MNIST.
Figure~\ref{probnew3.fig} shows the training
loss. As with the non-convex functions, all 
MLL-SGD variations with the same average probability
have similar convergence rate.

\begin{figure}[t]
    \begin{subfigure}{0.5\textwidth}
        \centering
        \captionsetup{width=.9\linewidth,font=footnotesize}
        \includegraphics[width=0.75\textwidth]{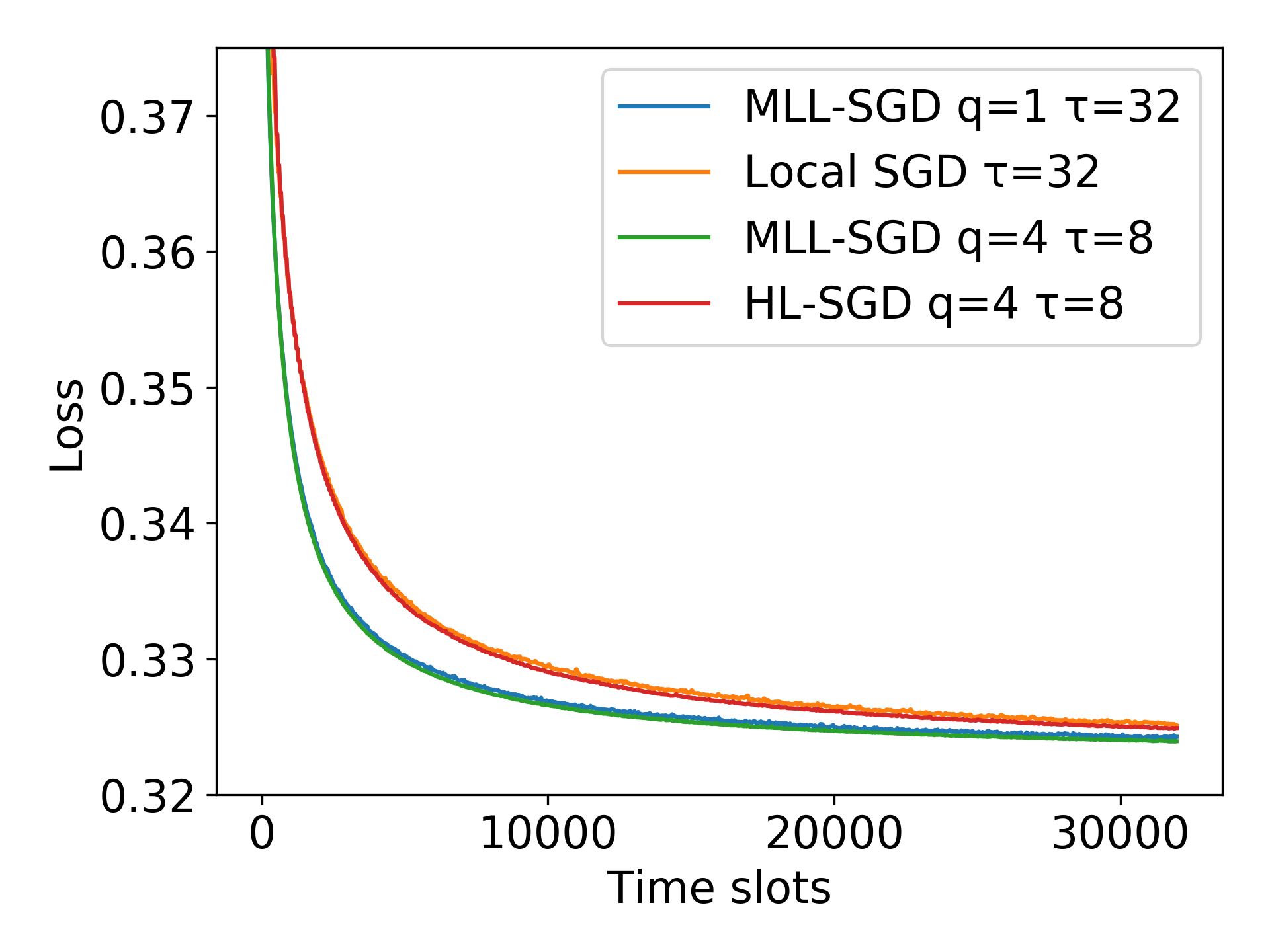}
        \caption{Training loss of logistic regression with respect to time slots.}
        \label{l_slot3.fig}
    \end{subfigure}
    \begin{subfigure}{0.5\textwidth}
        \centering
        \captionsetup{width=.9\linewidth,font=footnotesize}
        \includegraphics[width=0.75\textwidth]{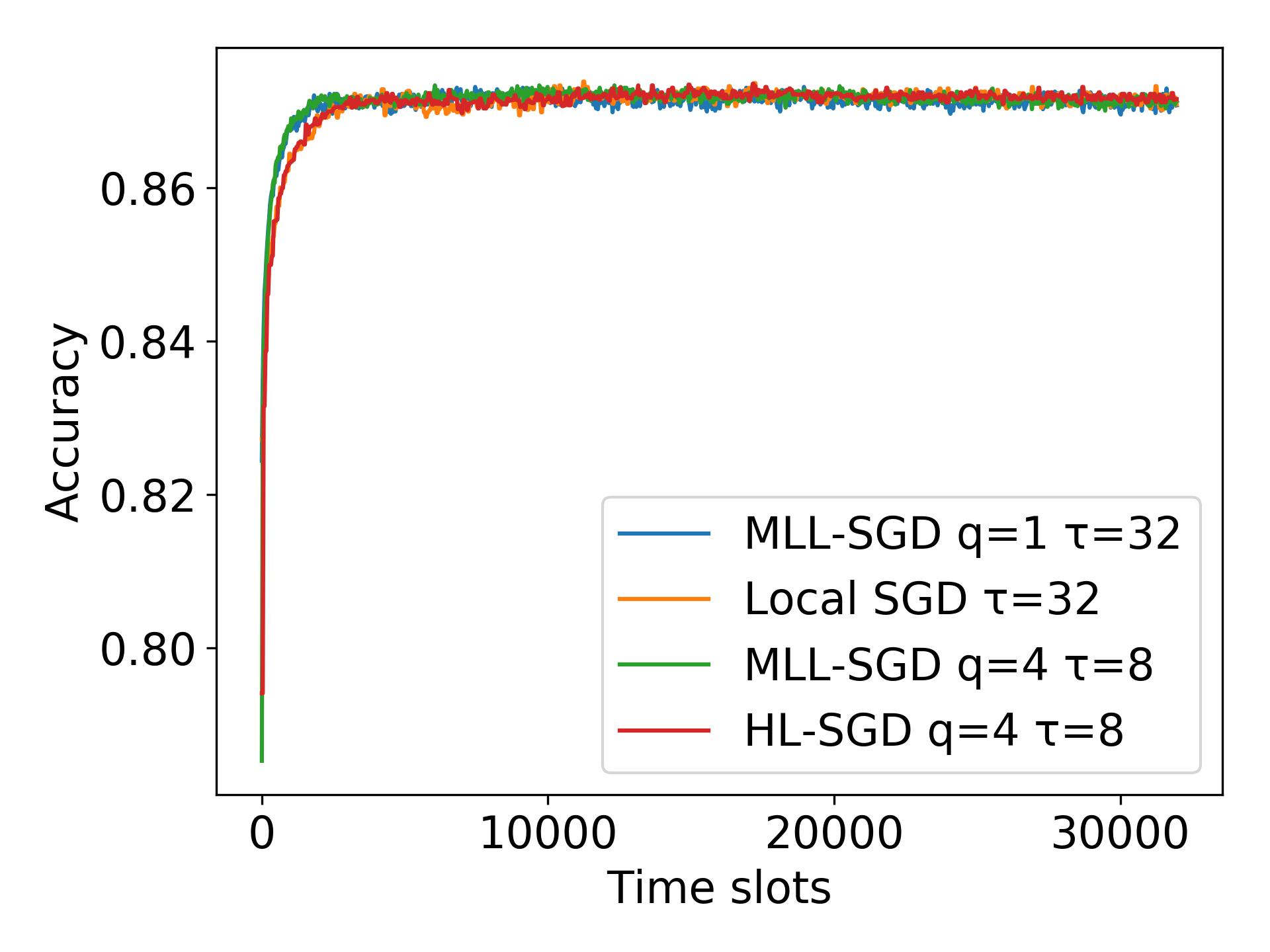}
        \caption{Test accuracy of logistic regression with respect to time slots.}
        \label{a_slot3.fig}
    \end{subfigure}
    \caption{Comparing convergence time of Local SGD and MLL-SGD.}
\end{figure}

We rerun our first experiment from Figure~\ref{slot.fig} with
logistic regression trained on MNIST.
Figures \ref{l_slot3.fig} and \ref{a_slot3.fig} show the training
loss and test accuracy, respectively. 
We can see an improvement in convergence rate of MLL-SGD
over both Local SGD and HL-SGD.

\section{Proof of Theorem~\ref{main.thm}} 

For our proof we adopt a similar approach to that 
in~\citet{wang2018cooperative}. This section is structured as follows.
We first define some notation and make some observations in Section~\ref{pre.sec}. 
Our supporting lemmas are stated in Section~\ref{lemma.sec}. We close with the full
proof of Theorem~\ref{main.thm} in Section~\ref{proof.sec}.

\subsection{Preliminaries} \label{pre.sec}
%Before starting the proof, we introduce some notation.
For simplicity of notation, we let 
$\Vert \cdot \Vert$ denote the $l_2$ vector norm.
Let the weighted Frobenius norm of an $N \times M$ matrix $\bm X$ with 
an $N$-vector $\bm a$ be defined as follows: 
\begin{align}
    \Vert \bm X \Vert^2_{F_{\bm a}} = \left|\Tr(\left(\diag(\bm a)\right)^{1/2} \bm X 
        \bm X^T \left(\diag(\bm a)\right)^{1/2})\right|
        = \sum_{i=1}^N \sum_{j=1}^M \bm a_i |\bm x_{i,j}|^2 .
\end{align}
The matrix operator norm for a square matrix $\Q$ is defined as:
\begin{align}
    \left\Vert \Q \right\Vert_{op} 
    &= \sqrt{\lambda_{max}(\Q^T \Q)} .
\end{align}

We define the set of Bernoulli random variables $\Theta = \{\theta^1_k, \ldots, \theta^N_k\}$,
where 
\begin{align*}
\theta^i_k =
\begin{cases} 
1 & \text{with probability}~\bm p_i \\
0 & \text{with probability}~(1- \bm p_i) . 
\end{cases}
\end{align*}
Let $\Xi_k = \{\xi_k^{(1)},...,\xi_k^{(N)}\}$ be the set of mini-batches 
used by the $N$ workers at time step $k$. Without loss of generality, we assign a mini-batch to each worker, even if it does not execute a gradient step in that iteration.
An equivalent definition of $\bm g_k^{(i)}$ is then 
\begin{align}
    \bm g_k^{(i)} = \theta_k^i g (\xi_k^{(i)}).
\end{align}
 
For simplicity of notation, let $\mathbb{E}_k$ be equivalent to
$\mathbb{E}_{\Theta_k, \Xi_k|X_k}$.

%Assumptions~\ref{unbias.asp} and \ref{variance.asp} 
%are in terms of $\bm g_k^{(i)}$. To simplify
%the proof of Theorem \ref{main.thm}, 
%we can put these assumptions in terms of $g(\bm x_k^{(i)})$.

We note that Assumption~\ref{unbias.asp} implies:
\begin{align}
    \mathbb{E}_k[\bm g_k^{(i)}] &= \bm p_{i}\mathbb{E}_k[g(\bm x_k^{(i)})]\\
    &= \bm p_{i}\nabla F(\bm x_k^{(i)}). \label{gtoF.eq}
\end{align}
Further, when $i \neq j$:
\begin{align}
    \mathbb{E}_k[(\bm g_k^{(i)})^T\bm g_k^{(j)}] &= 
    \bm p_{i}\bm p_{j}\mathbb{E}_k[(g(\bm x_k^{(i)}))]^T\mathbb{E}_k[g(\bm x_k^{(j)})]\\
    &=  \bm p_{i}\bm p_{j} \nabla F(\bm x_k^{(i)})^T\nabla F(\bm x_k^{(j)}).
\end{align}
We also note that Assumption~\ref{variance.asp} implies:
\begin{align}
    \mathbb{E}_k\left\Vert \bm g_k^{(i)} - \nabla F(\bm x_k^{(i)}) \right\Vert^2
    &= \mathbb{E}_k \left[ \left\Vert \bm g_k^{(i)} \right\Vert^2 
    + \left\Vert \nabla F(\bm x_k^{(i)}) \right\Vert^2
    - 2 (\bm g_k^{(i)})^T \nabla F(\bm x_k^{(i)})  \right]  \\
    &= \mathbb{E}_k \left\Vert \bm g_k^{(i)} \right\Vert^2 
    + \left\Vert \nabla F(\bm x_k^{(i)}) \right\Vert^2
    - 2\mathbb{E}_k (\bm g_k^{(i)})^T \nabla F(\bm x_k^{(i)})  \\
    &= \bm p_{i} \mathbb{E}_k\left\Vert g(\bm x_k^{(i)}) \right\Vert^2 
    + \left\Vert \nabla F(\bm x_k^{(i)}) \right\Vert^2
    - 2\bm p_{i}\mathbb{E}_k g(\bm x_k^{(i)})^T \nabla F(\bm x_k^{(i)})  \\
    &= \bm p_{i} \mathbb{E}_k\left\Vert g(\bm x_k^{(i)}) \right\Vert^2 
    + \bm p_{i}\left\Vert \nabla F(\bm x_k^{(i)}) \right\Vert^2 \nonumber \\
    &~~~~~~~~~~~~~~~~~~~~- 
    2\bm p_{i}\mathbb{E}_k g(\bm x_k^{(i)})^T \nabla F(\bm x_k^{(i)})  
    + (1-\bm p_{i})\left\Vert \nabla F(\bm x_k^{(i)}) \right\Vert^2 \\
    &= \bm p_{i}\mathbb{E}_k\left\Vert g(\bm x_k^{(i)}) - \nabla F(\bm x_k^{(i)}) \right\Vert^2
    + (1-\bm p_{i})\left\Vert \nabla F(\bm x_k^{(i)}) \right\Vert^2 \\
    &\leq  \bm p_{i}\beta\left\Vert \nabla F(\bm x_k^{(i)}) \right\Vert^2
    + \bm p_{i}\sigma^2 + (1-\bm p_{i})\left\Vert \nabla F(\bm x_k^{(i)}) \right\Vert^2 \\
    &= \left(\bm p_{i}(\beta-1)+1\right)
    \left\Vert \nabla F(\bm x_k^{(i)}) \right\Vert^2 + \bm p_{i}\sigma^2. \label{asp5implication.eq}
\end{align}

Finally, we define the weighted average stochastic gradient and the weighted average batch gradient as:
\[
\Gk =  \sum_{i=1}^N \bm a_i \bm g_k^{(i)},~~\Hk =  \sum_{i=1}^N \bm a_i\nabla F(\bm x_k^{(i)}).
\]

\subsection{Lemmas and Propositions} \label{lemma.sec}

Next, we state our supporting lemmas and propositions.
%We recall Proposition~\ref{Za.prop}.
\begin{restatable}{prop}{Za} \label{Za.prop}
The matrices $\Z$ and $\V$ satisfy the following properties:
\begin{enumerate}
\item $\Z$ and $\V$ each have a right eigenvector of $\bm a$ with eigenvalue 1.
\item $\Z$ and $\V$ each have a left eigenvector of $\one_{N}^T$ with eigenvalue 1.
\item All other eigenvalues of $\Z$ and $\V$ have magnitude strictly less than 1.
\end{enumerate}
\end{restatable}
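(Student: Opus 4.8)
The plan is to prove all three parts at once by exhibiting a rectangular factorization that reduces the spectra of $\Z$ and $\V$ to that of $\Hm$, whose relevant spectral properties are already recorded below Assumption~\ref{H.assum}. Introduce the $N\times D$ ``scatter'' matrix $\bm S$ with $\bm S_{i,d}=v^{(i)}$ when worker $i$ belongs to sub-network $d$ and $\bm S_{i,d}=0$ otherwise, and the $D\times N$ ``gather'' matrix $\bm R$ with $\bm R_{d,j}=1$ when worker $j$ belongs to sub-network $d$ and $\bm R_{d,j}=0$ otherwise. Elementary index computations, using only the definitions of $v^{(i)}$, $\bm a_i=w^{(i)}/w_{tot}$, $\bm b_d=(\sum_{i\in\M^{(d)}}w^{(i)})/w_{tot}$ and the normalization $\sum_{i\in\M^{(d)}}v^{(i)}=1$, give
\begin{align*}
\V=\bm S\bm R,\qquad \Z=\bm S\,\Hm\,\bm R,\qquad \bm R\bm S=\I_D,
\end{align*}
together with the auxiliary identities $\bm R\bm a=\bm b$, $\bm S\bm b=\bm a$, $\one_N^T\bm S=\one_D^T$ and $\one_D^T\bm R=\one_N^T$; the identity $\bm S\bm b=\bm a$ is exactly the statement $\bm a_i=v^{(i)}\bm b_{d(i)}$.

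The first two parts now follow in one line each. Since $\Hm\bm b=\bm b$, we get $\Z\bm a=\bm S\,\Hm(\bm R\bm a)=\bm S\,\Hm\bm b=\bm S\bm b=\bm a$; and since $\Hm$ is column stochastic, $\one_N^T\Z=(\one_N^T\bm S)\,\Hm\,\bm R=\one_D^T\,\Hm\,\bm R=\one_D^T\bm R=\one_N^T$. Replacing $\Hm$ by $\I_D$ throughout yields the corresponding identities for $\V=\bm S\I_D\bm R$.

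For the third part, use the fact that the nonzero eigenvalues of a product of conformable rectangular matrices are invariant (with algebraic multiplicity) under cyclic rotation. Grouping $\Z=\bm S\,(\Hm\bm R)$ and using $\bm R\bm S=\I_D$, the nonzero eigenvalues of $\Z$ coincide with those of $(\Hm\bm R)\,\bm S=\Hm(\bm R\bm S)=\Hm$, and all remaining eigenvalues of the $N\times N$ matrix $\Z$ equal $0$. Because $\Hm$ has $1$ as a simple eigenvalue with every other eigenvalue of magnitude strictly less than $1$ (the facts stated below Assumption~\ref{H.assum}), $\Z$ inherits exactly this: $1$ is a simple eigenvalue and every other eigenvalue — whether inherited from $\Hm$ or one of the extra zeros — has magnitude strictly below $1$. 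The same argument with $\Hm$ replaced by $\I_D$ shows that the nonzero eigenvalues of $\V$ are all equal to $1$ while every other eigenvalue of $\V$ equals $0$, which again is strictly below $1$ in magnitude; note that $1$ need not be a simple eigenvalue of $\V$ once there is more than one sub-network, but the third part only constrains the remaining eigenvalues.

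The argument is essentially bookkeeping; the one delicate point is verifying the factorization identities, and in particular $\bm R\bm S=\I_D$ and $\bm S\bm b=\bm a$ — this is precisely where the within-sub-network normalization of the $v^{(i)}$'s and the relation $\bm a_i=v^{(i)}\bm b_{d(i)}$ enter — together with the need to invoke cyclic invariance of the \emph{nonzero} spectrum rather than of the full spectrum. If one prefers to avoid the cyclic-invariance fact, an equivalent route notes that $\V=\bm S\bm R$ is idempotent and satisfies $\V\Z=\Z\V=\Z$, so $\mathbb{R}^N=\operatorname{range}(\bm S)\oplus\ker\bm R$ is a $\Z$-invariant splitting on which $\Z$ acts as a conjugate of $\Hm$ on the first summand and as $0$ on the second, yielding the same conclusion for both $\Z$ and (with $\Hm=\I_D$) $\V$.
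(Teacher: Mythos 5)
Your proof is correct, and it takes a genuinely different route from the paper's. The paper verifies by index manipulation that $\Z$ and $\V$ are non-negative, column stochastic, and satisfy the detailed-balance condition $\Z_{i,j}\bm a_j=\Z_{j,i}\bm a_i$ (resp.\ for $\V$), i.e.\ that they are Generalized Diffusion Matrices with vector $\bm a$, and then imports all three spectral conclusions from the cited reference of Rotaru et al.; it handles the eigenvalue structure of $\Z$ separately in its Proposition~2 via an explicit bijection between eigenpairs of $\Hm$ and eigenpairs of $\Z$. Your factorization $\V=\bm S\bm R$, $\Z=\bm S\Hm\bm R$ with $\bm R\bm S=\I_D$ is self-contained: parts 1 and 2 become one-line computations from $\Hm\bm b=\bm b$, column stochasticity of $\Hm$, and the identities $\bm R\bm a=\bm b$, $\bm S\bm b=\bm a$, $\one_N^T\bm S=\one_D^T$, $\one_D^T\bm R=\one_N^T$ (all of which I checked), and part 3 follows from cyclic invariance of the nonzero spectrum, which simultaneously recovers the content of the paper's Proposition~2 for free. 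Your caveat that $1$ has multiplicity $D$ for $\V$ is consistent with how the paper itself later uses the result (it notes in its Lemma on operator norms that $\V$ has $D$ unit eigenvalues, so $\|\V-\A\|_{op}=1$). The one thing the paper's route buys that yours does not is the detailed-balance property itself: that $\Z$ is symmetrizable by $\left(\diag(\bm a)\right)^{1/2}$ is what the paper later invokes to get $\|\Z^j-\A\|_{op}=\zeta^j$ rather than merely a statement about eigenvalues, so if you adopted your proof you would still want to record the balance identity (or the conjugation $\Z=\bm S\Hm\bm R$ together with reversibility of $\Hm$) for use downstream. For the proposition as stated, your argument is complete.
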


%We prove Proposition~\ref{Za.prop} here.
\begin{proof}
Assumption~\ref{H.assum} indicates that $\Hm$ is a \emph{Generalized
Diffusion Matrix} as defined in~\citet{rotaru2004dynamic}. 

Recall
Assumption~\ref{H.assum}:

\textbf{Assumption 2. } \textit{The matrix $\Hm$ satisfies the following:
    \begin{enumerate}[label=2\alph*]
        \item If $(i,j) \in E$, then $\Hm_{i,j} > 0$. Otherwise, $\Hm_{i,j}  = 0$.
        \item $\Hm$ is column stochastic, i.e.,  $\sum_{i=1}^D \Hm_{i,j} = 1$. 
        \item For all $i,j \in \D$, we have $\Hm_{i,j} \bm b_j  = \Hm_{j,i} \bm b_i$. 
\end{enumerate}}

If we show this implies
that $\Z$ and $\V$ are Generalized Diffusion Matrices with the same properties
to those in Assumption~\ref{H.assum} with vector $\bm a$, 
then the properties in the proposition are satisfied.

Since $\Hm$ and $\bm b$ are non-negative, then $\Z$ is also non-negative.
It is also clear that $\Z$ is column stochastic by construction.
It is left to prove that:
\begin{align}
\Z_{i,j} \bm a_j = \Z_{j,i} \bm a_i.
\end{align}

Applying the definition of $\Z$ to the left side, we have: 
\begin{align}
    \Z_{i,j} \bm a_j &= \Hm_{d(i),d(j)} v^{(i)} \bm a_j 
\end{align}
Since we know that $\Hm$ is a Generalized Diffusion Matrix 
with vector $\bm b$, we know that:
\begin{align}
    \Hm_{i,j} \bm b_j &= \Hm_{j,i} \bm b_i \\
    \Hm_{i,j} &= \Hm_{j,i} \frac{\bm b_i}{\bm b_j} .
\end{align}
Plugging this in for $\Hm_{d(i),d(j)}$, we have: 
\begin{align}
    \Z_{i,j} \bm a_j &= \Hm_{d(j),d(i)} \frac{\bm b_{d(i)}}{\bm b_{d(j)}} v^{(i)} \bm a_j \\
     &= \Hm_{d(j),d(i)} \frac{\sum_{r \in \M^{(d(i))}} w^{(r)}}{w_{tot}} 
        \frac{w_{tot}}{\sum_{r \in \M^{(d(j))}} w^{(r)}} 
        \frac{w^{(i)}}{\sum_{r \in \M^{(d(i))}} w^{(r)}} 
        \frac{w^{(j)}}{w_{tot}} \\
     &= \Hm_{d(j),d(i)} \frac{w^{(i)}}{w_{tot}}
        \frac{w^{(j)}}{\sum_{r \in \M^{(d(j))}} w^{(r)}} \\
     &= \Hm_{d(j),d(i)} v^{(j)} \bm a_i \\
     &= \Z_{j,i} \bm a_i .
\end{align}
Therefore, $\Z$ is a Generalized Diffusion Matrix.

We can show that $\V$ is also a Generalized Diffusion Matrix with
the vector $\bm a$. $\V$ is constructed to be 
non-negative and column stochastic.
It is left to prove that
\begin{align}
\V_{i,j} \bm a_j = \V_{j,i} \bm a_i.
\end{align}

When $i,j$ are outside a block $\V^{(d)}$, then $\V_{i,j} =
\V_{j,i} = 0$, so the equation is trivially satisfied.
When within a block, in terms 
of $w$, we have:
\begin{align}
    \V_{i,j} \bm a_j &= \V_{j,i} \bm a_i \\
    \frac{w^{(i)}}{\sum_{r \in \M^{(d(i))}} w^{(r)}} \frac{w^{(j)}}{w_{tot}}
    &= 
    \frac{w^{(j)}}{\sum_{r \in \M^{(d(j))}} w^{(r)}} \frac{w^{(i)}}{w_{tot}}.
\end{align}
Noting that we are within a block, therefore $d(i) = d(j)$, we can see
that both sides are equal:
\begin{align}
    w^{(i)} w^{(j)}
    &=
    w^{(j)} w^{(i)}.
\end{align}
Therefore, $\V$ is a Generalized Diffusion Matrix.

\end{proof}

\begin{restatable}{prop}{Z2} \label{Z.prop}
    Given a diffusion matrix $\Hm$ with the properties
in Assumption~\ref{H.assum}, if $\Z$ constructed as follows, 
    \begin{align}
        \Z_{i,j} = \Hm_{d(i),d(j)} v^{(i)}
    \end{align}
    then the largest eigenvalues of $\Z$ are the eigenvalues of 
    $\Hm$, and zero otherwise.
\end{restatable}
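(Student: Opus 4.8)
The plan is to recognize $\Z$ as a product that first ``expands'' a $D\times D$ matrix to size $N\times N$ and then collapses it back, so that the spectrum of $\Z$ is forced to coincide with that of $\Hm$ up to extra zeros. Let $\bm\Pi\in\{0,1\}^{N\times D}$ be the sub-network membership matrix, $\bm\Pi_{i,d}=\mathbf{1}[d(i)=d]$, and let $\bm v\in\mathbb{R}^{N}$ have entries $\bm v_i=v^{(i)}$. The first step is to verify the factorization
\begin{align}
    \Z=\diag(\bm v)\,\bm\Pi\,\Hm\,\bm\Pi^{\top},
\end{align}
which is a one-line entrywise check: the $(i,j)$ entry of the right-hand side is $v^{(i)}\Hm_{d(i),d(j)}$, matching the definition $\Z_{i,j}=\Hm_{d(i),d(j)}v^{(i)}$.

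The second step is the key algebraic identity $\bm\Pi^{\top}\diag(\bm v)\,\bm\Pi=\bm I_D$: its $(d,e)$ entry equals $\sum_i \mathbf{1}[d(i)=d]\,v^{(i)}\,\mathbf{1}[d(i)=e]$, which vanishes for $d\neq e$ and equals $\sum_{i\in\M^{(d)}}v^{(i)}=1$ for $d=e$ by the within-sub-network normalization of the $v^{(i)}$. Writing $\bm X=\diag(\bm v)\,\bm\Pi$ (size $N\times D$) and $\bm Y=\Hm\,\bm\Pi^{\top}$ (size $D\times N$), we then have $\Z=\bm X\bm Y$ while $\bm Y\bm X=\Hm\,\bm\Pi^{\top}\diag(\bm v)\,\bm\Pi=\Hm$. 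Since each sub-network is nonempty we have $N\ge D$, so the standard determinant identity relating the characteristic polynomials of $\bm X\bm Y$ and $\bm Y\bm X$ (Sylvester's determinant theorem) gives $\det(\lambda\bm I_N-\Z)=\lambda^{\,N-D}\,\det(\lambda\bm I_D-\Hm)$. Hence the eigenvalues of $\Z$ are exactly the $D$ eigenvalues of $\Hm$ together with $0$ of multiplicity $N-D$; since every eigenvalue of $\Hm$ has magnitude at most $1$ with $1$ attained (Assumption~\ref{H.assum} and connectedness of $G$), the eigenvalues of $\Z$ of largest magnitude are precisely those of $\Hm$, which is the claim.

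I do not expect a genuine obstacle here beyond spotting the factorization; the one subtle point is tracking algebraic multiplicities, and that is handled automatically by passing through the determinant identity rather than matching eigenvalues one at a time. As a cross-check (and an alternative route), one can build eigenvectors directly: for a right eigenpair $(\lambda,\bm c)$ of $\Hm$, the vector $\bm\psi$ with $\bm\psi_i=v^{(i)}\bm c_{d(i)}$ satisfies $\Z\bm\psi=\lambda\bm\psi$, these are linearly independent because $v^{(i)}>0$, and $\operatorname{rank}(\Z)\le D$ supplies the remaining $N-D$ zero eigenvalues; but making this fully rigorous when $\Hm$ is defective reduces once more to the determinant computation above, so I would present the determinant argument as the main proof.
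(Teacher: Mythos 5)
Your proof is correct, and it takes a genuinely different (and in one respect stronger) route than the paper's. The paper argues in two stages: first that $\operatorname{rank}(\Z)=\operatorname{rank}(\Hm)$ by inspecting the columns of $\Z$, and second that every eigenpair $(\lambda,\bm y)$ of $\Hm$ lifts to an eigenpair $(\lambda,\bm x)$ of $\Z$ via $\bm x_i = v^{(i)}\bm y_{d(i)}$, with an injectivity argument to match multiplicities. That is exactly your ``cross-check'' route. Your main argument instead exhibits the factorization $\Z=\bm X\bm Y$ with $\bm Y\bm X=\Hm$ (using $\bm\Pi^{\top}\diag(\bm v)\bm\Pi=\bm I_D$, which holds because the $v^{(i)}$ sum to one within each sub-network) and invokes the identity $\det(\lambda\bm I_N-\bm X\bm Y)=\lambda^{N-D}\det(\lambda\bm I_D-\bm Y\bm X)$. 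What this buys you is an exact statement about the characteristic polynomial, so algebraic multiplicities are accounted for with no case analysis; the paper's eigenvector-based bijection only tracks geometric multiplicity and would need extra care if $\Hm$ were defective (here Assumption~\ref{H.assum} makes $\diag(\bm b)^{1/2}\Hm\diag(\bm b)^{-1/2}$ symmetric, so $\Hm$ is in fact diagonalizable and the paper's argument suffices, but your proof does not need to know that). The only mild caveat is that you rely on $N\ge D$, which holds since every sub-network is nonempty; otherwise the argument is complete.
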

\begin{proof}
    In order to prove the relationship of the eigenvalues of $\Z$ and $\Hm$,
    we prove the following two points separately:
    \begin{enumerate}
        \item The rank of $\Z$ is the same as $\Hm$.
        \item All non-zero eigenvalues of $\Hm$ are eigenvalues of $\Z$
            with the same multiplicity.
    \end{enumerate}
 %   These points are enough to prove the proposition.

    For the rank of $\Z$, we take a look at how each column is constructed. 
    Consider column $j$ of $\Z$:
    \begin{align}
        \Z_j = [\Hm_{1,d(j)}v^{(1)}, \ldots, \Hm_{1,d(j)}v^{(N^{(1)})},\Hm_{2,d(j)}v^{(N^{(1)}+1)}, \ldots ,\Hm_{D,d(j)}v^{(N)}]^T.
    \end{align}
   For two columns $i$ and $j$ where $d(i) = d(j)$, these columns
    are identical. Therefore, the rank of $\Z$ will be, at most, 
    the number of hubs, $D$. Further, we can see 
    that the elements of a column $j$ in $\Z$ are simply scaled elements of 
    column $d(j)$ in $\Hm$. So any linearly dependent
    columns in $\Hm$ will also be linearly dependent in $\Z$. Therefore, the
    rank of the two matrices are the same.
%
% We will show first that given an eigenpair $(\lambda, \bm y)$ of $\Hm$ (with $\lambda \neq 0$), i.e.
%    \begin{align}
%        \Hm \bm y = \lambda \bm y
%    \end{align}
%    we can construct an eigenpair $(\lambda, \bm x)$ of $\Z$, i.e.,
%    \begin{align}
%        \Z \bm x = \lambda \bm x .
%    \end{align}
%   We will then show this mapping is a bijection.
%    That is, if eigenvalue $\lambda$ of $\Hm$ has multiplicity $k$, 
%    then the $k$ constructed eigenvectors of $\Z$ are all unique.
    
    For the second point, we show there is a bijective mapping from eigenpairs
    of $\Hm$ to eigenpairs of $\Z$.
    Let  $(\lambda, \bm y)$ be an eigenpair of $\Hm$ (with $\lambda \neq 0$), i.e.
    \begin{align}
        \Hm \bm y = \lambda \bm y.
    \end{align}
Define the $N$-vector $\bm x$ with components $\bm x_i = v^{(i)} \bm y_{d(i)}$.
    We will show that $\Z \bm x = \lambda \bm x$. 
    Looking at the $i$-th entry of the vector
    $\Z \bm x$, we have
    \begin{align}
        (\Z \bm x)_i &= \sum_{j=1}^N \Z_{i,j} \bm x_j. 
    \end{align}
Applying the definition of $\Z$ and $\bm x$, we obtain
    \begin{align}
        (\Z \bm x)_i &= \sum_{j=1}^N \frac{1}{v^{(i)}} 
            \Hm_{d(i),d(j)} v^{(j)} \bm y_{d(j)} \\
        &= \frac{1}{v^{(i)}} \sum_{l=1}^D \Hm_{d(i),l} \bm y_l 
            \sum_{k \in \M^{(l)}} v^{(k)} \\ 
        &= \frac{1}{v^{(i)}} \sum_{l=1}^D \Hm_{d(i),l} \bm y_l.
    \end{align}
Note that the $m$-th entry of the vector $\Hm \bm y$ equals
    $\sum_{l=1}^D \Hm_{m,l} \bm y_l = \lambda \bm y_m$. Applying this equality, we obtain
    \begin{align}
        (\Z \bm x)_i &=  \frac{1}{v^{(i)}} \lambda \bm y_{d(i)} \\
        &= \lambda \bm x_i.
    \end{align}
    Therefore,  for any eigenpair $(\lambda, \bm y)$ of $\Hm$, we can find an eigenpair
    $(\lambda, \bm x)$ of $\Z$.    It is left to prove that this mapping is a bijection. 
    
    Suppose eigenvalue $\lambda$ of $\Hm$ has multiplicity $k >1$.
    We consider any two of the $k$ eigenpairs $(\lambda, \bm c)$ 
    and $(\lambda, \bm d)$. Let the
    corresponding eigenpairs of $\Z$ be $(\lambda, \bm e)$ and $(\lambda, \bm f)$.
    We know that $\bm e \neq \bm f$ because $\bm c$ and $\bm d$ are unique,
    and there must exist an index $i$ such that 
    $v^{(i)} \bm c_{d(i)} \neq v^{(i)} \bm d_{d(i)}$.
    Therefore, the mapping of eigenpairs of $\Hm$ to eigenpairs
    of $\Z$ is a bijection.
\end{proof}

\begin{restatable}{prop}{ZV} \label{ZV.prop}
    Given definition of $\Z$ and $\V$ in 
    Proposition~\ref{Za.prop}, it is the case that 
    \begin{align}
        \Z\V = \V\Z = \Z.
    \end{align}
\end{restatable}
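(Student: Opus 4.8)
The plan is to verify both identities by a direct entrywise computation, exploiting the block-diagonal structure of $\V$ together with the within-sub-network normalization $\sum_{k \in \M^{(d)}} v^{(k)} = 1$, which follows immediately from the definition $v^{(i)} = w^{(i)} / \sum_{j \in \M^{(d(i))}} w^{(j)}$.

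First I would compute $(\V\Z)_{i,j} = \sum_{k=1}^N \V_{i,k}\Z_{k,j}$. Since $\V$ is block diagonal with blocks indexed by sub-networks, $\V_{i,k} = v^{(i)}$ when $d(k) = d(i)$ and $\V_{i,k} = 0$ otherwise, so the sum collapses to $k \in \M^{(d(i))}$. For all such $k$ we have $d(k) = d(i)$, hence $\Z_{k,j} = \Hm_{d(i),d(j)}\,v^{(k)}$, and the factor $\Hm_{d(i),d(j)}$ is constant over the sum. Pulling it out together with $v^{(i)}$ gives $(\V\Z)_{i,j} = v^{(i)}\Hm_{d(i),d(j)}\sum_{k \in \M^{(d(i))}} v^{(k)} = v^{(i)}\Hm_{d(i),d(j)} = \Z_{i,j}$, using the normalization. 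This establishes $\V\Z = \Z$.

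Next I would compute $(\Z\V)_{i,j} = \sum_{k=1}^N \Z_{i,k}\V_{k,j}$. Again using block diagonality, $\V_{k,j} = v^{(k)}$ when $d(k) = d(j)$ and $\V_{k,j} = 0$ otherwise, so the sum runs over $k \in \M^{(d(j))}$. For such $k$ we have $\Z_{i,k} = \Hm_{d(i),d(k)}\,v^{(i)} = \Hm_{d(i),d(j)}\,v^{(i)}$, which is constant over the sum; factoring it out yields $(\Z\V)_{i,j} = \Hm_{d(i),d(j)}\,v^{(i)}\sum_{k \in \M^{(d(j))}} v^{(k)} = \Hm_{d(i),d(j)}\,v^{(i)} = \Z_{i,j}$. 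Hence $\Z\V = \Z$, and combining the two directions gives $\Z\V = \V\Z = \Z$.

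There is no real obstacle here: the only point to be careful about is that the block-diagonal structure of $\V$ restricts each summation index to a single sub-network, which makes the relevant entry of $\Hm$ constant across that sum and lets the column-stochasticity of $\V$ within each block collapse the remaining sum to $1$. One could alternatively phrase the argument at the matrix level (each block of $\V$ fixes the constant-within-block columns of $\Z$; post-multiplying by $\V$ replaces each block of columns of $\Z$ by its weighted average, recovering that same block), but the entrywise version above is the cleanest to write out.
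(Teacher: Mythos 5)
Your proof is correct and follows essentially the same route as the paper: a direct entrywise computation using the block-diagonal structure of $\V$, the fact that $\Z_{i,j}$ depends on $j$ only through $d(j)$, and the normalization $\sum_{k \in \M^{(d)}} v^{(k)} = 1$ (equivalently, column stochasticity of $\V$). Your writeup is in fact slightly more careful than the paper's, which contains a typo ($\Hm_{d(i),d(i)}$ where $\Hm_{d(i),d(j)}$ is meant) in the $\V\Z$ computation.
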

\begin{proof}
    First, we prove that $\V\Z = \Z$. Note that the $i$-th row
    of $\V$ contains either $v_i$ or zero. 
    Looking at an arbitrary
    entry $i,j$ of $\V\Z$ we have: 
    \begin{align}
        (\V\Z)_{i,j} = v^{(i)} \sum_{r \in M_{d(i)}} \Z_{r,j} \\
            (\V\Z)_{i,j} = v^{(i)} \Hm_{d(i),d(i)} \\ 
            (\V\Z)_{i,j} = \Z_{i,j}. 
    \end{align}

    Next we prove that $\Z\V = \Z$. Note that for any row $i$ in $\Z$,
    $\Z_{i,j}=\Z_{i,k}$ when $d(j)=d(k)$.
    \begin{align}
        (\Z\V)_{i,j} = \Z_{i,j} \sum_{r=1}^N \V_{r,j}.
    \end{align}
    Since $\V$ is column stochastic:
    \begin{align}
        (\Z\V)_{i,j} = \Z_{i,j}.
    \end{align}
\end{proof}

\begin{restatable}{prop}{TkA} \label{TkA.prop}
    Let $\A = \bm a \one^T$. 
    Given our definition of $\T_k$ in (\ref{Tk.def}), 
    \begin{align}
        \T_k \A = \A \T_k = \A
    \end{align}
    for all $k$.
\end{restatable}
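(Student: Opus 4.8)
The plan is to reduce the claim to the spectral properties of $\Z$ and $\V$ established in Proposition~\ref{Za.prop}, and then to check the three cases in the definition~(\ref{Tk.def}) of $\T_k$ one at a time. The key structural observation is that $\A = \bm a \one^T$ has every column equal to $\bm a$, so for any $N \times N$ matrix $\bm M$ we have $\bm M \A = (\bm M \bm a)\one^T$ and $\A \bm M = \bm a (\one^T \bm M)$. Consequently $\bm M \A = \A$ holds precisely when $\bm a$ is a right eigenvector of $\bm M$ with eigenvalue $1$, and $\A \bm M = \A$ holds precisely when $\one_N^T$ is a left eigenvector of $\bm M$ with eigenvalue $1$.

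Next I would invoke Proposition~\ref{Za.prop}, which asserts exactly that both $\Z$ and $\V$ admit $\bm a$ as a right eigenvector and $\one_N^T$ as a left eigenvector, each with eigenvalue $1$ (equivalently, column stochasticity gives the left-eigenvector statement, and the preservation of the weighted average gives the right-eigenvector statement). Combined with the observation above, this immediately yields $\Z \A = \A \Z = \A$ and $\V \A = \A \V = \A$.

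Finally I would dispatch the cases of~(\ref{Tk.def}): if $k \bmod q\tau = 0$ then $\T_k = \Z$; if $k \bmod \tau = 0$ but $k \bmod q\tau \neq 0$ then $\T_k = \V$; and otherwise $\T_k = \I$, for which $\I\A = \A\I = \A$ is trivial. In every case $\T_k \A = \A = \A \T_k$, which is the claimed identity. There is no genuine obstacle here: the substance lies entirely in Proposition~\ref{Za.prop}, and this proposition merely repackages those eigenvector facts in the commuting form $\T_k \A = \A \T_k = \A$ that is used later to strip $\T_k$ off the averaged recursion, as in~(\ref{Tgone.eq}).
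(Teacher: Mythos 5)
Your proposal is correct and follows essentially the same route as the paper's proof: check the three cases $\I$, $\V$, $\Z$ of~(\ref{Tk.def}) and use the facts from Proposition~\ref{Za.prop} that $\bm a$ is a right eigenvector and $\one_N^T$ a left eigenvector of $\Z$ and $\V$ with eigenvalue $1$, via $\Z\A = \Z\bm a\one^T = \bm a\one^T$ and $\A\Z = \bm a\one^T\Z = \bm a\one^T$. Your explicit observation that $\bm M\A = (\bm M\bm a)\one^T$ and $\A\bm M = \bm a(\one^T\bm M)$ is exactly the computation the paper performs inline.
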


\begin{proof}
We prove each of the three cases of $\T_k$:
$\I$, $\V$, and $\Z$. Clearly, $\I \A = \A \I = \A$.
It is left to prove $\V \A = \A \V = \A$ and 
$\Z \A = \A \Z = \A$.

We can see that $\Z \A = \A$ 
since $\bm a$ is a right eigenvector of $\Z$ with eigenvalue $1$:
$\Z \A = \Z \bm a \one^T = \bm a \one^T = \A$.
Similarly, we can see that $\A \Z = \bm a \one^T \Z = \bm a \one^T
= \A$ as $\one^T$ is a left eigenvector of $\Z$. The same holds for
$\V$.
\end{proof}

\begin{lemma} \label{lemma41}
    Under Assumptions~\ref{unbias.asp} 
    and \ref{variance.asp}, the variance of the weighted average
    stochastic gradient is bounded as follows:
\begin{align}
    \mathbb{E}_k[\Vert \Gk - \Hk \Vert^2] 
    &\leq \sum_{i=1}^N \bm a_i^2 
        \left[\left(\bm p_{i}(\beta-1)+1\right)
        \left\Vert \nabla F(\bm x_k^{(i)}) \right\Vert^2 + \bm p_{i}\sigma^2 \right] 
        \nonumber \\ &~~~~~~~~~~~~~~~~~~~~~~~~~~~~~
        + \sum_{l=1}^N \sum_{j \neq l}^N \bm a_l \bm a_j
        (1-\bm p_{j})(1-\bm p_{l})
        \nabla F(\bm x_k^{(j)})^T\nabla F(\bm x_k^{(l)}) .
\end{align}
\end{lemma}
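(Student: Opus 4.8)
The plan is to expand the squared norm directly and separate the ``diagonal'' ($i=j$) and ``off-diagonal'' ($i\neq j$) contributions, treating the two with different tools. Writing $\Gk-\Hk=\sum_{i=1}^N \bm a_i\bigl(\bm g_k^{(i)}-\nabla F(\bm x_k^{(i)})\bigr)$ and expanding the square, I would obtain
\begin{align}
\Vert \Gk-\Hk\Vert^2 = \sum_{i=1}^N \bm a_i^2\bigl\Vert \bm g_k^{(i)}-\nabla F(\bm x_k^{(i)})\bigr\Vert^2 + \sum_{l=1}^N\sum_{j\neq l}^N \bm a_l\bm a_j\bigl(\bm g_k^{(j)}-\nabla F(\bm x_k^{(j)})\bigr)^{\!T}\bigl(\bm g_k^{(l)}-\nabla F(\bm x_k^{(l)})\bigr),
\end{align}
and then apply $\mathbb{E}_k$ term by term. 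Under the conditioning implicit in $\mathbb{E}_k=\mathbb{E}_{\Theta_k,\Xi_k|X_k}$, every iterate $\bm x_k^{(i)}$ is deterministic, so $\nabla F(\bm x_k^{(i)})$ may be treated as a constant throughout.

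For the diagonal sum, I would invoke the per-worker variance estimate \eqref{asp5implication.eq} established in the preliminaries, $\mathbb{E}_k\Vert \bm g_k^{(i)}-\nabla F(\bm x_k^{(i)})\Vert^2 \le \bigl(\bm p_i(\beta-1)+1\bigr)\Vert\nabla F(\bm x_k^{(i)})\Vert^2+\bm p_i\sigma^2$, multiply by $\bm a_i^2$ and sum over $i$; this gives exactly the first line of the claimed bound.

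For the off-diagonal sum I would use that distinct workers sample mini-batches and draw their Bernoulli indicators independently, so that conditioned on $X_k$ the factors $\bm g_k^{(j)}-\nabla F(\bm x_k^{(j)})$ and $\bm g_k^{(l)}-\nabla F(\bm x_k^{(l)})$ are independent and the expectation of their inner product factors. By \eqref{gtoF.eq} we have $\mathbb{E}_k[\bm g_k^{(i)}]=\bm p_i\nabla F(\bm x_k^{(i)})$, hence $\mathbb{E}_k[\bm g_k^{(i)}-\nabla F(\bm x_k^{(i)})]=-(1-\bm p_i)\nabla F(\bm x_k^{(i)})$, so each cross term equals $(1-\bm p_j)(1-\bm p_l)\nabla F(\bm x_k^{(j)})^{\!T}\nabla F(\bm x_k^{(l)})$ (the two sign factors cancel). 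Summing over $j\neq l$ with weights $\bm a_l\bm a_j$ produces the second line, and adding the two contributions gives the lemma.

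The main obstacle is not any hard estimate but the bookkeeping in the off-diagonal step: one has to be careful about which sources of randomness $\mathbb{E}_k$ averages over, so that $\nabla F(\bm x_k^{(i)})$ is correctly treated as constant, and one must apply the pairwise independence of the workers' mini-batch and indicator randomness precisely where the expectation is factored. Everything else is a routine expansion of a square together with the already-proved identities \eqref{gtoF.eq} and \eqref{asp5implication.eq}.
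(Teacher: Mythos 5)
Your proposal is correct and follows essentially the same route as the paper: expand the square into diagonal and cross terms, bound the diagonal terms with (\ref{asp5implication.eq}), and evaluate the cross terms exactly using the conditional independence of distinct workers' mini-batch and Bernoulli randomness together with $\mathbb{E}_k[\bm g_k^{(i)}]=\bm p_i\nabla F(\bm x_k^{(i)})$. The only cosmetic difference is that you factor $\mathbb{E}_k$ of the centered inner product directly, whereas the paper expands it into four expectations before recombining; the two computations are identical.
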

\begin{proof}
\begin{align}
    &\mathbb{E}_k[\Vert \Gk - \Hk \Vert^2] 
    = \mathbb{E}_k \left[ \left\Vert \sum_{i=1}^N \bm a_i(\bm g_k^{(i)} 
    - \nabla F(\bm x_k)) \right\Vert^2 \right]\\
    &= \mathbb{E}_k\left[
    \sum_{i=1}^N \bm a_i^2 \Vert \bm g_k^{(i)} - \nabla F(\bm x_k) \Vert^2
    + \sum_{l=1}^N \sum_{j \neq l}^N \bm a_l \bm a_j \left\langle \bm g_k^{(j)} - \nabla F(\bm x_k^{(j)}), 
    \bm g_k^{(l)} - \nabla F(\bm x_k^{(l)}) \right\rangle \right]\\
    &= 
    \sum_{i=1}^N \bm a_i^2 \mathbb{E}_k\Vert \bm g_k^{(i)} - \nabla F(\bm x_k) \Vert^2
    + \sum_{l=1}^N \sum_{j \neq l}^N \bm a_l \bm a_j
    \mathbb{E}_k\left[\left\langle \bm g_k^{(j)} - \nabla F(\bm x_k^{(j)}), 
    \bm g_k^{(l)} - \nabla F(\bm x_k^{(l)}) \right\rangle \right] .
    \label{needcrosslemma1a.eq}
\end{align}

Looking at the cross-terms in (\ref{needcrosslemma1a.eq}):
\begin{align}
\mathbb{E}_k\Bigg[\Big\langle \bm g_k^{(j)} - \nabla F(\bm x_k^{(j)}), 
    \bm g_k^{(l)} &- \nabla F(\bm x_k^{(l)}) \Big\rangle \Bigg] \nonumber \\
                  &= \mathbb{E}_k\left[(\bm g_k^{(j)})^T\bm g_k^{(l)}\right] 
    - \mathbb{E}_k\left[(\bm g_k^{(j)})^T\nabla F(\bm x_k^{(l)})\right]
    \nonumber \\ &~~~- 
    \mathbb{E}_k\left[\nabla F(\bm x_k^{(j)})^T\bm g_k^{(l)}\right]
    + \nabla F(\bm x_k^{(j)})^T\nabla F(\bm x_k^{(j)}) \\
    &= \bm p_{j}\bm p_{l}\nabla F(\bm x_k^{(j)})^T\nabla F(\bm x_k^{(l)})
    - \bm p_{j} \nabla F(\bm x_k^{(j)})^T\nabla F(\bm x_k^{(l)}) \nonumber \\
    &~~~
    - \bm p_{l} \nabla F(\bm x_k^{(j)})^T\nabla F(\bm x_k^{(l)})
    + \nabla F(\bm x_k^{(j)})^T\nabla F(\bm x_k^{(j)}) \\
    &= (1-\bm p_{j})(1-\bm p_{l})\nabla F(\bm x_k^{(j)})^T\nabla F(\bm x_k^{(l)}) .
    \label{crosslemma1a.eq}
\end{align}

Plugging (\ref{crosslemma1a.eq}) into (\ref{needcrosslemma1a.eq}) we have:
\begin{align}
    \mathbb{E}_k[\Vert \Gk - \Hk \Vert^2] 
    &= \sum_{i=1}^N \bm a_i^2 \mathbb{E}_k
        \Vert \bm g_k^{(i)} - \nabla F(\bm x_k) \Vert^2 
        \nonumber \\ &~~~~~~~~~~~~~~~~~~~~~~~~~~~
        + \sum_{l=1}^N \sum_{j \neq l}^N \bm a_l \bm a_j 
        (1-\bm p_{j})(1-\bm p_{l})
        \nabla F(\bm x_k^{(j)})^T\nabla F(\bm x_k^{(l)}) \\
    &\leq \sum_{i=1}^N \bm a_i^2 
        \left[\left(\bm p_{i}(\beta-1)+1\right)
        \left\Vert \nabla F(\bm x_k^{(i)}) \right\Vert^2 + \bm p_{i}\sigma^2 \right] 
        \nonumber \\ &~~~~~~~~~~~~~~~~~~~~~~~~~~~
        + \sum_{l=1}^N \sum_{j \neq l}^N \bm a_l \bm a_j
        (1-\bm p_{j})(1-\bm p_{l})
        \nabla F(\bm x_k^{(j)})^T\nabla F(\bm x_k^{(l)}) \label{aspbound.eq}
\end{align}
where (\ref{aspbound.eq}) follows from Assumption~\ref{variance.asp} and
(\ref{asp5implication.eq}).
\end{proof}

\begin{lemma} \label{lemma4}
    Under Assumptions~\ref{unbias.asp} 
    and \ref{variance.asp}, the squared norm of 
    the stochastic gradients is bounded by:
\begin{align}
    \mathbb{E}_k \left[\Vert \Gk \Vert^2 \right] 
    &\leq \sum_{i=1}^N  
        \left[\bm a_i^2\left(\bm p_{i}(\beta+1)-\bm p_i^2\right) + \bm a_i\bm p_i^2\right]
        \left\Vert \nabla F(\bm x_k^{(i)}) \right\Vert^2 + \sum_{i=1}^N \bm a_i^2\bm p_{i}\sigma^2  
\end{align}
\end{lemma}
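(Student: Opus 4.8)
The plan is to expand the squared norm of the weighted average stochastic gradient into a diagonal part and an off-diagonal (cross-term) part, bound each piece using the moment identities already established in the preliminaries, and then collapse the cross terms via a single application of Jensen's inequality. Concretely, writing $\Gk = \sum_{i=1}^N \bm a_i \bm g_k^{(i)}$ and expanding the square,
\[
\mathbb{E}_k\left[\Vert \Gk \Vert^2\right] = \sum_{i=1}^N \bm a_i^2\, \mathbb{E}_k\!\left[\Vert \bm g_k^{(i)}\Vert^2\right] + \sum_{l=1}^N \sum_{j\neq l}^N \bm a_l \bm a_j\, \mathbb{E}_k\!\left[(\bm g_k^{(j)})^T \bm g_k^{(l)}\right].
\]
This reduces the problem to controlling the individual second moments and the pairwise inner products.

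For the diagonal terms, I would use $\bm g_k^{(i)} = \theta_k^i g(\xi_k^{(i)})$ with $\theta_k^i$ Bernoulli$(\bm p_i)$ and independent of the mini-batch, so $\mathbb{E}_k\Vert \bm g_k^{(i)}\Vert^2 = \bm p_i\, \mathbb{E}_k\Vert g(\bm x_k^{(i)})\Vert^2$; then Assumptions~\ref{unbias.asp} and \ref{variance.asp} (equivalently, starting from the bias--variance split used to derive (\ref{asp5implication.eq}) together with (\ref{gtoF.eq})) give $\mathbb{E}_k\Vert \bm g_k^{(i)}\Vert^2 \leq \bm p_i(\beta+1)\Vert\nabla F(\bm x_k^{(i)})\Vert^2 + \bm p_i\sigma^2$. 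For the cross terms with $j\neq l$, independence of the per-worker randomness and unbiasedness give, exactly as in the preliminaries, $\mathbb{E}_k[(\bm g_k^{(j)})^T \bm g_k^{(l)}] = \bm p_j\bm p_l\, \nabla F(\bm x_k^{(j)})^T \nabla F(\bm x_k^{(l)})$.

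The final step is to rewrite the off-diagonal sum by completing the square,
\[
\sum_{l=1}^N \sum_{j\neq l}^N \bm a_l\bm a_j\bm p_j\bm p_l\, \nabla F(\bm x_k^{(j)})^T\nabla F(\bm x_k^{(l)}) = \Big\Vert \sum_{i=1}^N \bm a_i\bm p_i\nabla F(\bm x_k^{(i)})\Big\Vert^2 - \sum_{i=1}^N \bm a_i^2\bm p_i^2\Vert\nabla F(\bm x_k^{(i)})\Vert^2,
\]
and then to bound the completed square using convexity of $\Vert\cdot\Vert^2$: since $\bm a$ is a probability vector ($\sum_i \bm a_i = 1$ because $w_{tot} = \sum_i w^{(i)}$), Jensen's inequality yields $\big\Vert \sum_i \bm a_i(\bm p_i\nabla F(\bm x_k^{(i)}))\big\Vert^2 \leq \sum_i \bm a_i\bm p_i^2\Vert\nabla F(\bm x_k^{(i)})\Vert^2$. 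Substituting this and the diagonal bound back in and collecting the coefficient of each $\Vert\nabla F(\bm x_k^{(i)})\Vert^2$ --- namely $\bm a_i^2\bm p_i(\beta+1) - \bm a_i^2\bm p_i^2 + \bm a_i\bm p_i^2$ --- together with the $\sum_i \bm a_i^2\bm p_i\sigma^2$ noise term produces the claimed inequality. I expect the only genuinely delicate point to be this last bookkeeping step: recognizing that the $-\sum_i \bm a_i^2\bm p_i^2\Vert\nabla F(\bm x_k^{(i)})\Vert^2$ term left over from completing the square is precisely what makes the Jensen bound yield the asymmetric $\bm a_i^2\bm p_i^2$ versus $\bm a_i\bm p_i^2$ split that appears in the statement; the moment computations themselves are routine given the preliminaries.
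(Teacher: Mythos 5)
Your proof is correct and yields exactly the paper's bound, but it reaches it by a noticeably more direct route. The paper first applies the bias--variance split $\mathbb{E}_k\Vert \Gk\Vert^2 = \mathbb{E}_k\Vert \Gk - \mathbb{E}_k[\Gk]\Vert^2 + \Vert\mathbb{E}_k[\Gk]\Vert^2$, rewrites the centered term around $\Hk$ plus a $(1-\bm p_i)$ correction, works through several explicit cross-term computations ($CR$, $CR1$, $CR2$), and then invokes its Lemma~\ref{lemma41} (the bound on $\mathbb{E}_k\Vert\Gk-\Hk\Vert^2$) before finishing with Jensen. You instead expand $\Vert\Gk\Vert^2$ directly into diagonal and off-diagonal parts, bound the diagonal second moments via $\mathbb{E}_k\Vert \bm g_k^{(i)}\Vert^2 \leq \bm p_i(\beta+1)\Vert\nabla F(\bm x_k^{(i)})\Vert^2 + \bm p_i\sigma^2$, use the exact cross-moment identity from the preliminaries, complete the square, and apply the same Jensen step. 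The two arguments use the same two inequalities in the end (the per-worker variance bound from Assumption~\ref{variance.asp} and Jensen on $\Vert\sum_i \bm a_i\bm p_i\nabla F(\bm x_k^{(i)})\Vert^2$), so the resulting coefficients $\bm a_i^2(\bm p_i(\beta+1)-\bm p_i^2)+\bm a_i\bm p_i^2$ agree identically; your version buys brevity and avoids the detour through $\Hk$ and Lemma~\ref{lemma41}, while the paper's structure is motivated by reusing Lemma~\ref{lemma41}, which it needs separately elsewhere. The one point worth making explicit in your write-up is that the cross-moment identity $\mathbb{E}_k[(\bm g_k^{(j)})^T\bm g_k^{(l)}]=\bm p_j\bm p_l\,\nabla F(\bm x_k^{(j)})^T\nabla F(\bm x_k^{(l)})$ relies on independence of the Bernoulli activations and mini-batches across workers, which the paper also assumes implicitly.
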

\begin{proof}
\begin{align}
    &\mathbb{E}_k \left[\Vert \Gk \Vert^2 \right] 
    =\mathbb{E}_k \left[\Vert \Gk - \mathbb{E}_k [ \Gk ] \Vert^2 \right]
         +\Vert\mathbb{E}_k [\Gk] \Vert^2 \\
    &=\mathbb{E}_k \left[\Vert \Gk 
        - \sum_{i=1}^N \bm a_i \bm p_i \nabla F(\bm x_k^{(i)}) \Vert^2 \right]
        + \left\Vert \sum_{i=1}^N \bm a_i \bm p_i \nabla F(\bm x_k^{(i)}) \right\Vert^2 \\
    &=\mathbb{E}_k \left[\left\Vert \Gk - \sum_{i=1}^N \bm a_i \nabla F(\bm x_k^{(i)}) 
        + \sum_{i=1}^N \bm a_i (1-\bm p_i) \nabla F(\bm x_k^{(i)}) \right\Vert^2 \right]
        + \left\Vert \sum_{i=1}^N \bm a_i \bm p_i \nabla F(\bm x_k^{(i)}) \right\Vert^2 .
        \label{needGk.eq}
\end{align}

Applying the definition of $\Gk$ to (\ref{needGk.eq}) we get:
\begin{align}
    &\mathbb{E}_k \left[\Vert \Gk \Vert^2 \right] 
        \nonumber \\ 
    &=\mathbb{E}_k\left[\left\Vert  \sum_{i=1}^N \left[\bm a_i\bm g_k^{(i)} - \bm a_i\nabla F(\bm x_k^{(i)})
        +  \bm a_i (1-\bm p_i) \nabla F(\bm x_k^{(i)}) \right] \right\Vert^2  \right]
        + \left\Vert \sum_{i=1}^N \bm a_i \bm p_i \nabla F(\bm x_k^{(i)}) \right\Vert^2 \\
    &=\mathbb{E}_k\left[ \sum_{i=1}^N \left\Vert \bm a_i\bm g_k^{(i)} - \bm a_i\nabla F(\bm x_k^{(i)})
        +  \bm a_i (1-\bm p_i) \nabla F(\bm x_k^{(i)}) \right\Vert^2   \right]
        + \left\Vert \sum_{i=1}^N \bm a_i \bm p_i \nabla F(\bm x_k^{(i)}) \right\Vert^2
        \nonumber \\ &~~~
        + \mathbb{E}_k \Bigg[\sum_{j=1}^N \sum_{l=1, l \neq j}^N \bm a_l \bm a_j 
        \Big\langle (\bm g_k^{(j)} - \nabla F(\bm x_k^{(j)})) + (1-\bm p_j) \nabla F(\bm x_k^{(j)}),
        \nonumber \\ &~~~~~~~~~~~~~~~~~~~~~~~~~~~~~~~~~~~~~~~~~~~~~~~~~~~~~~~~~~~~~
        (\bm g_k^{(l)} - \nabla F(\bm x_k^{(l)})) + (1-\bm p_l) \nabla F(\bm x_k^{(l)}) \Big\rangle \Bigg] .
        \label{needcrosslemma1.eq}
\end{align}

Let the cross-terms in (\ref{needcrosslemma1.eq}) be
\begin{align}
    CR
    &=
        \mathbb{E}_k \Bigg[\sum_{j=1}^N \sum_{l=1, l \neq j}^N \bm a_l \bm a_j 
        \Big\langle (\bm g_k^{(j)} - \nabla F(\bm x_k^{(j)})) + (1-\bm p_j) \nabla F(\bm x_k^{(j)}),
        \nonumber \\ &~~~~~~~~~~~~~~~~~~~~~~~~~~~~~~~~~~~~~~~~~~~~~~~~~~~~~~~~~~~~~
        (\bm g_k^{(l)} - \nabla F(\bm x_k^{(l)})) + (1-\bm p_l) \nabla F(\bm x_k^{(l)}) \Big\rangle \Bigg] .
\end{align}
We can simplify $CR$ as follows:
\begin{align}
    CR &=    
        \sum_{j=1}^N \sum_{l=1, l \neq j}^N \bm a_l \bm a_j \mathbb{E}_k \Big[
        (\bm g_k^{(j)} - \nabla F(\bm x_k^{(j)}))^T (\bm g_k^{(l)} - \nabla F(\bm x_k^{(l)}))
        \nonumber \\ &~~~
        + (\bm g_k^{(j)} - \nabla F(\bm x_k^{(j)}))^T(1-\bm p_l) \nabla F(\bm x_k^{(l)}) 
        + (1-\bm p_j) \nabla F(\bm x_k^{(j)})^T(\bm g_k^{(l)} - \nabla F(\bm x_k^{(l)}))
        \nonumber \\ &~~~~~~~~~~~~~~~~~~~~~~~~~~~~~~~~~~~~~~~~~~~~~~~~~~~~~~~~~~~~~
        + (1-\bm p_j)(1-\bm p_l) \nabla F(\bm x_k^{(j)})^T\nabla F(\bm x_k^{(l)})^T \Big]\\
    &=
        \sum_{j=1}^N \sum_{l=1, l \neq j}^N \bm a_l \bm a_j \Big[\mathbb{E}_k \left[
        (\bm g_k^{(j)} - \nabla F(\bm x_k^{(j)}))^T (\bm g_k^{(l)} - \nabla F(\bm x_k^{(l)}))\right]
        \nonumber \\ &~~~~~~~~~~~~~~~~~~~~~~~~~~~~~~~~~~~~~~~~~~~~~~~~~~~~~~
        + (\mathbb{E}_k[\bm g_k^{(j)}] - \nabla F(\bm x_k^{(j)}))^T(1-\bm p_l) \nabla F(\bm x_k^{(l)})  
        \nonumber \\ &~~~~~~~~~~~~~~~~~~~~~~~~~~~~~~~~~~~~~~~~~~~~~~~~~~~~~~
        + (1-\bm p_j) \nabla F(\bm x_k^{(j)})^T(\mathbb{E}_k[\bm g_k^{(l)}] - \nabla F(\bm x_k^{(l)}))
        \nonumber \\ &~~~~~~~~~~~~~~~~~~~~~~~~~~~~~~~~~~~~~~~~~~~~~~~~~~~~~~
        + (1-\bm p_j)(1-\bm p_l) \nabla F(\bm x_k^{(j)})^T\nabla F(\bm x_k^{(l)})^T \Big] .
        \label{needunbias.eq}
\end{align}

Applying Assumption~\ref{unbias.asp} to (\ref{needunbias.eq}), we get:
\begin{align}
    CR &=    
        \sum_{j=1}^N \sum_{l=1, l \neq j}^N \bm a_l \bm a_j \Big[\mathbb{E}_k \left[
        (\bm g_k^{(j)} - \nabla F(\bm x_k^{(j)}))^T (\bm g_k^{(l)} - \nabla F(\bm x_k^{(l)}))\right]
        \nonumber \\ &~~~
        + (\bm p_j-1)(1-\bm p_l)\nabla F(\bm x_k^{(j)})^T\nabla F(\bm x_k^{(l)}) 
        + (\bm p_l-1)(1-\bm p_j)\nabla F(\bm x_k^{(j)})^T\nabla F(\bm x_k^{(l)}) 
        \nonumber \\ &~~~~~~~~~~~~~~~~~~~~~~~~~~~~~~~~~~~~~~~~~~~~~~~~~~~~~~~~~~~~~~~~~~
        + (1-\bm p_j)(1-\bm p_l)\nabla F(\bm x_k^{(j)})^T\nabla F(\bm x_k^{(l)}) \Big] \\
    &=
        \sum_{j=1}^N \sum_{l=1, l \neq j}^N \bm a_l \bm a_j \Big[\mathbb{E}_k \left[
        (\bm g_k^{(j)} - \nabla F(\bm x_k^{(j)}))^T (\bm g_k^{(l)} - \nabla F(\bm x_k^{(l)}))\right]
        \nonumber \\ &~~~~~~~~~~~~~~~~~~~~~~~~~~~~~~~~~~~~~~~~~~~~~~~~~~~~~~~~~~~~~~~~~~
        - (1-\bm p_l)(1-\bm p_j)\nabla F(\bm x_k^{(j)})^T\nabla F(\bm x_k^{(l)}) \Big] .
        \label{crosslemma1.eq}
\end{align}

Applying (\ref{crosslemma1.eq}) to (\ref{needcrosslemma1.eq}) we have:
\begin{align}
    \mathbb{E}_k \left[\Vert \Gk \Vert^2 \right] 
    &=\mathbb{E}_k\left[ \sum_{i=1}^N \left\Vert \bm a_i\bm g_k^{(i)} - \bm a_i\nabla F(\bm x_k^{(i)})
        +  \bm a_i (1-\bm p_i) \nabla F(\bm x_k^{(i)}) \right\Vert^2   \right]
        \nonumber \\ &~~~
        + \sum_{j=1}^N \sum_{l=1, l \neq j}^N \bm a_l \bm a_j \Big[\mathbb{E}_k \left[
        (\bm g_k^{(j)} - \nabla F(\bm x_k^{(j)}))^T (\bm g_k^{(l)} - \nabla F(\bm x_k^{(l)}))\right]
        \nonumber \\ &~~~~~~~~~~~~~~~~~~~~~~~~~~~~~~~~~~~~~~~~~~~~~~~
        - (1-\bm p_l)(1-\bm p_j)\nabla F(\bm x_k^{(j)})^T\nabla F(\bm x_k^{(l)}) \Big] 
        \nonumber \\ &~~~~~~~~~~~~~~~~~~~~~~~~~~~~~~~~~~~~~~~~~~~~~~~~~~~~~~~~~~~~~~~~~~~~~~~~~~~
        + \left\Vert \sum_{i=1}^N \bm a_i \bm p_i \nabla F(\bm x_k^{(i)}) \right\Vert^2.
        \label{Gkexpand}
\end{align}
Expanding the first term in (\ref{Gkexpand}) we have:

\begin{align}
    \mathbb{E}_k \left[\Vert \Gk \Vert^2 \right] 
    &=\mathbb{E}_k\left[ \sum_{i=1}^N 
        \left\Vert \bm a_i\bm g_k^{(i)} - \bm a_i\nabla F(\bm x_k^{(i)}) \right\Vert^2 \right]
        + \sum_{i=1}^N \bm a_i^2 (1-\bm p_i)^2 \left\Vert \nabla F(\bm x_k^{(i)}) \right\Vert^2
    \nonumber \\ &~~~
        + \sum_{i=1}^N \underbrace{\mathbb{E}_k\left[         
        \left\langle \bm a_i\bm g_k^{(i)} - \bm a_i\nabla F(\bm x_k^{(i)}),
        \bm a_i (1-\bm p_i) \nabla F(\bm x_k^{(i)}) \right\rangle \right]}_{CR1}
    \nonumber \\ &~~~~~
        + \sum_{i=1}^N \underbrace{\mathbb{E}_k\left[         
        \left\langle \bm a_i (1-\bm p_i) \nabla F(\bm x_k^{(i)}), 
        \bm a_i\bm g_k^{(i)} - \bm a_i\nabla F(\bm x_k^{(i)})\right\rangle \right]}_{CR2}
    \nonumber \\ &~~~~~~~~~
        + \sum_{j=1}^N \sum_{l=1, l \neq j}^N \bm a_l \bm a_j \Big[\mathbb{E}_k \left[
        (\bm g_k^{(j)} - \nabla F(\bm x_k^{(j)}))^T (\bm g_k^{(l)} - \nabla F(\bm x_k^{(l)}))\right]
    \nonumber \\ &~~~~~~~~~~~~~~~~~~~~~~~~~~~~~~~~~~~~~~~~~~~~~
        - (1-\bm p_l)(1-\bm p_j)\nabla F(\bm x_k^{(j)})^T\nabla F(\bm x_k^{(l)}) \Big] 
        \nonumber \\ &~~~~~~~~~~~~~~~~~~~~~~~~~~~~~~~~~~~~~~~~~~~~~~~~~~~~~~~~~~~~~~~~~~~~~~~~~~~
        + \left\Vert \sum_{i=1}^N \bm a_i \bm p_i \nabla F(\bm x_k^{(i)}) \right\Vert^2.
        \label{need2cross.eq}
\end{align}

We simplify $CR1$:
\begin{align}
    CR1 &= 
   	\left[ \left\langle \bm a_i \bm p_i \mathbb{E}_k (g(\bm x_k^{(i)}))^T - \bm a_i \nabla F(\bm x_k^{(i)}), \bm a_i(1- \bm p_i) \nabla F(\bm x_k^{(i)})  \right\rangle
    \right]\\
    &= 
   	\left[ \left\langle \bm a_i(\bm p_i -1) \nabla F(\bm x_k^{(i)}), \bm a_i(1- \bm p_i) \nabla F(\bm x_k^{(i)})  \right\rangle
    \right]\\
    &= 
     - \bm a_i^2(1-\bm p_i)^2\left\Vert \nabla F(\bm x_k^{(i)}) \right\Vert^2.
     \label{CR1.eq}
\end{align}

%We simplify $CR1$:
%\begin{align}
%    CR1 &= 
%    \mathbb{E}_k\left[         
%    \bm a_i^2 (1-\bm p_i) (\bm g_k^{(i)})^T  \nabla F(\bm x_k^{(i)}) 
%    - \bm a_i^2  (1-\bm p_i)\left\Vert \nabla F(\bm x_k^{(i)}) \right\Vert^2
%    \right]\\
%    &= 
%    \bm a_i^2(1-\bm p_i) \mathbb{E}_k\left[\bm g_k^{(i)}\right]^T \nabla F(\bm x_k^{(i)}) 
%    - \bm a_i^2  (1-\bm p_i)\left\Vert \nabla F(\bm x_k^{(i)}) \right\Vert^2
%     \\
%    &= 
%    \bm a_i^2\bm p_i(1-\bm p_i)\left\Vert \nabla F(\bm x_k^{(i)}) \right\Vert^2 
%    - \bm a_i^2 (1-\bm p_i)\left\Vert \nabla F(\bm x_k^{(i)}) \right\Vert^2
%     \\
%    &= 
%     - \bm a_i^2(1-\bm p_i)^2\left\Vert \nabla F(\bm x_k^{(i)}) \right\Vert^2.
%     \label{CR1.eq}
%\end{align}
Similarly, for $CR2$:
\begin{align}
    CR2
    &= 
     - \bm a_i^2(1-\bm p_i)^2\left\Vert \nabla F(\bm x_k^{(i)}) \right\Vert^2 .
     \label{CR2.eq}
\end{align}

Plugging (\ref{CR1.eq}) and (\ref{CR2.eq}) back into (\ref{need2cross.eq}):
\begin{align}
    \mathbb{E}_k \left[\Vert \Gk \Vert^2 \right] 
    &=\mathbb{E}_k\left[ \sum_{i=1}^N 
        \left\Vert \bm a_i\bm g_k^{(i)} - \bm a_i\nabla F(\bm x_k^{(i)}) \right\Vert^2 \right]
        - \sum_{i=1}^N \bm a_i^2 (1-\bm p_i)^2 \left\Vert \nabla F(\bm x_k^{(i)}) \right\Vert^2
    \nonumber \\ &~~~
        + \sum_{j=1}^N \sum_{l=1, l \neq j}^N \bm a_l \bm a_j \Big[\mathbb{E}_k \left[
        (\bm g_k^{(j)} - \nabla F(\bm x_k^{(j)}))^T (\bm g_k^{(l)} - \nabla F(\bm x_k^{(l)}))\right]
    \nonumber \\ &~~~~~~~~~~~~~~~~~~~~~~~~~~~~~~~~~~~~~~~~~~~~~~~~~~~~~
        - (1-\bm p_l)(1-\bm p_j)\nabla F(\bm x_k^{(j)})^T\nabla F(\bm x_k^{(l)}) \Big] 
    \nonumber \\ &~~~~~~~~~~~~~~~~~~~~~~~~~~~~~~~~~~~~~~~~~~~~~~~~~~~~~~~~~~~~~~~~~~~~~~~~~~~~~~~~~~
        + \left\Vert \sum_{i=1}^N \bm a_i \bm p_i \nabla F(\bm x_k^{(i)}) \right\Vert^2.
\end{align}

We can simplify by observing that: 
\begin{align}
    \mathbb{E}_k \left[\Vert \Gk - \Hk \Vert^2 \right]
    &=\mathbb{E}_k\left[ \sum_{i=1}^N \left\Vert \bm a_i\bm g_k^{(i)} 
        - \bm a_i\nabla F(\bm x_k^{(i)})\right\Vert^2\right]
        \nonumber \\ &~~~
        + \sum_{j=1}^N \sum_{l=1, l \neq j}^N \bm a_l \bm a_j \mathbb{E}_k \Big[
        (\bm g_k^{(j)} - \nabla F(\bm x_k^{(j)}))^T (\bm g_k^{(l)} - \nabla F(\bm x_k^{(l)})) \Big]
\end{align}
which gives us:
\begin{align}
    \mathbb{E}_k \left[\Vert \Gk \Vert^2 \right] 
    &=\mathbb{E}_k \left[\Vert \Gk - \Hk \Vert^2 \right]
        - \sum_{i=1}^N \bm a_i^2 (1-\bm p_i)^2 \left\Vert \nabla F(\bm x_k^{(i)}) \right\Vert^2
        + \left\Vert \sum_{i=1}^N \bm a_i \bm p_i \nabla F(\bm x_k^{(i)}) \right\Vert^2
    \nonumber \\ &~~~~~~~~~~~~~~~~~~~~~~~~~~~~~~~~~~
        - \sum_{j=1}^N \sum_{l=1, l \neq j}^N \bm a_l \bm a_j 
        (1-\bm p_l)(1-\bm p_j)\nabla F(\bm x_k^{(j)})^T\nabla F(\bm x_k^{(l)}) 
        \label{needlemma1a}
\end{align}
Applying Lemma~\ref{lemma41} to (\ref{needlemma1a}):
\begin{align}
    \mathbb{E}_k \left[\Vert \Gk \Vert^2 \right] 
    &\leq \sum_{i=1}^N \bm a_i^2 
        \left[\left(\bm p_{i}(\beta-1)+1\right)
        \left\Vert \nabla F(\bm x_k^{(i)}) \right\Vert^2 + \bm p_{i}\sigma^2 \right] 
        \nonumber \\ &~~~~~~~~~~~~~~~~~~~~~~~~~~~~~
        - \sum_{i=1}^N \bm a_i^2 (1-\bm p_i)^2 \left\Vert \nabla F(\bm x_k^{(i)}) \right\Vert^2 
        + \left\Vert \sum_{i=1}^N \bm a_i \bm p_i \nabla F(\bm x_k^{(i)}) \right\Vert^2 
        \label{lemma4.eq}\\
    &\leq \sum_{i=1}^N \bm a_i^2 
        \left[\left(\bm p_{i}(\beta-1)+1\right)
        \left\Vert \nabla F(\bm x_k^{(i)}) \right\Vert^2 + \bm p_{i}\sigma^2 \right] 
        \nonumber \\ &~~~~~~~~~~~~~~~~~~~~~~~~~~~~~
        - \sum_{i=1}^N \bm a_i^2 (1-\bm p_i)^2 \left\Vert \nabla F(\bm x_k^{(i)}) \right\Vert^2 
        + \sum_{i=1}^N \bm a_i \bm p_i^2 \left\Vert \nabla F(\bm x_k^{(i)}) \right\Vert^2 
        \label{jensen.eq}\\
    &= \sum_{i=1}^N  
        \left[\bm a_i^2\left(\bm p_{i}(\beta+1)-\bm p_i^2\right) + \bm a_i\bm p_i^2\right]
        \left\Vert \nabla F(\bm x_k^{(i)}) \right\Vert^2 + \sum_{i=1}^N \bm a_i^2\bm p_{i}\sigma^2  
\end{align}
where equation (\ref{jensen.eq}) follows from Jensen's inequality.
\end{proof}

\begin{lemma} \label{lemma5}
    Under Assumption~\ref{unbias.asp}, the expected inner product of the batch
    gradient and the weighted average stochastic gradient is equal to:
\begin{align}
    \mathbb{E}_k[\langle \nabla F(\bm u_k), \Gk \rangle ] 
    &=\frac{1}{2}\Vert\nabla F(\bm u_k)\Vert^2 
        + \sum_{i=1}^N\frac{\bm a_i}{2}\left\Vert \bm p_{i}\nabla F(\bm x_k^{(i)}) \right\Vert^2 
        \nonumber \\ &~~~~~~~~~~~~~~~~~~~~~~~~~~~~~~~~~~~~~~~~~~~~~~~~~~~~~~
        - \sum_{i=1}^N\frac{\bm a_i}{2}\left\Vert \nabla F(\bm u_k) 
        - \bm p_{i}\nabla F(\bm x_k^{(i)}) \right\Vert^2 
\end{align}
\end{lemma}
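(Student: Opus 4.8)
The plan is to reduce the expectation to a deterministic expression via unbiasedness and then apply the polarization identity termwise. First I would use the linearity of both the inner product and $\mathbb{E}_k$ together with equation (\ref{gtoF.eq}), i.e.\ $\mathbb{E}_k[\bm g_k^{(i)}] = \bm p_i \nabla F(\bm x_k^{(i)})$, noting that $\nabla F(\bm u_k)$ is deterministic given $\X_k$ since $\bm u_k = \X_k \bm a$. This gives
\begin{align}
\mathbb{E}_k[\langle \nabla F(\bm u_k), \Gk \rangle]
&= \Big\langle \nabla F(\bm u_k), \sum_{i=1}^N \bm a_i \bm p_i \nabla F(\bm x_k^{(i)}) \Big\rangle
= \sum_{i=1}^N \bm a_i \big\langle \nabla F(\bm u_k), \bm p_i \nabla F(\bm x_k^{(i)}) \big\rangle. \nonumber
\end{align}

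Next I would apply the polarization identity $\langle \bm c, \bm d\rangle = \tfrac12(\Vert \bm c\Vert^2 + \Vert \bm d\Vert^2 - \Vert \bm c - \bm d\Vert^2)$ to each summand, with $\bm c = \nabla F(\bm u_k)$ and $\bm d = \bm p_i \nabla F(\bm x_k^{(i)})$, which turns the right-hand side into
\begin{align}
\sum_{i=1}^N \frac{\bm a_i}{2}\Vert\nabla F(\bm u_k)\Vert^2
+ \sum_{i=1}^N \frac{\bm a_i}{2}\Vert\bm p_i \nabla F(\bm x_k^{(i)})\Vert^2
- \sum_{i=1}^N \frac{\bm a_i}{2}\Vert\nabla F(\bm u_k) - \bm p_i \nabla F(\bm x_k^{(i)})\Vert^2. \nonumber
\end{align}

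Finally I would simplify the first sum using $\sum_{i=1}^N \bm a_i = 1$, which holds because $\bm a_i = w^{(i)}/w_{tot}$ with $w_{tot} = \sum_{i \in \M} w^{(i)}$; this collapses $\sum_{i=1}^N \tfrac{\bm a_i}{2}\Vert\nabla F(\bm u_k)\Vert^2$ to $\tfrac12\Vert\nabla F(\bm u_k)\Vert^2$, giving exactly the claimed identity. There is no substantive obstacle here: the statement is an exact algebraic identity, so the only points requiring care are that unbiasedness is invoked conditionally on $\X_k$ (so the gradients $\nabla F(\bm x_k^{(i)})$ and $\nabla F(\bm u_k)$ are constants under $\mathbb{E}_k$) and that the weights $\bm a$ sum to one, which is what lets the cross term be distributed uniformly across the summands.
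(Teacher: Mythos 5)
Your proposal is correct and follows essentially the same route as the paper's proof: apply unbiasedness via (\ref{gtoF.eq}) to replace $\Gk$ by $\sum_i \bm a_i \bm p_i \nabla F(\bm x_k^{(i)})$, expand each summand with the identity $2\bm y^T\bm z = \|\bm y\|^2 + \|\bm z\|^2 - \|\bm y - \bm z\|^2$, and collapse the first sum using $\sum_i \bm a_i = 1$. No gaps.
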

\begin{proof}
\begin{align}
    \mathbb{E}_k[\langle \nabla F(\bm u_k), \Gk \rangle ] 
    &=\mathbb{E}_k\left[\left\langle \nabla F(\bm u_k), 
        \sum_{i=1}^N \bm a_i \bm g_k^{(i)} \right\rangle \right]\\
    &=\left\langle \nabla F(\bm u_k), 
        \sum_{i=1}^N \bm p_{i}\bm a_i\nabla F(\bm x_k^{(i)}) \right\rangle \label{usegtoF.eq} \\
    &=\sum_{i=1}^N \bm a_i\left\langle \nabla F(\bm u_k), 
        \bm p_{i}\nabla F(\bm x_k^{(i)}) \right\rangle \\
    &=\sum_{i=1}^N\frac{\bm a_i}{2}\left[\Vert\nabla F(\bm u_k)\Vert^2 
        + \Vert \bm p_{i}\nabla F(\bm x_k^{(i)}) \Vert^2 
        - \Vert \nabla F(\bm u_k) - \bm p_{i}\nabla F(\bm x_k^{(i)}) \Vert^2 \right] 
        \label{ab.eq} \\
    &=\frac{1}{2}\Vert\nabla F(\bm u_k)\Vert^2 
        + \sum_{i=1}^N\frac{\bm a_i}{2}\left\Vert \bm p_{i}\nabla F(\bm x_k^{(i)}) \right\Vert^2 
        \nonumber \\ &~~~~~~~~~~~~~~~~~~~~~~~~~~~~~~~~~~~~~~~~~~~~~~~~~~~~~~
        - \sum_{i=1}^N\frac{\bm a_i}{2} \left\Vert \nabla F(\bm u_k) 
        - \bm p_{i}\nabla F(\bm x_k^{(i)}) \right\Vert^2 
\end{align}
where (\ref{usegtoF.eq}) follows from (\ref{gtoF.eq}), and (\ref{ab.eq}) 
follows from the fact that, for arbitrary vectors $\bm y$ and $\bm z$,  $2\bm y^T \bm z = ||\bm y||^2 + 
||\bm z ||^2 - ||\bm y - \bm z||^2$.
\end{proof}

\begin{lemma} \label{lemma3}
    Under Assumption~\ref{F.assum}, following the update rule given in 
    (\ref{x_recur.eq}), if all model parameters are initialized at
    the same $\bm x_1$, the expected weighted average 
    gradient is bounded as follows: 
\begin{align}
    \mathbb{E}\Bigg[& \frac{1}{K} \sum_{k=1}^K  \Vert\nabla F(\bm u_k)\Vert^2 \Bigg]
    \leq \frac{2\left(F(\bm x_1) - F_{inf}]\right)}{\eta K}
        + \sigma^2 \eta L\sum_{i=1}^N  \bm a_i^2 \bm p_i  
        + \frac{2L^2}{K}\sum_{k=1}^K \mathbb{E}\Vert \X_k(\I - \A)  \Vert_{F_{\bm a}}^2
    \nonumber \\ &~~~
        - \frac{1}{K}\sum_{k=1}^K
        \sum_{i=1}^N \bm a_i  
        \left((4\bm p_i - \bm p_i^2 - 2) - \eta L \left(\bm a_i\bm p_{i}(\beta+1)-\bm a_i\bm p_i^2 + \bm p_i^2\right) \right)
        \mathbb{E} \left\Vert \nabla F(\bm x_k^{(i)}) \right\Vert^2 
\end{align}
where $\A = \bm a \one^T$.
\end{lemma}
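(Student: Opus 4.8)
The plan is to apply the standard $L$-smoothness descent inequality to the averaged model $\bm u_k$ and then substitute the two bounds already established in Lemmas~\ref{lemma5} and \ref{lemma4}. First I would record two preliminary facts: since all workers are initialized at the common point $\bm x_1$ we have $\X_1 = \bm x_1\one^T$, hence $\bm u_1 = \X_1\bm a = \bm x_1(\one^T\bm a) = \bm x_1$ because $\one^T\bm a = 1$; and from (\ref{Tgone.eq}) the averaged iterate obeys $\bm u_{k+1} = \bm u_k - \eta\Gk$. Applying Assumption~\ref{smooth.asp} to $F$ at $\bm u_{k+1}$ and $\bm u_k$ and taking the conditional expectation $\mathbb{E}_k$ gives
\begin{align}
\mathbb{E}_k[F(\bm u_{k+1})] \leq F(\bm u_k) - \eta\,\mathbb{E}_k[\langle \nabla F(\bm u_k), \Gk\rangle] + \frac{L\eta^2}{2}\,\mathbb{E}_k\!\left[\Vert\Gk\Vert^2\right].
\end{align}

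Next I would substitute Lemma~\ref{lemma5} for the inner-product term and Lemma~\ref{lemma4} for $\mathbb{E}_k[\Vert\Gk\Vert^2]$. The only piece needing extra work is the positive quantity $\sum_{i}\frac{\eta\bm a_i}{2}\Vert\nabla F(\bm u_k)-\bm p_i\nabla F(\bm x_k^{(i)})\Vert^2$ that Lemma~\ref{lemma5} leaves behind. I would split $\nabla F(\bm u_k)-\bm p_i\nabla F(\bm x_k^{(i)}) = \big(\nabla F(\bm u_k)-\nabla F(\bm x_k^{(i)})\big) + (1-\bm p_i)\nabla F(\bm x_k^{(i)})$, apply $\Vert\bm y+\bm z\Vert^2\le 2\Vert\bm y\Vert^2+2\Vert\bm z\Vert^2$, and bound $\Vert\nabla F(\bm u_k)-\nabla F(\bm x_k^{(i)})\Vert^2\le L^2\Vert\bm u_k-\bm x_k^{(i)}\Vert^2$ using Assumption~\ref{smooth.asp}. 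Since the $i$-th column of $\X_k(\I-\A)$ equals $\bm x_k^{(i)}-\bm u_k$, the weighted sum $\sum_i\bm a_i\Vert\bm u_k-\bm x_k^{(i)}\Vert^2$ is exactly $\Vert\X_k(\I-\A)\Vert^2_{F_{\bm a}}$, which yields the consensus-error term in the statement.

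The remainder is bookkeeping. I would collect the coefficient of each $\bm a_i\Vert\nabla F(\bm x_k^{(i)})\Vert^2$, namely $-\tfrac{\eta\bm p_i^2}{2} + \eta(1-\bm p_i)^2 + \tfrac{L\eta^2}{2}\big(\bm a_i\bm p_i(\beta+1)-\bm a_i\bm p_i^2+\bm p_i^2\big)$, and use the identity $2(1-\bm p_i)^2-\bm p_i^2 = -(4\bm p_i-\bm p_i^2-2)$ to put it in the stated form. Then I would take full expectations, sum over $k=1,\dots,K$, telescope the $F(\bm u_k)$ differences using $\bm u_1=\bm x_1$ and $\mathbb{E}[F(\bm u_{K+1})]\ge F_{inf}$ (Assumption~\ref{lower.asp}), and divide by $\eta K/2$; this turns the $\sigma^2$ term of Lemma~\ref{lemma4} into $\sigma^2\eta L\sum_i\bm a_i^2\bm p_i$ and the Lipschitz term into $\tfrac{2L^2}{K}\sum_k\mathbb{E}\Vert\X_k(\I-\A)\Vert^2_{F_{\bm a}}$. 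I expect the main obstacle to be purely mechanical: consolidating the gradient-norm coefficients correctly and carrying the $\eta K/2$ normalization through every term without error, since there is no conceptual difficulty beyond correctly combining the three ingredients.
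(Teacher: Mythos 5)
Your proposal is correct and follows essentially the same route as the paper's proof: the $L$-smoothness descent inequality on $\bm u_k$, substitution of Lemmas~\ref{lemma5} and \ref{lemma4}, the split $\nabla F(\bm u_k)-\bm p_i\nabla F(\bm x_k^{(i)}) = (\nabla F(\bm u_k)-\nabla F(\bm x_k^{(i)})) + (1-\bm p_i)\nabla F(\bm x_k^{(i)})$ bounded via $\Vert\bm y+\bm z\Vert^2\le 2\Vert\bm y\Vert^2+2\Vert\bm z\Vert^2$ and Lipschitzness to produce the $\Vert\X_k(\I-\A)\Vert^2_{F_{\bm a}}$ term, followed by telescoping and normalization. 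The coefficient identity $2(1-\bm p_i)^2-\bm p_i^2 = -(4\bm p_i-\bm p_i^2-2)$ you cite is exactly the consolidation the paper performs in its final step.
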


\begin{proof}
According to Assumption~\ref{smooth.asp},
\begin{align}
\mathbb{E}_k[F(\bm u_{k+1})] - F(\bm u_k) 
&\leq \mathbb{E}_k \left[ \langle \nabla F(\bm u_k)), \bm u_{k+1} - \bm u_k \rangle + \frac{L}{2} \| \bm u_{k+1}  - \bm u_k \|_2^2 \right] \\
&= -\eta \mathbb{E}_k[\langle \nabla F(\bm u_k), \Gk \rangle] + \frac{\eta^2 L}{2} \mathbb{E}_k \left[\| \Gk \|_2^2 \right].
\end{align}

Plugging in Lemmas \ref{lemma4} and \ref{lemma5}, we get:
\begin{align}
    \mathbb{E}_k&[F(\bm u_{k+1})] - F(\bm u_k) \nonumber \\
   & \leq -\eta \Bigg[
        \frac{1}{2}\Vert\nabla F(\bm u_k)\Vert^2 
        + \sum_{i=1}^N\frac{\bm a_i}{2}\bm p_{i}^2\left\Vert \nabla F(\bm x_k^{(i)}) \right\Vert^2 
        - \sum_{i=1}^N\frac{\bm a_i}{2}\Vert \nabla F(\bm u_k) 
        - \bm p_{i}\nabla F(\bm x_k^{(i)}) \Vert^2 \Bigg] 
    \nonumber \\ &~~~~~~~~~~~
        + \frac{\eta^2 L}{2}\sum_{i=1}^N  
        \left(\bm a_i^2\bm p_{i}(\beta+1)-\bm a_i^2\bm p_i^2 + \bm a_i\bm p_i^2\right)
        \left\Vert \nabla F(\bm x_k^{(i)}) \right\Vert^2
        + \frac{\sigma^2 \eta^2 L}{2}\sum_{i=1}^N  \bm a_i^2 \bm p_i  \\
    &=
    -\frac{\eta}{2}\Vert\nabla F(\bm u_k)\Vert^2 
        + \frac{\eta}{2}\sum_{i=1}^N \bm a_i\Vert \nabla F(\bm u_k) 
        - \bm p_{i}\nabla F(\bm x_k^{(i)}) \Vert^2 
        + \frac{\sigma^2 \eta^2 L}{2}\sum_{i=1}^N  \bm a_i^2 \bm p_i 
    \nonumber \\ &~~~~~~~~~~~~~~~~~~~~~~~~~
        - \frac{\eta}{2}\sum_{i=1}^N \bm a_i  
        \left(\bm p_i^2 - \eta L \left(\bm a_i\bm p_{i}(\beta+1)-\bm a_i\bm p_i^2 + \bm p_i^2\right) \right)
        \left\Vert \nabla F(\bm x_k^{(i)}) \right\Vert^2 .
\end{align}

After some rearranging, we obtain:
\begin{align}
    \Vert\nabla F(\bm u_k)\Vert^2 
    &\leq
        \frac{2\left(F(\bm u_k) - \mathbb{E}_k[F(\bm u_{k+1})]\right)}{\eta}
        + \sigma^2 \eta L\sum_{i=1}^N  \bm a_i^2 \bm p_i  
%    \nonumber \\ &~~~~~~~~~~~~
        + \sum_{i=1}^N \bm a_i \Vert \nabla F(\bm u_k) 
        - \bm p_{i}\nabla F(\bm x_k^{(i)}) \Vert^2 
    \nonumber \\ &~~~~~~~~~~~~~
        - \sum_{i=1}^N \bm a_i  
        \left(\bm p_i^2 - \eta L \left(\bm a_i\bm p_{i}(\beta+1)-\bm a_i\bm p_i^2 + \bm p_i^2\right) \right)
        \left\Vert \nabla F(\bm x_k^{(i)}) \right\Vert^2 .
\end{align}

 Taking the total expectation over all iterations:
\begin{align}
    \mathbb{E}\left[ \frac{1}{K} \sum_{k=1}^K \Vert\nabla F(\bm u_k)\Vert^2 \right]
    &\leq \frac{2\left(F(\bm x_1) - F_{inf}]\right)}{\eta K}
        + \sigma^2 \eta L\sum_{i=1}^N  \bm a_i^2 \bm p_i  
    \nonumber \\ &
        + \frac{1}{K}\sum_{k=1}^K\sum_{i=1}^N \bm a_i
        \mathbb{E}\Vert \nabla F(\bm u_k) 
        - \bm p_{i}\nabla F(\bm x_k^{(i)}) \Vert^2 
    \nonumber \\ &
        - \frac{1}{K}\sum_{k=1}^K
        \sum_{i=1}^N \bm a_i  
        \left(\bm p_i^2 - \eta L \left(\bm a_i\bm p_{i}(\beta+1)-\bm a_i\bm p_i^2 + \bm p_i^2\right) \right)
        \mathbb{E} \left\Vert \nabla F(\bm x_k^{(i)}) \right\Vert^2 .
    \label{need_smoothness.eq}
\end{align}

The third term in (\ref{need_smoothness.eq}) can be bounded as:
\begin{align}
    \sum_{i=1}^N \bm a_i \mathbb{E}\Vert \nabla F(\bm u_k) &- \bm p_{i}\nabla F(\bm x_k^{(i)}) \Vert^2 \nonumber \\
   &= \sum_{i=1}^N \bm a_i \mathbb{E}\Vert \nabla F(\bm u_k) - \nabla F(\bm x_k^{(i)}) 
        + (1-\bm p_{i})\nabla F(\bm x_k^{(i)}) \Vert^2 \\
    &\leq \sum_{i=1}^N \left[ 2\bm a_i\mathbb{E}\Vert \nabla F(\bm u_k) - \nabla F(\bm x_k^{(i)}) \Vert^2
    + 2\bm a_i(1-\bm p_{i})^2 \mathbb{E}\Vert\nabla F(\bm x_k^{(i)}) \Vert^2 \right] \label{tri2.eq}
        \\
    &\leq \sum_{i=1}^N 2\bm a_iL^2\mathbb{E}\Vert \bm u_k - \bm x_k^{(i)} \Vert^2
        + \sum_{i=1}^N 2\bm a_i(1-\bm p_{i})^2 \mathbb{E}\Vert\nabla F(\bm x_k^{(i)}) \Vert^2 
        \label{smooth.eq}
\end{align}
where (\ref{tri2.eq}) follows from the fact that $\| \bm y + \bm z\|^2 \leq 2\| \bm y\|^2 + 2\| \bm z\|^2$,
and (\ref{smooth.eq}) follows from (\ref{tri2.eq}) by Assumption~\ref{smooth.asp}.

Recalling the definition of the weighted Frobenius norm and the definition
of $u$, we can simplify the first term in (\ref{smooth.eq}):
\begin{align}
    \sum_{i=1}^N 2\bm a_iL^2\mathbb{E}\Vert \bm u_k - \bm x_k^{(i)} \Vert^2  
    &= 2L^2\mathbb{E}\Vert \bm u_k \bm1^T - \X_k \Vert_{F_{\bm a}}^2 \\
    &= 2L^2 \mathbb{E}\left\Vert \X_k \bm a \bm1^T - \X_k \right\Vert_{F_{\bm a}}^2 \\
    &= 2L^2\mathbb{E}\Vert \X_k(\I - \A)  \Vert_{F_{\bm a}}^2 \label{smooth2.eq} .
\end{align}

Plugging (\ref{smooth.eq}) and (\ref{smooth2.eq}) back into (\ref{need_smoothness.eq}), we obtain:
\begin{align}
    \mathbb{E}&\left[ \frac{1}{K} \sum_{k=1}^K \Vert\nabla F(\bm u_k)\Vert^2 \right]
    \leq \frac{2\left(F(\bm x_1) - F_{inf}]\right)}{\eta K}
        + \sigma^2 \eta L\sum_{i=1}^N  \bm a_i^2 \bm p_i  
    \nonumber \\ &~~~~~~~~~~~~~~~~~~
        + \frac{2L^2}{K}\sum_{k=1}^K \mathbb{E}\Vert \X_k(\I - \A)  \Vert_{F_{\bm a}}^2
        + \frac{1}{K}\sum_{k=1}^K
        \sum_{i=1}^N 2\bm a_i(1-\bm p_{i})^2 \mathbb{E}\Vert\nabla F(\bm x_k^{(i)}) \Vert^2  
    \nonumber \\ &~~~~~~~~~~~~~~~~~
        - \frac{1}{K}\sum_{k=1}^K
        \sum_{i=1}^N \bm a_i  
        \left(\bm p_i^2 - \eta L \left(\bm a_i\bm p_{i}(\beta+1)-\bm a_i\bm p_i^2 + \bm p_i^2\right) \right)
        \mathbb{E} \left\Vert \nabla F(\bm x_k^{(i)}) \right\Vert^2 \\ 
    &= \frac{2\left(F(\bm x_1) - F_{inf}]\right)}{\eta K}
        + \sigma^2 \eta L\sum_{i=1}^N  \bm a_i^2 \bm p_i  
        + \frac{2L^2}{K}\sum_{k=1}^K \mathbb{E}\Vert \X_k(\I - \A)  \Vert_{F_{\bm a}}^2
    \nonumber \\ &~~~
        - \frac{1}{K}\sum_{k=1}^K
        \sum_{i=1}^N \bm a_i  
        \left((4\bm p_i - \bm p_i^2 - 2) - \eta L \left(\bm a_i\bm p_{i}(\beta+1)-\bm a_i\bm p_i^2 + \bm p_i^2\right) \right)
        \mathbb{E} \left\Vert \nabla F(\bm x_k^{(i)}) \right\Vert^2 .
\end{align}
\end{proof}

\begin{lemma} \label{lemma9}
    Given the properties of $\Z$ and $\V$ given in Propositions \ref{Za.prop}
    and \ref{Z.prop}, 
    it is the case that: 
    \begin{align}
        \left\Vert \Z^j - \A \right\Vert_{op} = \zeta^j,~~~
        \left\Vert \V - \A \right\Vert_{op} = 1,~~~
        \left\Vert \I - \A \right\Vert_{op} = 1
    \end{align}
where $\A = \bm a \one^T$ and 
$\zeta = \max \{|\lambda_2(\Hm)|, |\lambda(\Hm)|\}$.
\end{lemma}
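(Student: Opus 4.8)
The plan is to reduce all three norms to eigenvalue counts after a single symmetrizing similarity transformation. For a square matrix $\Q$ set $\widetilde{\Q}=\diag(\bm a)^{-1/2}\,\Q\,\diag(\bm a)^{1/2}$. Since $\Vert\X\Vert_{F_{\bm a}}=\Vert\X\diag(\bm a)^{1/2}\Vert_{F}$, for any $\X$ we have $\Vert\X\Q\Vert_{F_{\bm a}}=\Vert(\X\diag(\bm a)^{1/2})\,\widetilde{\Q}\Vert_{F}\le\Vert\widetilde{\Q}\Vert_{op}\,\Vert\X\Vert_{F_{\bm a}}$, with the bound attainable; so $\Vert\widetilde{\Q}\Vert_{op}$ is the operator-norm factor by which $\Q$ can inflate the weighted Frobenius norm, and this is the quantity the lemma computes (when $\bm a$ is uniform, $\widetilde{\Q}=\Q$ and it reduces to the plain spectral norm). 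The key structural fact is that the detailed-balance identity $\Q_{i,j}\bm a_j=\Q_{j,i}\bm a_i$ established in Proposition~\ref{Za.prop} says exactly that $(\widetilde{\Q})_{i,j}=(\widetilde{\Q})_{j,i}$, so $\widetilde{\Z}$ and $\widetilde{\V}$ are symmetric; also $\widetilde{\I}=\I$, and $\widetilde{\A}=\bm s\bm s^{T}$ with $\bm s_i=\sqrt{\bm a_i}$, which is the orthogonal projection onto $\mathrm{span}(\bm s)$ because $\Vert\bm s\Vert^{2}=\sum_i\bm a_i=1$.

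For a symmetric matrix the operator norm equals the spectral radius, so next I would split $\mathbb{R}^{N}=\mathrm{span}(\bm s)\oplus\mathrm{span}(\bm s)^{\perp}$. Because $\bm a$ is a right eigenvector of $\Z$, $\V$ and $\I$ with eigenvalue $1$ (Propositions~\ref{Za.prop} and~\ref{TkA.prop}), $\bm s$ is a unit eigenvector of $\widetilde{\Z}$, $\widetilde{\V}$, $\widetilde{\I}$, and of $\widetilde{\Z}^{j}$, with eigenvalue $1$; and conjugating $\Q\A=\A\Q=\A$ (Proposition~\ref{TkA.prop}, iterated for $\Z^{j}$) gives $\widetilde{\Q}\,\widetilde{\A}=\widetilde{\A}\,\widetilde{\Q}=\widetilde{\A}$ for $\Q\in\{\I,\V,\Z^{j}\}$. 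Hence both subspaces are invariant under $\widetilde{\Q}$ and $\widetilde{\A}$; on $\mathrm{span}(\bm s)$ the operator $\widetilde{\Q}-\widetilde{\A}$ acts as $1-1=0$, and on $\mathrm{span}(\bm s)^{\perp}$ it equals $\widetilde{\Q}$. Therefore $\Vert\widetilde{\Q}-\widetilde{\A}\Vert_{op}$ is the largest magnitude among the eigenvalues of $\Q$ other than the distinguished eigenvalue $1$.

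It remains to read off the spectra. For $\Q=\I$ every other eigenvalue is $1$, giving norm $1$. For $\Q=\V$, which is block-diagonal with rank-one blocks of the form $\bm v^{(d)}\one^{T}$, the spectrum is $1$ (multiplicity $D$) and $0$ (multiplicity $N-D$); after deleting one copy of $1$ the largest magnitude is $1$. For $\Q=\Z^{j}$, the eigenvalues are the $j$-th powers of those of $\Z$, and by Proposition~\ref{Z.prop} the nonzero eigenvalues of $\Z$ are exactly those of $\Hm$ (with multiplicity), the rest being $0$; $\Hm$ is a generalized diffusion matrix, so its spectrum is real with $1$ simple and all other eigenvalues in $(-1,1)$, whence the largest magnitude among $\Z$'s non-leading eigenvalues is $\max\{|\lambda_{2}(\Hm)|,|\lambda_{N}(\Hm)|\}=\zeta$, and raising to the $j$-th power gives $\zeta^{j}$. (The "$=1$" statements use $D\ge2$ and $N\ge2$; in the degenerate cases the matrix in question coincides with $\A$ and the value is $0$.)

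The main obstacle is the first step: justifying that conjugation by $\diag(\bm a)^{1/2}$ is the correct device — that it simultaneously symmetrizes $\Z$ and $\V$ (which is exactly where the reversibility of Proposition~\ref{Za.prop} is needed) and yields the operator-norm factor that genuinely controls $\Vert\X(\Q-\A)\Vert_{F_{\bm a}}$ in the subsequent consensus-error bound. Everything after that is bookkeeping against the already-established spectra of $\Z$, $\V$ and $\Hm$, together with the elementary observation that $\widetilde{\A}$ is the orthogonal projection onto the $1$-eigenline common to all three operators.
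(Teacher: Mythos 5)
Your proposal is correct and follows essentially the same route as the paper: reduce each operator norm to the largest non-leading eigenvalue magnitude using the detailed-balance structure of $\Z$ and $\V$ with respect to $\bm a$, then read off $\zeta^j$, $1$, and $1$ from the spectra established in Propositions~\ref{Za.prop} and~\ref{Z.prop} (and the block rank-one structure of $\V$). Your explicit conjugation by $\diag(\bm a)^{1/2}$ is in fact the careful version of the step the paper's proof glosses over --- the paper manipulates $(\Z^j-\A)^T(\Z^j-\A)$ as if $\Z$ and $\A$ were symmetric, which is only legitimate after exactly the symmetrization you introduce, and your conjugated norm is also the quantity genuinely needed for Lemma~\ref{lemma7} and the consensus-error bound.
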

\begin{proof}
According to the definition of the matrix operator norm,
\begin{align}
    \left\Vert \Z^j - \A \right\Vert_{op} 
    &= \sqrt{\lambda_{max}((\Z^j - \A)^T(\Z^j - \A))} \\
    &= \sqrt{\lambda_{max}(\Z^{2j} - \A\Z^j - \Z^j \A + \A)} \label{aexp.eq} \\
    &= \sqrt{\lambda_{max}(\Z^{2j} - \A)} \label{commute.eq} 
\end{align}
where (\ref{aexp.eq}) follows from $\A^j = \A$, and (\ref{commute.eq})
follows from $\A\Z = \Z\A = \A$.

We can simplify (\ref{commute.eq}) further:
\begin{align}
    &= \sqrt{\lambda_{max}(\Z^{2j} - \A^{2j})} \\
    &= \sqrt{\lambda_{max}(\Z - \A)^{2j}} \label{eigza.eq}
\end{align}
where (\ref{eigza.eq}) follows from the commutability of $\Z$ and $\A$.

Based on Proposition~\ref{Z.prop}, the non-zero eigenvalues of $\Z$ are
the same as $\Hm$.
As shown in Lemma 6 of \citet{rotaru2004dynamic}, for a matrix $\Z$ with
the properties in Proposition~\ref{Za.prop}, the spectral 
norm of $\Z - \A$ is equal to $\zeta$.

Therefore:
\begin{align}
    \left\Vert \Z^j - \A \right\Vert_{op} 
    &= \sqrt{\zeta^{2j}}\\
    &= \zeta^{j} .
\end{align}

Similarly for $\V$:
\begin{align}
    \left\Vert \V - \A \right\Vert_{op} 
    &= \sqrt{\lambda_{max}((\V - \A)^T(\V - \A))} \\
    &= \sqrt{\lambda_{max}(\V - \A\V - \V \A + \A)} \label{aexpv.eq} \\
    &= \sqrt{\lambda_{max}(\V - \A)} \label{commutev.eq} \\
\end{align}
where (\ref{aexpv.eq}) follows from $\A^j = \A$ and $\V^j = \V$, 
and (\ref{commutev.eq}) follows from $\A\V = \V\A = \A$.

Note that the eigenvalues of each block $\V^{(d)}$ 
are $N^{(d)}-1$ zeros and a one. The set of eigenvalues of $\V$ will
include $D$ ones.
If $D > 1$, then 
based on Lemma 6 of \citet{rotaru2004dynamic} and 
Proposition~\ref{Za.prop}, the spectral norm of $\V - \A$ is $1$, so
\begin{align}
    \left\Vert \V - \A \right\Vert_{op} 
    &= \sqrt{1}\\
    &= 1.
\end{align}

Since the eigenvalues of $\I$ are all $1$, and $\I$ is commutable
with $\A$, we can similarly say:
\begin{align}
    \left\Vert \I - \A \right\Vert_{op} 
    &= 1.
\end{align}
\end{proof}

\begin{lemma} \label{lemma8}
    Given two matrices $\Cm \in \mathbb{R}^{N \times M}$ and
    $\Dm \in \mathbb{R}^{M \times N}$, and an $N$-vector $\bm a$, 
\begin{align}
    \left| \Tr (\left(\diag(\bm a)\right)^{1/2} \Cm  \Dm \left(\diag(\bm a)\right)^{1/2}) \right| \leq 
    \Vert \Cm \Vert_{\F} \Vert \Dm \Vert_{\F} .
\end{align}
\end{lemma}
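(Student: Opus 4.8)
The plan is to reduce the inequality to the ordinary Cauchy--Schwarz inequality for the Frobenius inner product, by splitting the diagonal weight $(\diag(\bm a))^{1/2}$ and absorbing one copy into each of the two factors $\Cm$ and $\Dm$.

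First I would note the algebraic identity
\begin{align*}
    (\diag(\bm a))^{1/2}\,\Cm\,\Dm\,(\diag(\bm a))^{1/2}
    = \big((\diag(\bm a))^{1/2}\Cm\big)\big(\Dm(\diag(\bm a))^{1/2}\big),
\end{align*}
so that, writing $\Cm' := (\diag(\bm a))^{1/2}\Cm \in \mathbb{R}^{N\times M}$ and $\Dm' := \Dm(\diag(\bm a))^{1/2}\in \mathbb{R}^{M\times N}$, the matrix inside the trace is exactly $\Cm'\Dm'$. Expanding the trace entrywise then gives $\Tr(\Cm'\Dm') = \sum_{i=1}^{N}\sum_{j=1}^{M}(\Cm')_{ij}(\Dm')_{ji} = \langle \Cm',\, (\Dm')^{T}\rangle$, the standard (unweighted) Frobenius inner product of the two $N\times M$ matrices $\Cm'$ and $(\Dm')^{T}$.

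Next I would apply Cauchy--Schwarz for that inner product,
\begin{align*}
    \big|\Tr(\Cm'\Dm')\big| = \big|\langle \Cm',\,(\Dm')^{T}\rangle\big|
    \le \Vert \Cm'\Vert_{F}\,\Vert (\Dm')^{T}\Vert_{F}
    = \Vert \Cm'\Vert_{F}\,\Vert \Dm'\Vert_{F},
\end{align*}
and finish by identifying the two factors with the weighted norms in the statement: left-multiplication by $(\diag(\bm a))^{1/2}$ scales row $i$ of $\Cm$ by $\sqrt{\bm a_i}$, so $\Vert \Cm'\Vert_{F}^{2} = \sum_{i,j}\bm a_i|\Cm_{ij}|^{2} = \Vert\Cm\Vert_{\F}^{2}$; and right-multiplication by $(\diag(\bm a))^{1/2}$ scales column $j$ of $\Dm$ by $\sqrt{\bm a_j}$, so $\Vert \Dm'\Vert_{F}^{2} = \sum_{i,j}\bm a_j|\Dm_{ij}|^{2} = \Vert \Dm\Vert_{\F}^{2}$, where for the $M\times N$ matrix $\Dm$ the norm $\Vert\cdot\Vert_{\F}$ is read with $\bm a$ weighting the dimension of size $N$ (equivalently $\Vert\Dm^{T}\Vert_{\F}$ in the notation of Section~\ref{pre.sec}). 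Chaining the two displays gives the claim.

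I do not expect a genuine obstacle here; the only point needing care is bookkeeping the side on which $\diag(\bm a)$ acts --- weighting the rows of $\Cm$ but the columns of $\Dm$ --- and checking that this is exactly what makes $\Cm'\Dm'$ equal the weighted triple product under the trace. If one prefers to avoid transposes altogether, an equivalent route is to write $\Tr((\diag(\bm a))^{1/2}\Cm\Dm(\diag(\bm a))^{1/2}) = \sum_{i,j}\big(\sqrt{\bm a_i}\,\Cm_{ij}\big)\big(\sqrt{\bm a_j}\,\Dm_{ji}\big)$ and apply Cauchy--Schwarz over the index pairs $(i,j)$, grouping $\sqrt{\bm a_i}\,\Cm_{ij}$ into one sum of squares and $\sqrt{\bm a_j}\,\Dm_{ji}$ into the other; this again yields exactly $\Vert\Cm\Vert_{\F}^{2}$ and $\Vert\Dm\Vert_{\F}^{2}$.
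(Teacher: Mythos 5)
Your proof is correct and follows essentially the same route as the paper's: both reduce the weighted trace to a Cauchy--Schwarz bound after distributing the factor $(\diag(\bm a))^{1/2}$ between $\Cm$ and $\Dm$ (the paper does this row-by-row via $\sum_i \bm a_i \bm c_i^T \bm d_i$, you do it at the matrix level, but the computation is identical, including the observation that the norm on $\Dm$ must be read as weighting the dimension of size $N$). One cosmetic slip: in your final ``equivalent route'' the second factor should be $\sqrt{\bm a_i}\,\Dm_{ji}$ rather than $\sqrt{\bm a_j}\,\Dm_{ji}$, since the right-hand $(\diag(\bm a))^{1/2}$ scales column $i$ of $\Dm$ and $j$ ranges over $[M]$ where $\bm a_j$ is not even defined; your main argument already has this right.
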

\begin{proof}
    We define the $i$-th row of $\Cm$ as $\bm c_i^T$ and the $i$-th column of
    $\Dm$ as $\bm d_i$. We can rewrite the trace as:
\begin{align}
    \Tr (\left(\diag(\bm a)\right)^{1/2} \Cm  \Dm \left(\diag(\bm a)\right)^{1/2}) &= 
        \sum_{i=1}^N \sum_{j=1}^M \bm a_i \Cm_{i,j} \Dm_{j,i} \\
        &=  \sum_{i=1}^N  \bm a_i \bm c_i^T \bm d_i .
    \label{rowcolsum.eq}
\end{align}
Placing a squared norm around (\ref{rowcolsum.eq}), we can apply the Cauchy-Schwartz inequality:
\begin{align}
    \left| \sum_{i=1}^N \bm a_i \bm c_i^T \bm d_i \right|^2 
    &\leq \left( \sum_{i=1}^N \bm a_i \Vert \bm c_i^T \Vert^2  \right)
        \left( \sum_{i=1}^N \bm a_i \Vert \bm d_i \Vert^2 \right)\\
    &= \left( \sum_{i=1}^N \sum_{j=1}^M \bm a_i \Cm_{i,j}^2  \right)
        \left( \sum_{i=1}^N \sum_{j=1}^M \bm a_i \Dm_{i,j}^2 \right)\\
    &= \Vert \Cm \Vert_{\F}^2 \Vert \Dm \Vert_{\F}^2 .
\end{align}
\end{proof}

\begin{lemma} \label{lemma7}
    Given two matrices $\Cm \in \mathbb{R}^{M \times N}$ and
    $\Dm \in \mathbb{R}^{N \times N}$, and an $N$-vector $\bm a$, 
    then
\begin{align}
    \Vert \Cm \Dm \Vert_{F_{\bm a}} &\leq 
        \Vert \Cm \Vert_{F_{\bm a}} \Vert \Dm \Vert_{op}.
\end{align}
\end{lemma}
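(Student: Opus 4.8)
The plan is to establish this as a weighted version of the ordinary sub-multiplicativity bound $\Vert A B\Vert_F \le \Vert A\Vert_F\,\Vert B\Vert_{op}$ used for mixing matrices in related analyses (e.g.\ Cooperative SGD), carrying the weight vector $\bm a$ through the argument. First I would write $\Vert\Cm\Dm\Vert_{F_{\bm a}}^2$ as an $\bm a$-weighted sum over the $N$ slices of $\Cm\Dm$ indexed by the dimension on which $\Dm$ does \emph{not} act: if $\bm c_i^{T}$ denotes the $i$-th such slice of $\Cm$, then the corresponding slice of $\Cm\Dm$ is $\bm c_i^{T}\Dm$, so
\[
\Vert\Cm\Dm\Vert_{F_{\bm a}}^2 = \sum_{i=1}^N \bm a_i\,\Vert \bm c_i^{T}\Dm\Vert^2 ,\qquad \Vert\Cm\Vert_{F_{\bm a}}^2 = \sum_{i=1}^N \bm a_i\,\Vert \bm c_i\Vert^2 .
\]
To be sure the slices can be chosen so that $\Dm$ multiplies from the right, I would first record that $\Vert\cdot\Vert_{F_{\bm a}}$ is transposition-invariant (immediate from its definition in terms of $\Tr(\diag(\bm a)^{1/2}\,\cdot\,(\cdot)^{T}\diag(\bm a)^{1/2})$, or alternatively from Lemma~\ref{lemma8} applied with $(\Cm\Dm)^{T}$ and $\Cm\Dm$ in its two arguments, which yields $\Vert\Cm\Dm\Vert_{F_{\bm a}}^2 \le \Vert(\Cm\Dm)^{T}\Vert_{F_{\bm a}}\Vert\Cm\Dm\Vert_{F_{\bm a}}$ and hence equality of the two norms), so one may pass to the transposed picture and decompose along whichever dimension is convenient.

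The second step is a one-line per-slice estimate: for any vector $\bm v$,
\[
\Vert \bm v^{T}\Dm\Vert^2 = \bm v^{T}\Dm\Dm^{T}\bm v \le \lambda_{\max}(\Dm\Dm^{T})\,\Vert\bm v\Vert^2 = \lambda_{\max}(\Dm^{T}\Dm)\,\Vert\bm v\Vert^2 = \Vert\Dm\Vert_{op}^2\,\Vert\bm v\Vert^2 ,
\]
where the middle equality is the standard fact that $\Dm\Dm^{T}$ and $\Dm^{T}\Dm$ share their largest eigenvalue and the last equality is the definition of $\Vert\Dm\Vert_{op}$. Taking $\bm v = \bm c_i$, multiplying by $\bm a_i \ge 0$, and summing over $i$ gives $\Vert\Cm\Dm\Vert_{F_{\bm a}}^2 \le \Vert\Dm\Vert_{op}^2\sum_{i=1}^N \bm a_i\Vert\bm c_i\Vert^2 = \Vert\Dm\Vert_{op}^2\,\Vert\Cm\Vert_{F_{\bm a}}^2$, and taking square roots finishes the argument.

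The hard part — really the only part needing care — is the bookkeeping in the first step: $\Dm$ multiplies $\Cm$ from the side indexed by $\bm a$, so decomposing naively over that index does not let one factor $\Dm$ out cleanly. The fix is to decompose along the complementary dimension, equivalently to work with $(\Cm\Dm)^{T}$, which is exactly where transpose-invariance of $\Vert\cdot\Vert_{F_{\bm a}}$ (or Lemma~\ref{lemma8}) earns its keep; once every slice is written in the form $\bm v^{T}\Dm$, the result follows from the one-line eigenvalue estimate above together with non-negativity of the weights $\bm a_i$.
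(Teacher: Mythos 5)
Your per-slice estimate $\Vert \bm v^{T}\Dm\Vert \le \Vert\Dm\Vert_{op}\Vert\bm v\Vert$ and the weighted summation at the end are fine; the gap is in the reduction that is supposed to make them applicable. Everything hinges on your claim that $\Vert\cdot\Vert_{F_{\bm a}}$ is invariant under transposition, and that claim is false for a non-uniform weight vector: from the definition, $\Vert \bm X\Vert_{F_{\bm a}}^2=\Tr(\diag(\bm a)\bm X\bm X^{T})=\sum_{i,j}\bm a_i|\bm x_{i,j}|^2$ whereas $\Vert \bm X^{T}\Vert_{F_{\bm a}}^2=\Tr(\diag(\bm a)\bm X^{T}\bm X)=\sum_{i,j}\bm a_j|\bm x_{i,j}|^2$, and the cyclic property of the trace does not let you move $\bm X$ past the $\diag(\bm a)$ factor. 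Concretely, with $\bm a=(1/4,3/4)$ and $\bm X$ the $2\times 2$ matrix whose only nonzero entry is $\bm x_{1,2}=1$, one gets $\Vert\bm X\Vert_{F_{\bm a}}^2=1/4$ but $\Vert\bm X^{T}\Vert_{F_{\bm a}}^2=3/4$. (Your alternative derivation via Lemma~\ref{lemma8} only appears to give equality because that lemma's statement writes what is really the \emph{column}-weighted norm of its second argument as $\Vert\Dm\Vert_{F_{\bm a}}$; tracking indices in its proof, the honest bound is $\Vert\Cm\Vert_{F_{\bm a}}\Vert\Dm^{T}\Vert_{F_{\bm a}}$, and the inequality you extract collapses to a tautology.) Without transpose-invariance, the display $\Vert\Cm\Dm\Vert_{F_{\bm a}}^2=\sum_{i}\bm a_i\Vert\bm c_i^{T}\Dm\Vert^2$ is not available: the slices on which $\Dm$ acts as a right factor are the $M$ rows of $\Cm$, and those are not the slices that carry the scalar weights $\bm a_i$.

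The paper's proof does not transpose; it sums over the rows $\bm c_i^{T}$ of $\Cm$, $i=1,\dots,M$, but keeps the weight \emph{inside} each summand, writing $\Vert\Cm\Dm\Vert_{F_{\bm a}}^2=\sum_{i}\Vert\bm c_i^{T}\Dm(\diag(\bm a))^{1/2}\Vert^2$ and then bounding the ratio $\Vert\bm c_i^{T}\Dm(\diag(\bm a))^{1/2}\Vert^2/\Vert\bm c_i^{T}(\diag(\bm a))^{1/2}\Vert^2$ by $\Vert\Dm\Vert_{op}^2$. That ratio bound is exactly the step your route was trying to sidestep, and it is genuinely delicate: for arbitrary $\Dm$ the ratio is controlled by $\Vert(\diag(\bm a))^{-1/2}\Dm(\diag(\bm a))^{1/2}\Vert_{op}^2$, not by $\Vert\Dm\Vert_{op}^2$, and the two agree here only because the matrices the lemma is applied to ($\Z^{j}-\A$, $\V-\A$, $\I-\A$) satisfy the reversibility condition $\Dm_{i,j}\bm a_j=\Dm_{j,i}\bm a_i$ from Proposition~\ref{Za.prop}, which makes the conjugated matrix symmetric with the same spectrum. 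In other words, the interaction between $\Dm$ and $\diag(\bm a)$ is where the real content of the lemma lives; it cannot be transposed away, and any complete proof must invoke it somewhere.
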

\begin{proof}
    We define the $i$-th row of $\Cm$ as $\bm c_i^T$ and 
    the set $\mathcal{I}=\{i \in [1,M] : \Vert \bm c_i^T \Vert \neq 0 \}$. 
    We can rewrite the squared Frobenius norm as:
\begin{align}
    \Vert \Cm \Dm \Vert_{F_{\bm a}}^2 
    &= \sum_{i=1}^M \Vert \bm c_i^T \Dm \left(\diag(\bm a)\right)^{1/2} \Vert^2 \\
    &= \sum_{i \in \mathcal{I}}^M \Vert \bm c_i^T \Dm \left(\diag(\bm a)\right)^{1/2} \Vert^2 \\
    &= \sum_{i \in \mathcal{I}}^M 
        \Vert \bm c_i^T \left(\diag(\bm a)\right)^{1/2} \Vert^2
        \frac{\Vert \bm c_i^T \Dm \left(\diag(\bm a)\right)^{1/2} \Vert^2}
        {\Vert \bm c_i^T \left(\diag(\bm a)\right)^{1/2} \Vert^2}\\
    &\leq \sum_{i \in \mathcal{I}}^M 
        \Vert \bm c_i^T \left(\diag(\bm a)\right)^{1/2} \Vert^2
        \Vert \Dm \Vert_{op}^2\\
    &= \Vert \Cm \Vert_{F_{\bm a}}^2\Vert \Dm \Vert_{op}^2 .
\end{align}
\end{proof}

\subsection{Proof of Theorem \ref{main.thm}} \label{proof.sec}
We recall Theorem~\ref{main.thm}.
\main*

We now give the proof of Theorem \ref{main.thm} using 
Lemmas \ref{lemma4}-\ref{lemma7}.

\begin{proof}
    Using our intermediate result from Lemma \ref{lemma3}, 
    we decompose $\X_k(\I - \A)$ using our recursive definition of $\X_k$:
\begin{align}
    \X_k(\I - \A) 
    &= (\X_{k-1} - \eta \G_{k-1})\T_{k-1}(\I - \A) \\
    &= \X_{k-1}(\I - \A)\T_{k-1} - \eta \G_{k-1}(\T_{k-1} - \A) \label{commutetk.eq} \\
    &= [(\X_{k-2} - \eta \G_{k-2})\T_{k-2}(\I - \A)]
        \T_{k-1} - \eta \G_{k-1}(\T_{k-1} - \A) \\
    &= [\X_{k-2}(\I - \A)\T_{k-2} - \eta \G_{k-2}(\T_{k-2} - \A)]
        \T_{k-1} - \eta \G_{k-1}(\T_{k-1} - \A) \\
    &= \X_{k-2}(\I - \A)\T_{k-2}\T_{k-1} 
        - \eta \G_{k-2}(\T_{k-2}\T_{k-1} - \A)
         - \eta \G_{k-1}(\T_{k-1} - \A).
\end{align}
where (\ref{commutetk.eq}) follows from the commutability of 
$\T_k$ and $\A$ by Proposition~\ref{TkA.prop}.

Continuing this, we end up with:
\begin{align}
    \X_k(\I - \A) 
    &= \X_{1}(\I - \A)\prod_{l=1}^{k-1} \T_l
    - \eta \sum_{s=1}^{k-1} \G_s \left(\prod_{l=s}^{k-1}\T_l - \A\right).
\end{align}

Since all workers initialize their models to the same vector, 
$\X_{1}(\I - \A)\prod_{l=1}^{k-1}\T_k=\textbf{0}$, and thus we have:
\begin{align}
    \mathbb{E} \left\Vert \X_k(\I - \A) \right\Vert_{F_{\bm a}}^2
    &= \eta^2 \mathbb{E} \left\Vert\sum_{s=1}^{k-1} 
    \G_s \left(\prod_{l=s}^{k-1}\T_l - \A\right)\right\Vert_{F_{\bm a}}^2.
    \label{phi_G.eq}
\end{align}

Let $k=jq\tau + l\tau + f$, where $j$ is the number of
hub network averaging rounds, $l$ is the number of sub-network averaging rounds
since the last hub network averaging round, and $f$ is the number of local iterations
since the last sub-network averaging round.
Define:
 \[
 \Phi_{s,k-1} = \prod_{l=s}^{k-1}\T_l .
 \] 

Noting that $\V^j = \V$, 
and $\V \Z = \Z \V = \Z$ by Proposition~\ref{ZV.prop}, $\Phi_{s,k-1}$ can be expressed as:
\begin{align}
\Phi_{s,k-1} = 
\begin{cases}
    \I & jq\tau+l\tau < s < jq\tau+l\tau + f \\
    \V & jq\tau < s \leq jq\tau+l\tau \\
    \Z & (j-1)q\tau < s \leq jq\tau \\ 
    \Z^2 & (j-2)q\tau < s \leq (j-1)q\tau \\
 \vdots \\
    \Z^{j} & 1 \leq s \leq q\tau .
\end{cases}
\end{align}

 For $r < j$, let
 \[
\Y_r = 
  \sum_{s=rq\tau+1}^{(r+1)q\tau} \G_s  \\
 ,~~~~\Q_r = 
  \sum_{s=rq\tau+1}^{(r+1)q\tau} \nabla F(\X_s) \\
 \]
We also let $\Y_{j_1} = \sum_{s=jq\tau+1}^{jq\tau+l\tau} \G_s$, 
$\Y_{j_2} = \sum_{s=jq\tau+l\tau+1}^{jq\tau+l\tau+f} \G_s$,
$\Q_{j_1} = \sum_{s=jq\tau+1}^{jq\tau+l\tau} \nabla F(\X_s)$,
and $\Q_{j_2} = \sum_{s=jq\tau+l\tau+1}^{jq\tau+l\tau+f} \nabla F(\X_s)$.
With this in mind, 
we can split the sum in (\ref{phi_G.eq}) into 
batches for each hub network averaging period: 
\begin{align}
    \sum_{s=1}^{q\tau}\G_s \left(\Phi_{s,k-1} - \A\right) 
    &= \Y_0(\Z^j - \A) \\  
    \sum_{s=q\tau+1}^{2q\tau}\G_s \left(\Phi_{s,k-1} - \A\right) 
    &= \Y_1(\Z^{j-1} - \A) \\  
    ... \nonumber \\
    \sum_{s=(j-1)q\tau+1}^{jq\tau}\G_s \left(\Phi_{s,k-1} - \A\right) 
    &= \Y_{j-1}(\Z - \A) \\  
    \sum_{s=jq\tau+1}^{jq\tau+l\tau+f}\G_s \left(\Phi_{s,k-1} - \A\right) 
    &= \Y_{j_1}(\V - \A) + \Y_{j_2}(\I - \A).
\end{align}
Summing this all together, we get:
\begin{align}
    \sum_{s=1}^{k-1}\G_s \left(\Phi_{s,k-1} - \A\right) 
    &= \sum_{r=0}^{j-1} \Y_r(\Z^{j-r} - \A) 
    + \Y_{j_1}(\V - \A) + \Y_{j_2}(\I - \A). \label{sumys.eq}
\end{align}

Plugging (\ref{sumys.eq}) into (\ref{phi_G.eq}):
\begin{align}
 &   \mathbb{E} \left\Vert \X_k(\I - \A) \right\Vert_{F_{\bm a}}^2
   = \eta^2\mathbb{E} \left\Vert \sum_{r=0}^{j-1} 
        \Y_r(\Z^{j-r} - \A) + \Y_{j_1}(\V - \A) + \Y_{j_2}(\I - \A)
        \right\Vert_{F_{\bm a}}^2 \\
    &= \eta^2\mathbb{E} \Bigg\Vert \sum_{r=0}^{j-1} 
        (\Y_r - \Q_r)(\Z^{j-r} - \A) 
        + (\Y_{j_1}-\Q_{j_1})(\V - \A) \nonumber \\
 &~~~+(\Y_{j_2}-\Q_{j_2})(\I - \A)
        + \sum_{r=0}^{j-1} \Q_r(\Z^{j-r} - \A) 
        + \Q_{j_1}(\V - \A)+\Q_{j_2}(\I - \A)\Bigg\Vert_{F_{\bm a}}^2 \\
    &\leq \underbrace{2\eta^2\mathbb{E} \Bigg\Vert 
        \sum_{r=0}^{j-1} (\Y_r - \Q_r)(\Z^{j-r} - \A) 
        + (\Y_{j_1}-\Q_{j_1})(\V - \A)
        +(\Y_{j_2}-\Q_{j_2})(\I - \A)) 
        \Bigg\Vert_{F_{\bm a}}^2}_{T_1} \nonumber \\
      &~~~+ \underbrace{2\eta^2\mathbb{E}\Bigg\Vert
        \sum_{r=0}^{j-1} \Q_r(\Z^{j-r} - \A)
        + \Q_{j_1}(\V - \A)+\Q_{j_2}(\I - \A)\Bigg\Vert_{F_{\bm a}}^2}_{T_2}
        \label{t1t2.eq}
\end{align}
where (\ref{t1t2.eq}) follows from the fact that $\| \bm y + \bm z\|^2 \leq 2\| \bm y\|^2 + 2\| \bm z\|^2$. 

We first put a bound on $T_1$: 
\begin{align}
    &T_1 
    = 2\eta^2\mathbb{E} \left\Vert 
        \sum_{r=0}^{j-1} (\Y_r - \Q_r)(\Z^{j-r} - \A) 
        + (\Y_{j_1}-\Q_{j_1})(\V - \A)+(\Y_{j_2}-\Q_{j_2})(\I - \A) 
        \right\Vert_{F_{\bm a}}^2\\
    &= 2\eta^2 \Bigg(\sum_{r=0}^{j-1}
        \mathbb{E} \left\Vert (\Y_r - \Q_r) 
        (\Z^{j-r} - \A) \right\Vert_{F_{\bm a}}^2
        + \mathbb{E} \left\Vert (\Y_{j_1} - \Q_{j_1})
        (\V - \A)  \right\Vert_{F_{\bm a}}^2
        \nonumber \\ &~~~~~~~~~~~~~~~~~~~~~~~~~~~~~~~~~~~~~~~~~~~~~~~~~~~~~~~~~~~~~~~~~~~~~~~~~~~~~~~~
        + \mathbb{E} \left\Vert (\Y_{j_2} - \Q_{j_2})
        (\I - \A) \right\Vert_{F_{\bm a}}^2 \Bigg) 
        \nonumber \\
    &
        + \underbrace{2\eta^2\sum_{n=0}^{j-1}\sum_{l=0,l \neq n}^{j-1}
        \mathbb{E} \left| \underbrace{\Tr\left(
        \left(\diag(\bm a)\right)^{1/2}(\Z^{j-n} - \A)(\Y_n - \Q_n)^T 
        (\Y_l - \Q_l)(\Z^{j-l} - \A)\left(\diag(\bm a)\right)^{1/2}\right)}_{TR}\right|}_{TR_0}
        \nonumber \\
    &
        + \underbrace{4\eta^2\sum_{l=0}^{j-1}
        \mathbb{E} \left| \Tr\left(
        \left(\diag(\bm a)\right)^{1/2}(\V - \A)(\Y_{j_1} - \Q_{j_1})^T  
        (\Y_l - \Q_l)(\Z^{j-l} - \A)\left(\diag(\bm a)\right)^{1/2}\right)\right|}_{TR_1}
        \nonumber \\
    &
        + \underbrace{4\eta^2\sum_{l=0}^{j-1}
        \mathbb{E} \left| \Tr\left(
        \left(\diag(\bm a)\right)^{1/2}(\I - \A)(\Y_{j_2} - \Q_{j_2})^T  
        (\Y_l - \Q_l)(\Z^{j-l} - \A)\left(\diag(\bm a)\right)^{1/2}\right)\right|}_{TR_2}
        \nonumber \\
    &~~~
        + \underbrace{4\eta^2
        \mathbb{E} \left| \Tr\left(
        \left(\diag(\bm a)\right)^{1/2}(\V - \A)(\Y_{j_1} - \Q_{j_1})^T 
        (\Y_{j_2} - \Q_{j_2})(\I - \A)\left(\diag(\bm a)\right)^{1/2}\right)\right|}_{TR_3}.
        \label{needtrbound1.eq}
\end{align}

$TR$ can be bounded as:
\begin{align}
    TR
    &\leq \left\Vert(\Z^{j-n} - \A)(\Y_n - \Q_n)^T\right\Vert_{F_{\bm a}} 
    \left\Vert (\Y_l - \Q_l)(\Z^{j-l} - \A) \right\Vert_{F_{\bm a}} \label{uselemma81.eq}\\
    &\leq \left\Vert(\Z^{j-n} - \A)\right\Vert_{op} 
        \left\Vert \Y_n - \Q_n\right\Vert_{F_{\bm a}} 
        \left\Vert \Y_l - \Q_l \right\Vert_{F_{\bm a}}
        \left\Vert (\Z^{j-l} - \A) \right\Vert_{op} \label{uselemma71.eq}  \\
    &\leq \zeta^{2j - n - l} 
        \left\Vert \Y_n - \Q_n\right\Vert_{F_{\bm a}}
        \left\Vert \Y_l - \Q_l \right\Vert_{F_{\bm a}} \label{uselemma91.eq}\\
    &\leq \frac{1}{2}\zeta^{2j - n - l} \left[ 
        \left\Vert \Y_n - \Q_n\right\Vert_{F_{\bm a}}^2 +
        \left\Vert \Y_l - \Q_l \right\Vert_{F_{\bm a}}^2 \right]
\end{align}
where (\ref{uselemma81.eq}) follows from Lemma~\ref{lemma8},
(\ref{uselemma71.eq}) follows from Lemma~\ref{lemma7},
and (\ref{uselemma91.eq}) follows from Lemma~\ref{lemma9}.
We can similarly bound $TR_1$ and $TR_3$:
\begin{align}
    TR_1 &\leq
    2\eta^2\sum_{l=0}^{j-1} \zeta^{j - l} \left[
        \mathbb{E} \left\Vert \Y_{j_1} - \Q_{j_1}\right\Vert_{F_{\bm a}}^2 + 
        \mathbb{E} \left\Vert \Y_l - \Q_l \right\Vert_{F_{\bm a}}^2 \right]\\
    TR_2 &\leq
    2\eta^2\sum_{l=0}^{j-1} \zeta^{j - l} \left[
        \mathbb{E} \left\Vert \Y_{j_2} - \Q_{j_2}\right\Vert_{F_{\bm a}}^2 +
        \mathbb{E} \left\Vert \Y_l - \Q_l \right\Vert_{F_{\bm a}}^2 \right]\\
    TR_3 &\leq
    2\eta^2  \left[
        \mathbb{E} \left\Vert \Y_{j_1} - \Q_{j_1}\right\Vert_{F_{\bm a}}^2 + 
        \mathbb{E} \left\Vert \Y_{j_2} - \Q_{j_2}\right\Vert_{F_{\bm a}} \right].
\end{align}

Summing $TR_0$ through $TR_3$, we get:
\begin{align}
    \sum_{t=0}^3 TR_t 
%    &\leq
%        2\eta^2\sum_{n=0}^{j-1}\sum_{l=0,l \neq n}^{j-1}
%        \zeta^{2j - n - l} \mathbb{E}
%        \left\Vert \Y_n - \Q_n\right\Vert_{F_{\bm a}} 
%        \left\Vert \Y_l - \Q_l \right\Vert_{F_{\bm a}} 
%    \nonumber \\ &~~~~~~~~~~~~~~~~~~~
%        + 4\eta^2 
%        \sum_{l=0}^{j-1} \zeta^{j - l}  
%            \left\Vert \Y_{j_1} - \Q_{j_1}\right\Vert_{F_{\bm a}}
%            \left\Vert \Y_l - \Q_l \right\Vert_{F_{\bm a}}
%    \nonumber \\ &~~~~~~~~~~~~~~~~~~~
%        + 4\eta^2 
%        \sum_{l=0}^{j-1} \zeta^{j - l}  
%            \left\Vert \Y_{j_2} - \Q_{j_2}\right\Vert_{F_{\bm a}}
%            \left\Vert \Y_l - \Q_l \right\Vert_{F_{\bm a}}
%    \nonumber \\ &~~~~~~~~~~~~~~~~~~~
%        + 4\eta^2  
%        \left\Vert \Y_{j_1} - \Q_{j_1}\right\Vert_{F_{\bm a}} 
%        \left\Vert \Y_{j_2} - \Q_{j_2} \right\Vert_{F_{\bm a}} \\
    &\leq
        \eta^2\sum_{n=0}^{j-1}\sum_{l=0,l \neq n}^{j-1}
        \zeta^{2j - n - l}
        \left[ \mathbb{E} \left\Vert \Y_n - \Q_n\right\Vert_{F_{\bm a}}^2 
        + \mathbb{E} \left\Vert \Y_l - \Q_l \right\Vert_{F_{\bm a}}^2 \right] 
    \nonumber \\ &~~~~~~~~~~~~~~~~~~~
        + 2\eta^2 
        \sum_{l=0}^{j-1} \zeta^{j - l}  
        \left[\mathbb{E} \left\Vert \Y_{j_1} - \Q_{j_1}\right\Vert_{F_{\bm a}}^2 
        +  \mathbb{E} \left\Vert \Y_l - \Q_l \right\Vert_{F_{\bm a}}^2 \right]
    \nonumber \\ &~~~~~~~~~~~~~~~~~~~
        + 2\eta^2 
        \sum_{l=0}^{j-1} \zeta^{j - l}  
        \left[\mathbb{E}\left\Vert \Y_{j_2} - \Q_{j_2}\right\Vert_{F_{\bm a}}^2 
        +  \mathbb{E}\left\Vert \Y_l - \Q_l \right\Vert_{F_{\bm a}}^2 \right]
    \nonumber \\ &~~~~~~~~~~~~~~~~~~~
        + 2\eta^2 \mathbb{E}\left\Vert \Y_{j_1} - \Q_{j_1}\right\Vert_{F_{\bm a}}^2 
        + 2\eta^2 \mathbb{E}\left\Vert \Y_{j_2} - \Q_{j_2} \right\Vert_{F_{\bm a}}^2 \\
    &\leq 
        2\eta^2\sum_{n=0}^{j-1}\sum_{l=0,l \neq n}^{j-1}
        \zeta^{2j - n - l}
        \mathbb{E}\left\Vert \Y_n - \Q_n\right\Vert_{F_{\bm a}}^2 
    \nonumber \\ &~~~
        + 2\eta^2 
        \sum_{l=0}^{j-1} \zeta^{j - l}  
        \mathbb{E}\left\Vert \Y_l - \Q_l\right\Vert_{F_{\bm a}}^2 
        + 2\eta^2 
        \sum_{l=0}^{j-1} \zeta^{j - l}  
        \mathbb{E}\left\Vert \Y_l - \Q_l\right\Vert_{F_{\bm a}}^2 
    \nonumber \\ &~~~~~~~~~~~~~~~~~~~
        + 2\eta^2 
        \sum_{l=0}^{j} \zeta^{j - l}  
        \mathbb{E}\left\Vert \Y_{j_1} - \Q_{j_1}\right\Vert_{F_{\bm a}}^2 
        + 2\eta^2 
        \sum_{l=0}^{j} \zeta^{j - l}  
        \mathbb{E}\left\Vert \Y_{j_2} - \Q_{j_2}\right\Vert_{F_{\bm a}}^2 
    \label{sym.eq} \\
    &= 
        2\eta^2\sum_{n=0}^{j-1} \zeta^{j - n}
        \mathbb{E}\left\Vert \Y_n - \Q_n\right\Vert_{F_{\bm a}}^2 
        \sum_{l=0,l \neq n}^{j-1} \zeta^{j - l} 
        + 4\eta^2 
        \sum_{l=0}^{j-1} \zeta^{j - l}  
        \mathbb{E}\left\Vert \Y_l - \Q_l\right\Vert_{F_{\bm a}}^2 
    \nonumber \\ &~~~~~~~~~~~~~~~~~~~
        + 2\eta^2 
        \mathbb{E}\left\Vert \Y_{j_1} - \Q_{j_1}\right\Vert_{F_{\bm a}}^2 
        \sum_{l=0}^{j} \zeta^{j - l}  
        + 2\eta^2 
        \mathbb{E}\left\Vert \Y_{j_2} - \Q_{j_2}\right\Vert_{F_{\bm a}}^2 
        \sum_{l=0}^{j} \zeta^{j - l}  
\label{trbound1.eq}
\end{align}
where (\ref{sym.eq}) follows from the symmetry of the $n$ and $l$ indices.

Plugging (\ref{trbound1.eq}) back into (\ref{needtrbound1.eq}):
\begin{align}
    T_1 
    &\leq 2\eta^2 \sum_{r=0}^{j-1}\mathbb{E} 
        \left\Vert (\Y_r - \Q_r) \right\Vert_{F_{\bm a}}^2
        \left\Vert (\Z^{j-r} - \A) \right\Vert_{op}^2
        + 2\eta^2 \mathbb{E} \left\Vert (\Y_{j_1} - \Q_{j_1}) \right\Vert_{F_{\bm a}}^2
        \left\Vert \V - \A \right\Vert_{op}^2
        \nonumber \\ &~~~~~~~~
        + 2\eta^2 \mathbb{E} \left\Vert (\Y_{j_2} - \Q_{j_2})\right\Vert_{F_{\bm a}}^2
        \left\Vert \I - \A \right\Vert_{op}^2 
         + 2\eta^2\sum_{n=0}^{j-1} \zeta^{j - n} \mathbb{E}
        \left\Vert \Y_n - \Q_n\right\Vert_{F_{\bm a}}^2 
        \sum_{l=0,l \neq n}^{j-1} \zeta^{j - l} 
    \nonumber \\ &~~~~~~~~~~~~~~~~~~
        + 2\eta^2 
        \mathbb{E}\left\Vert \Y_{j_1} - \Q_{j_1}\right\Vert_{F_{\bm a}}^2 
        \sum_{l=0}^{j} \zeta^{j - l}  
        + 2\eta^2 
        \mathbb{E}\left\Vert \Y_{j_2} - \Q_{j_2}\right\Vert_{F_{\bm a}}^2 
        \sum_{l=0}^{j} \zeta^{j - l} 
    \nonumber \\ &~~~~~~~~~~~~~~~~~~~~~~~~~~~~~~~~~~~~~~~~~~~~~~~~~~~~~~~~~~~~~~~~~~~~~~~~~~~
        + 4\eta^2 
        \sum_{l=0}^{j-1} \zeta^{j - l}  
        \mathbb{E}\left\Vert \Y_l - \Q_l\right\Vert_{F_{\bm a}}^2 
        \label{needlemma71.eq} \\
    &\leq 2\eta^2 \sum_{r=0}^{j-1}\mathbb{E} 
        \left\Vert (\Y_r - \Q_r) \right\Vert_{F_{\bm a}}^2
        \zeta^{2(j-r)}
        + 2\eta^2\mathbb{E} \left\Vert (\Y_{j_1} - \Q_{j_1}) \right\Vert_{F_{\bm a}}^2
        \nonumber \\ &~~~~~~~~
        + 2\eta^2\mathbb{E} \left\Vert (\Y_{j_2} - \Q_{j_2})\right\Vert_{F_{\bm a}}^2
        + 2\eta^2\sum_{n=0}^{j-1} \zeta^{j - n} \mathbb{E}
        \left\Vert \Y_n - \Q_n\right\Vert_{F_{\bm a}}^2 
        \sum_{l=0,l \neq n}^{j-1} \zeta^{j - l} 
    \nonumber \\ &~~~~~~~~~~~~~~~~~~
        + 2\eta^2 
        \mathbb{E}\left\Vert \Y_{j_1} - \Q_{j_1}\right\Vert_{F_{\bm a}}^2 
        \sum_{l=0}^{j} \zeta^{j - l}  
        + 2\eta^2 
        \mathbb{E}\left\Vert \Y_{j_2} - \Q_{j_2}\right\Vert_{F_{\bm a}}^2 
        \sum_{l=0}^{j} \zeta^{j - l} 
    \nonumber \\ &~~~~~~~~~~~~~~~~~~~~~~~~~~~~~~~~~~~~~~~~~~~~~~~~~~~~~~~~~~~~~~~~~~~~~~~~~~~
        + 4\eta^2 
        \sum_{l=0}^{j-1} \zeta^{j - l}  
        \mathbb{E}\left\Vert \Y_l - \Q_l\right\Vert_{F_{\bm a}}^2 
        \label{needcombinej.eq}
\end{align}
where (\ref{needlemma71.eq}) follows from Lemma~\ref{lemma7},
and (\ref{needcombinej.eq}) follows from Lemma~\ref{lemma9}.

We further bound $T_1$:
\begin{align}
    T_1 
    &\leq 2\eta^2 \sum_{r=0}^{j-1}\mathbb{E} 
        \left\Vert (\Y_r - \Q_r) \right\Vert_{F_{\bm a}}^2
        \zeta^{2(j-r)}
        + 2\eta^2\mathbb{E} \left\Vert (\Y_{j_1} - \Q_{j_1}) \right\Vert_{F_{\bm a}}^2
        \nonumber \\ &~~~~~~~~
        + 2\eta^2\mathbb{E} \left\Vert (\Y_{j_2} - \Q_{j_2})\right\Vert_{F_{\bm a}}^2
        + 2\eta^2\sum_{n=0}^{j-1} \zeta^{j - n} \mathbb{E}
        \left\Vert \Y_n - \Q_n\right\Vert_{F_{\bm a}}^2 
        \frac{\zeta}{1-\zeta} 
    \nonumber \\ &~~~~~~~~~~~~~~~~~~
        + 2\eta^2 
        \mathbb{E}\left\Vert \Y_{j_1} - \Q_{j_1}\right\Vert_{F_{\bm a}}^2 
        \frac{1}{1-\zeta} 
        + 2\eta^2 
        \mathbb{E}\left\Vert \Y_{j_2} - \Q_{j_2}\right\Vert_{F_{\bm a}}^2 
        \frac{1}{1-\zeta} 
    \nonumber \\ &~~~~~~~~~~~~~~~~~~~~~~~~~~~~~~~~~~~~~~~~~~~~~~~~~~~~~~~~~~~~~~~~~~~~~~~~~~~
        + 4\eta^2 
        \sum_{l=0}^{j-1} \zeta^{j - l}  
        \mathbb{E}\left\Vert \Y_l - \Q_l\right\Vert_{F_{\bm a}}^2 
        \label{zeta_sum1.eq}\\
%    &= 2\eta^2\sum_{r=0}^{j-1} 
%        \left(\zeta^{2(j-r)} + 2\zeta^{j-r} + \frac{\zeta^{j-r+1}}{1-\zeta}\right)
%        \mathbb{E} \left\Vert(\Y_r - \Q_r)\right\Vert_{F_{\bm a}}^2 
%    \nonumber \\ &~~~~~~~~~~~~~~~~~~
%        + 2\eta^2 \left(1 + \frac{1}{1-\zeta} \right)
%        \left\Vert \Y_{j_1} - \Q_{j_1}\right\Vert_{F_{\bm a}}^2 
%        + 2\eta^2 \left(1 + \frac{1}{1-\zeta} \right)
%        \left\Vert \Y_{j_2} - \Q_{j_2}\right\Vert_{F_{\bm a}}^2 \\
    &= 2\eta^2\sum_{r=0}^{j-1} 
        \left(\zeta^{2(j-r)} + 2\zeta^{j-r} + \frac{\zeta^{j-r+1}}{1-\zeta}\right)
        \mathbb{E} \left\Vert(\Y_r - \Q_r)\right\Vert_{F_{\bm a}}^2 
    \nonumber \\ &~~~~~~~~~~~~~~~~~~
        + 2\eta^2 \left(\frac{2-\zeta}{1-\zeta} \right)
        \mathbb{E}\left\Vert \Y_{j_1} - \Q_{j_1}\right\Vert_{F_{\bm a}}^2 
        + 2\eta^2 \left(\frac{2-\zeta}{1-\zeta} \right)
        \mathbb{E}\left\Vert \Y_{j_2} - \Q_{j_2}\right\Vert_{F_{\bm a}}^2 
        \label{needymq.eq}
\end{align}
where (\ref{zeta_sum1.eq}) follows from the summation formulae of a power series: 
\begin{align}
    \sum_{l=0}^{j} \zeta^{j-l} 
    \leq \sum_{l=-\infty}^{j} \zeta^{j - l} 
    \leq \frac{1}{1-\zeta},~~~~~~~~~~
    \sum_{l=0}^{j-1} \zeta^{j-l} 
    \leq \sum_{l=-\infty}^{j-1} \zeta^{j - l} 
    \leq \frac{\zeta}{1-\zeta}. \label{seriessum.eq}
\end{align}

Taking a closer look at $\mathbb{E}\left\Vert(\Y_r - \Q_r)\right\Vert_{F_{\bm a}}^2$
for $0 \leq r < j$:
\begin{align}
    \mathbb{E}\left\Vert(\Y_r - \Q_r)\right\Vert_{F_{\bm a}}^2
    &= \mathbb{E}\left\Vert\sum_{s=rq\tau+1}^{(r+1)q\tau}
        (\G_s - \nabla F(\X_s))\right\Vert_{F_{\bm a}}^2  \\
    &= \sum_{i=1}^N \bm a_i \mathbb{E}\left\Vert\sum_{s=rq\tau+1}^{(r+1)q\tau}
        (g_s^i - \nabla F(\bm x_s^{(i)}))\right\Vert^2  \label{bad2.eq} \\
    &\leq \sum_{i=1}^N \bm a_i q\tau \sum_{s=rq\tau+1}^{(r+1)q\tau}
       \mathbb{E}\left\Vert(g_s^i - \nabla F(\bm x_s^{(i)}))\right\Vert^2 
       \label{tri3.eq} \\
    &\leq q\tau \left( \sum_{i=1}^N \bm a_i \sum_{s=rq\tau+1}^{(r+1)q\tau}
        \left(\bm p_{i}(\beta-1)+1\right)
        \mathbb{E}\left\Vert \nabla F(\bm x_s^{(i)}) \right\Vert^2 \right)
        + q^2\tau^2 \sigma^2\sum_{i=1}^N \bm a_i \bm p_{i} \label{varasp1.eq} \\
    &= q\tau \left( \sum_{i=1}^N \bm a_i \sum_{s=rq\tau+1}^{(r+1)q\tau}
        \left(\bm p_{i}(\beta-1)+1\right)
    \mathbb{E}\left\Vert \nabla F(\bm x_s^{(i)}) \right\Vert^2 \right) + q^2\tau^2 \sigma^2\Ppos .
    \label{ymq.eq}
\end{align}
where 
(\ref{varasp1.eq}) follows from Assumption~\ref{variance.asp}
and (\ref{asp5implication.eq}).

Similarly, for $r=j_1$ and $r=j_2$:
\begin{align}
    \mathbb{E}\left\Vert(\Y_{j_1} - \Q_{j_1})\right\Vert_{F_{\bm a}}^2
    &\leq l\tau \left( \sum_{i=1}^N \bm a_i \sum_{s=jq\tau+1}^{jq\tau+l\tau}
        \left(\bm p_{i}(\beta-1)+1\right)
        \mathbb{E}\left\Vert \nabla F(\bm x_s^{(i)}) \right\Vert^2 \right)
        + l^2\tau^2\sigma^2\Ppos
    \label{ymqj1.eq}\\
    \mathbb{E}\left\Vert(\Y_{j_2} - \Q_{j_2})\right\Vert_{F_{\bm a}}
    &\leq (f-1)\left(\sum_{i=1}^N \bm a_i \sum_{s=jq\tau+l\tau+1}^{jq\tau+l\tau+f-1}
        \left(\bm p_{i}(\beta-1)+1\right)
        \mathbb{E}\left\Vert \nabla F(\bm x_s^{(i)}) \right\Vert^2  \right)
    \nonumber \\ &~~~~~~~~~~~~~~~~~~~~~~~~~~~~~~~~~~~~~~~~~~~~~~~~~~~~~~~~~~~~~~~~~~~~~~~~~~
        + (f-1)^2\sigma^2\Ppos .
    \label{ymqj2.eq}
\end{align}

%Summing (\ref{ymqj1.eq}) and (\ref{ymqj2.eq}) together:
%\begin{align}
%    \mathbb{E}\left\Vert(\Y_{j_1} - \Q_{j_1})\right\Vert_{F_{\bm a}}^2
%    &+ \mathbb{E}\left\Vert(\Y_{j_2} - \Q_{j_2})\right\Vert_{F_{\bm a}}^2
%    \leq 
%    \nonumber \\ &~~~
%    \sum_{i=1}^N \bm a_i l\tau \sum_{s=jq\tau+1}^{jq\tau+l\tau}
%        \left(\bm p_{i}(\beta-1)+1\right)
%        \left\Vert \nabla F(\bm x_s^{(i)}) \right\Vert^2 
%    \nonumber \\ &~~~
%    \sum_{i=1}^N \bm a_i (f-1) \sum_{s=jq\tau+l\tau+1}^{jq\tau+l\tau+f-1}
%        \left(\bm p_{i}(\beta-1)+1\right)
%        \left\Vert \nabla F(\bm x_s^{(i)}) \right\Vert^2     
%    \nonumber \\ &~~~~~~~~~~~~~~~~~~~~~~~~~~~~~~~~~~~~~~~~~~~~~~~~~~~~~~~~~~~~~~~~~~~~~~
%        + (l^2\tau^2+(f-1)^2)\sigma^2\Ppos
%    \label{ymqj1j2.eq}
%\end{align}

Plugging (\ref{ymq.eq}), (\ref{ymqj1.eq}), and (\ref{ymqj2.eq}) into (\ref{needymq.eq}), 
we can bound $T_1$ as follows:
\begin{align}
    T_1 
    &\leq 
    2\eta^2\sigma^2 
        \left(\left(q^2\tau^2 \sum_{r=0}^{j-1} 
        \left(\zeta^{2(j-r)} + 2\zeta^{j-r} + \frac{\zeta^{j-r+1}}{1-\zeta}\right) \right)
        + \left(\frac{2-\zeta}{1-\zeta}\right)\left(l^2\tau^2+(f-1)^2\right)\right)
        \Ppos   
    \nonumber \\  &~~~
        + 2\eta^2 q\tau \sum_{r=0}^{j-1}  
        \left(\zeta^{2(j-r)} + 2\zeta^{j-r} + \frac{\zeta^{j-r+1}}{1-\zeta}\right)
        \sum_{s=rq\tau+1}^{(r+1)q\tau} \sum_{i=1}^N \bm a_i 
        \left(\bm p_{i}(\beta-1)+1\right)\mathbb{E}\left\Vert \nabla F(\bm x_s^{(i)}) \right\Vert^2
    \nonumber \\  &~~~
        + 2\eta^2 \left(\frac{2-\zeta}{1-\zeta} \right) l\tau \left(  
        \sum_{i=1}^N \bm a_i \sum_{s=jq \tau+1}^{jq\tau+l\tau}
        \left(\bm p_{i}(\beta-1)+1\right)\mathbb{E}\left\Vert \nabla F(\bm x_s^{(i)}) \right\Vert^2 \right)
    \nonumber \\  &~~~
        + 2\eta^2 \left(\frac{2-\zeta}{1-\zeta} \right) (f-1)\left(  
                \sum_{i=1}^N \bm a_i \sum_{s=jq\tau+l\tau+1}^{jq\tau+l\tau+f-1}
        \left(\bm p_{i}(\beta-1)+1\right)\mathbb{E}\left\Vert \nabla F(\bm x_s^{(i)}) \right\Vert^2 \right).
\end{align}

Referring back to Lemma~\ref{lemma3}, our goal is to sum $T_1$ over
$k=1, \ldots, K$ iterations. First, we sum over the $j$-th sub-network 
update period up to the $j$-th hub network averaging, 
for ${l=0, \ldots ,q-1}$ and ${f=1, \ldots,\tau}$:
\begin{align}
 &  \sum_{l=0}^{q-1} \sum_{f=1}^{\tau} T_1
    \leq 2\eta^2\sigma^2 q^3\tau^3 
        \sum_{r=0}^{j-1} \left(\zeta^{2(j-r)} + 2\zeta^{j-r} + \frac{\zeta^{j-r+1}}{1-\zeta}\right)\Ppos \nonumber \\
  &~~~~~~~  + 2\eta^2\sigma^2 \left( \frac{2-\zeta}{1-\zeta} \right)
        \left(\tau^3\frac{q(q-1)(2q-1)}{6}
        +q\frac{\tau(\tau-1)(2\tau-1)}{6}\right)  \Ppos \nonumber \\
 &~~~~~~~  + 2\eta^2 q^2\tau^2 \sum_{r=0}^{j-1} 
        \left(\zeta^{2(j-r)} + 2\zeta^{j-r} + \frac{\zeta^{j-r+1}}{1-\zeta}\right) 
        \sum_{s=rq\tau+1}^{(r+1)q\tau} \sum_{i=1}^N \bm a_i 
        \left(\bm p_{i}(\beta-1)+1\right)\mathbb{E}\left\Vert \nabla F(\bm x_s^{(i)}) \right\Vert^2 \nonumber \\
&~~~~~~~ + \eta^2 \left( \frac{2-\zeta}{1-\zeta} \right)   q(q-1)\tau^2
        \sum_{i=1}^N \bm a_i  \sum_{s=jq \tau+1}^{j(q\tau+1)}
        \left(\bm p_{i}(\beta-1)+1\right)\mathbb{E}\left\Vert \nabla F(\bm x_s^{(i)}) \right\Vert^2  \nonumber \\
&~~~~~~~ + \eta^2 \left(\frac{2-\zeta}{1-\zeta} \right)  q^2\tau(\tau-1)
        \sum_{i=1}^N \bm a_i  \sum_{s=j(q\tau+1)+1}^{j(q\tau+1)+\tau-1}
        \left(\bm p_{i}(\beta-1)+1\right)\mathbb{E}\left\Vert \nabla F(\bm x_s^{(i)}) \right\Vert^2 .
        \label{T1bound1.eq}
\end{align}

Let: 
\begin{align}
\Gamma_r = 
        \left(\zeta^{2(j-r)} + 2\zeta^{j-r} + \frac{\zeta^{j-r+1}}{1-\zeta}\right).
\end{align}
Note that $\Gamma_j = \frac{3-2\zeta}{1-\zeta} > \frac{2-\zeta}{1-\zeta}$.
Using this inequality, we can bound the sum of the last three terms of (\ref{T1bound1.eq})  to get
%We can upper bound the coefficients
%in the last three terms of (\ref{T1bound1.eq}) and combine to get 
$2q^2\tau^2 \sum_{r=0}^{j}\Gamma_r$:
\begin{align}
 &  \sum_{l=0}^{q-1} \sum_{f=1}^{\tau} T_1
    \leq 2\eta^2\sigma^2 q^3\tau^3 
        \sum_{r=0}^{j-1} \left(\zeta^{2(j-r)} + 2\zeta^{j-r} + \frac{\zeta^{j-r+1}}{1-\zeta}\right)\Ppos \nonumber \\
  &~~~~~~~  + 2\eta^2\sigma^2 \left( \frac{2-\zeta}{1-\zeta} \right)
        \left(\tau^3\frac{q(q-1)(2q-1)}{6}
        +q\frac{\tau(\tau-1)(2\tau-1)}{6}\right)  \Ppos \nonumber \\
 &~~~~~~~  + 2\eta^2q^2\tau^2 \sum_{r=0}^{j} \Gamma_r 
        \sum_{s=rq\tau+1}^{(r+1)q\tau} \sum_{i=1}^N \bm a_i 
        \left(\bm p_{i}(\beta-1)+1\right)\mathbb{E}\left\Vert \nabla F(\bm x_s^{(i)}) \right\Vert^2 .
        \label{T1bound12.eq}
\end{align}

Summing (\ref{T1bound12.eq}) over the hub network averaging periods $j=0, \ldots ,K/(q\tau)-1$, we obtain:
\begin{align}
&    \sum_{j=0}^{K/(q\tau)-1}  \sum_{l=0}^{q-1} \sum_{f=1}^{\tau} T_1 
    \leq 
    2\eta^2\sigma^2  
        q^3\tau^3 
            \sum_{j=0}^{K/(q\tau)-1}\sum_{r=0}^{j-1} 
            \left(\zeta^{2(j-r)} + 2\zeta^{j-r} + \frac{\zeta^{j-r+1}}{1-\zeta}\right) \Ppos 
    \nonumber \\ &~~~
            + 2\eta^2\sigma^2 K\left( \frac{2-\zeta}{1-\zeta} \right)
            \left(\tau^2\frac{(q-1)(2q-1)}{6}
            +\frac{(\tau-1)(2\tau-1)}{6}\right) \Ppos 
    \nonumber \\ &~~~
        + 2\eta^2q^2\tau^2 \sum_{j=0}^{K/(q\tau)-1}
        \sum_{r=0}^{j} \Gamma_r
        \sum_{s=rq\tau+1}^{(r+1)q\tau} \sum_{i=1}^N \bm a_i 
        \left(\bm p_{i}(\beta-1)+1\right)\mathbb{E}\left\Vert \nabla F(\bm x_s^{(i)}) \right\Vert^2 \\
    &= 2\eta^2\sigma^2  
        q^3\tau^3 
            \sum_{r=0}^{K/(q\tau)-2}\sum_{j=r+1}^{K/(q\tau)-1} 
            \left(\zeta^{2(j-r)} + 2\zeta^{j-r} + \frac{\zeta^{j-r+1}}{1-\zeta}\right) \Ppos 
    \nonumber \\ &~~~
            + 2\eta^2\sigma^2 K\left( \frac{2-\zeta}{1-\zeta} \right)
            \left(\tau^2\frac{(q-1)(2q-1)}{6}
            +\frac{(\tau-1)(2\tau-1)}{6}\right) \Ppos 
    \nonumber \\ &~~~
        + 2\eta^2q^2\tau^2 \sum_{r=0}^{K/(q\tau)-1} 
        \left( \sum_{j=r}^{K/(q\tau)-1} \Gamma_j \right)
        \left( \sum_{s=rq\tau+1}^{(r+1)q\tau} \sum_{i=1}^N \bm a_i 
        \left(\bm p_{i}(\beta-1)+1\right)\mathbb{E}\left\Vert \nabla F(\bm x_s^{(i)}) \right\Vert^2 \right)  .
    \label{T1boundsum.eq}
\end{align}

Applying the following summation formula to sum over $\Gamma_j$, we obtain
\begin{align}
\sum_{j=r}^{K/(q\tau)-1}         
\left(\zeta^{2(j-r)} + 2\zeta^{j-r} + \frac{\zeta^{j-r+1}}{1-\zeta}\right) 
&\leq 
\sum_{j=r}^{\infty} 
\left(\zeta^{2(j-r)} + 2\zeta^{j-r} + \frac{\zeta^{j-r+1}}{1-\zeta}\right) \\
&\leq 
\frac{1}{1 - \zeta^2} + \frac{2}{1-\zeta} + \frac{\zeta}{(1-\zeta)^2}.
\label{summation2Gamma.eq}
\end{align}
We let $\Gamma = \frac{1}{1 - \zeta^2} + \frac{2}{1-\zeta} + \frac{\zeta}{(1-\zeta)^2}$. 
We can also apply this following summation formula to the first term in (\ref{T1boundsum.eq}):
\begin{align}
\sum_{j=r+1}^{K/(q\tau)-1}         
\left(\zeta^{2(j-r)} + 2\zeta^{j-r} + \frac{\zeta^{j-r+1}}{1-\zeta}\right) 
&\leq 
\sum_{j=r+1}^{\infty} 
\left(\zeta^{2(j-r)} + 2\zeta^{j-r} + \frac{\zeta^{j-r+1}}{1-\zeta}\right) \\
&\leq 
\frac{\zeta^2}{1 - \zeta^2} + \frac{2\zeta}{1-\zeta} + \frac{1}{(1-\zeta)^2}.
\label{summation2.eq}
\end{align}

Applying the summation formula in (\ref{summation2.eq}), 
plugging $\Gamma$ in, 
and indexing the iterations in terms of $k$, we bound (\ref{T1boundsum.eq}) as:
\begin{align}
    &\sum_{k=1}^{K} T_1 
    \leq 2\eta^2\sigma^2  
        q^3\tau^3 \left(\frac{K}{q\tau}-1 \right) 
        \left(\frac{\zeta^2}{1 - \zeta^2} + \frac{2\zeta}{1-\zeta} + \frac{1}{(1-\zeta)^2} \right) \Ppos
    \nonumber \\ &~~~
            + 2\eta^2\sigma^2 K\left( \frac{2-\zeta}{1-\zeta} \right)
            \left(\tau^2\frac{(q-1)(2q-1)}{6}
            +\frac{(\tau-1)(2\tau-1)}{6}\right) \Ppos 
    \nonumber \\ &~~~
        + 2\eta^2q^2\tau^2 \Gamma 
        \sum_{k=1}^{K} \sum_{i=1}^N \bm a_i 
        \left(\bm p_{i}(\beta-1)+1\right)\mathbb{E}\left\Vert \nabla F(\bm x_k^{(i)}) \right\Vert^2 .
    \label{T1bound2.eq}
\end{align}

%Let $\Gamma = 2q^2\tau^2\max\{\left(\frac{\zeta^2}{1 - \zeta^2} + \frac{2\zeta}{1-\zeta} + \frac{1}{(1-\zeta)^2} \right),
%\left( \frac{2-\zeta}{1-\zeta} \right)\}$. 
%We combine the last three sums in
%(\ref{T1bound2un.eq}) by upper bounding the coefficient with $\Gamma$:
%
%\begin{align}
%    &\sum_{k=1}^{K} T_1 
%    \leq 2\eta^2\sigma^2  
%        q^3\tau^3 \left(\frac{K}{q\tau}-1 \right) 
%        \left(\frac{\zeta^2}{1 - \zeta^2} + \frac{2\zeta}{1-\zeta} + \frac{1}{(1-\zeta)^2} \right) \Ppos
%    \nonumber \\ &~~~
%            + 2\eta^2\sigma^2 K\left( \frac{2-\zeta}{1-\zeta} \right)
%            \left(\tau^2\frac{(q-1)(2q-1)}{6}
%            +\frac{(\tau-1)(2\tau-1)}{6}\right) \Ppos 
%    \nonumber \\ &~~~
%        + \eta^2 \Gamma
%        \sum_{k=1}^{K} \sum_{i=1}^N \bm a_i 
%        \left(\bm p_{i}(\beta-1)+1\right)\mathbb{E}\left\Vert \nabla F(\bm x_k^{(i)}) \right\Vert^2 
%    \label{T1bound2.eq}
%\end{align}

Now we bound $T_2$:
\begin{align}
    T_2 &= 2\eta^2\mathbb{E}\left\Vert
        \sum_{r=0}^{j-1} \Q_r(\Z^{j-r} - \A)
        + \Q_{j_1}(\V - \A)+\Q_{j_2}(\I - \A)\right\Vert_{F_{\bm a}}^2 \\ 
    &= 2\eta^2\sum_{r=0}^{j-1}\mathbb{E}\left\Vert 
        \Q_r(\Z^{j-r} - \A)\right\Vert_{F_{\bm a}}^2
        +2\eta^2\mathbb{E}\left\Vert \Q_{j_1}(\V - \A)\right\Vert_{F_{\bm a}}^2
        +2\eta^2\mathbb{E}\left\Vert \Q_{j_2}(\I - \A)\right\Vert_{F_{\bm a}}^2  
        \nonumber \\
    &~~~
        + \underbrace{2\eta^2\sum_{n=0}^{j-1}\sum_{l=0,l \neq n}^{j-1}
        \mathbb{E} \left[ \underbrace{\Tr\left(\left(\diag(\bm a)\right)^{1/2}
        (\Z^{j-n} - \A)\Q_n^T \Q_l(\Z^{j-l} - \A)\left(\diag(\bm a)\right)^{1/2}\right)}_{TR'}\right]}_{TR'_0}
        \nonumber \\
    &~~~~~~~~~~~~~~~~~~~~~~~~~
        + \underbrace{4\eta^2\sum_{l=0}^{j-1}
        \mathbb{E} \left[ \Tr\left(\left(\diag(\bm a)\right)^{1/2}
        (\V - \A)\Q_{j_1}^T \Q_l(\Z^{j-l} - \A)\left(\diag(\bm a)\right)^{1/2}\right)\right]}_{TR'_1}
        \nonumber \\
    &~~~~~~~~~~~~~~~~~~~~~~~~~
        + \underbrace{4\eta^2\sum_{l=0}^{j-1}
        \mathbb{E} \left[ \Tr\left(\left(\diag(\bm a)\right)^{1/2}
        (\I - \A)\Q_{j_2}^T \Q_l(\Z^{j-l} - \A)\left(\diag(\bm a)\right)^{1/2}\right)\right]}_{TR'_2}
        \nonumber \\
    &~~~~~~~~~~~~~~~~~~~~~~~~~~~~~
        + \underbrace{4\eta^2
        \mathbb{E} \left[ \Tr\left(\left(\diag(\bm a)\right)^{1/2}
        (\V - \A)\Q_{j_1}^T \Q_{j_2}(\I - \A)\left(\diag(\bm a)\right)^{1/2}\right)\right]}_{TR'_3}.
        \label{needtrbound.eq}
\end{align}

$TR'$ can be bounded as:
\begin{align}
    TR'
    &\leq \left\Vert(\Z^{j-n} - \A)\Q_n^T\right\Vert_{F_{\bm a}} 
    \left\Vert \Q_l(\Z^{j-l} - \A) \right\Vert_{F_{\bm a}} \label{uselemma8.eq}\\
    &\leq \left\Vert(\Z^{j-n} - \A)\right\Vert_{op} 
        \left\Vert \Q_n\right\Vert_{F_{\bm a}} 
        \left\Vert \Q_l \right\Vert_{F_{\bm a}}
        \left\Vert (\Z^{j-l} - \A) \right\Vert_{op} \\
    &\leq \frac{1}{2} \zeta^{2j - n - l} 
        \left[ \left\Vert \Q_n\right\Vert_{F_{\bm a}}^2 
        + \left\Vert \Q_l \right\Vert_{F_{\bm a}}^2 \right] 
\end{align}
where (\ref{uselemma8.eq}) follows from Lemma~\ref{lemma8}.
We can similarly bound $TR'_1$ through $TR'_3$:
\begin{align}
    TR'_1 &\leq
    2\eta^2\sum_{l=0}^{j-1} \zeta^{j - l} \left[
        \mathbb{E}\left\Vert \Q_{j_1}\right\Vert_{F_{\bm a}}^2 + 
        \mathbb{E}\left\Vert \Q_l \right\Vert_{F_{\bm a}}^2 \right]\\
    TR'_2 &\leq
    2\eta^2\sum_{l=0}^{j-1} \zeta^{j - l} \left[
        \mathbb{E}\left\Vert \Q_{j_2}\right\Vert_{F_{\bm a}}^2 + 
        \mathbb{E}\left\Vert \Q_l \right\Vert_{F_{\bm a}}^2 \right] \\
    TR'_3 &\leq
        2\eta^2 \mathbb{E}\left\Vert \Q_{j_1}\right\Vert_{F_{\bm a}}^2 +
        2\eta^2 \mathbb{E}\left\Vert \Q_{j_2}\right\Vert_{F_{\bm a}}^2 .
\end{align}

Summing $TR'_0$ through $TR'_3$, we get:
\begin{align}
    \sum_{t=0}^3 TR'_t &\leq 
        \eta^2\sum_{n=0}^{j-1}\sum_{l=0,l \neq n}^{j-1}
        \zeta^{2j - n - l} \mathbb{E}
        \left[ \mathbb{E}\left\Vert \Q_n\right\Vert_{F_{\bm a}}^2 
        + \mathbb{E}\left\Vert \Q_l \right\Vert_{F_{\bm a}}^2 \right]
    \nonumber \\ &~~~
        + 2\eta^2 
        \sum_{l=0}^{j-1} \zeta^{j - l}  
        \mathbb{E}\left\Vert \Q_l\right\Vert_{F_{\bm a}}^2 
        + 2\eta^2 
        \sum_{l=0}^{j-1} \zeta^{j - l}  
        \mathbb{E}\left\Vert \Q_l\right\Vert_{F_{\bm a}}^2 
    \nonumber \\ &~~~~~~~~~~~~~~~~~~~
        + 2\eta^2 
        \sum_{l=0}^{j} \zeta^{j - l}  
        \mathbb{E}\left\Vert \Q_{j_1}\right\Vert_{F_{\bm a}}^2 
        + 2\eta^2 
        \sum_{l=0}^{j} \zeta^{j - l}  
        \mathbb{E}\left\Vert \Q_{j_2}\right\Vert_{F_{\bm a}}^2 
    \\
    &\leq 2\eta^2\sum_{n=0}^{j-1}\zeta^{j - n}
        \mathbb{E}\left\Vert \Q_n\right\Vert_{F_{\bm a}}^2 
        \sum_{l=0,l \neq n}^{j-1}\zeta^{j - l} 
        + 4\eta^2 
        \sum_{l=0}^{j-1} \zeta^{j - l}  
        \mathbb{E}\left\Vert \Q_l\right\Vert_{F_{\bm a}}^2 
    \nonumber \\ &~~~~~~~~~~~~~~~~~~~
        + 2\eta^2 
        \mathbb{E}\left\Vert \Q_{j_1}\right\Vert_{F_{\bm a}}^2 
        \sum_{l=0}^{j} \zeta^{j - l}  
        + 2\eta^2 
        \mathbb{E}\left\Vert \Q_{j_2}\right\Vert_{F_{\bm a}}^2 
        \sum_{l=0}^{j} \zeta^{j - l}  
\label{tr2bound.eq}
\end{align}
where (\ref{tr2bound.eq}) follows from the symmetry of the indices $n$ and $l$.

Plugging (\ref{tr2bound.eq}) back into (\ref{needtrbound.eq}): 
\begin{align}
    &T_2 
    \leq 2\eta^2\sum_{r=0}^{j-1}\mathbb{E}\left\Vert 
        \Q_r(\Z^{j-r} - \A)\right\Vert_{F_{\bm a}}^2
        +2\eta^2\mathbb{E}\left\Vert \Q_{j_1}(\V - \A)\right\Vert_{F_{\bm a}}^2
        +2\eta^2\mathbb{E}\left\Vert \Q_{j_2}(\I - \A)\right\Vert_{F_{\bm a}}^2  
    \nonumber \\ &~~~~~~~~~~~~~~~~~~~~~~~~
        + 2\eta^2\sum_{n=0}^{j-1}\zeta^{j - n}
        \mathbb{E}\left\Vert \Q_n\right\Vert_{F_{\bm a}}^2 
        \sum_{l=0,l \neq n}^{j-1}\zeta^{j - l} 
        + 4\eta^2 
        \sum_{l=0}^{j-1} \zeta^{j - l}  
        \mathbb{E}\left\Vert \Q_l\right\Vert_{F_{\bm a}}^2 
    \nonumber \\ &~~~~~~~~~~~~~~~~~~~~~~~~~~~~~~~~~~~~~~~~~~~~~~~~~~~~~~~~~
        + 2\eta^2 
        \mathbb{E}\left\Vert \Q_{j_1}\right\Vert_{F_{\bm a}}^2 
        \sum_{l=0}^{j} \zeta^{j - l}  
        + 2\eta^2 
        \mathbb{E}\left\Vert \Q_{j_2}\right\Vert_{F_{\bm a}}^2 
        \sum_{l=0}^{j} \zeta^{j - l} \\ 
    &\leq 2\eta^2\sum_{r=0}^{j-1}\mathbb{E}\left\Vert \Q_r \right\Vert_{F_{\bm a}}^2
    \left\Vert(\Z^{j-r} - \A)\right\Vert_{op}^2
        +2\eta^2\mathbb{E}\left\Vert \Q_{j_1}\right\Vert_{F_{\bm a}}^2 
        \left\Vert \V - \A \right\Vert_{op}^2
        +2\eta^2\mathbb{E}\left\Vert \Q_{j_2} \right\Vert_{F_{\bm a}}^2  
        \left\Vert \I - \A \right\Vert_{op}^2 
    \nonumber \\ &~~~~~~~~~~~~~~~~~~~~~~~~
        + 2\eta^2\sum_{n=0}^{j-1}\zeta^{j - n}
        \mathbb{E}\left\Vert \Q_n\right\Vert_{F_{\bm a}}^2 
        \sum_{l=0,l \neq n}^{j-1}\zeta^{j - l} 
        + 4\eta^2 
        \sum_{l=0}^{j-1} \zeta^{j - l}  
        \mathbb{E}\left\Vert \Q_l\right\Vert_{F_{\bm a}}^2 
    \nonumber \\ &~~~~~~~~~~~~~~~~~~~~~~~~~~~~~~~~~~~~~~~~~~~~~~~~~~~~~~~~~
        + 2\eta^2 
        \mathbb{E}\left\Vert \Q_{j_1}\right\Vert_{F_{\bm a}}^2 
        \sum_{l=0}^{j} \zeta^{j - l}  
        + 2\eta^2 
        \mathbb{E}\left\Vert \Q_{j_2}\right\Vert_{F_{\bm a}}^2 
        \sum_{l=0}^{j} \zeta^{j - l} 
         \label{lemma8t2.eq}\\
     &\leq 2\eta^2\sum_{r=0}^{j-1} \zeta^{j-r}
         \mathbb{E}\left\Vert \Q_r \right\Vert_{F_{\bm a}}^2
        +2\eta^2\mathbb{E}\left\Vert \Q_{j_1}\right\Vert_{F_{\bm a}}^2 
        +2\eta^2\mathbb{E}\left\Vert \Q_{j_2} \right\Vert_{F_{\bm a}}^2  
    \nonumber \\ &~~~~~~~~~~~~~~~~~~~~~~~~
        + 2\eta^2\sum_{n=0}^{j-1}\zeta^{j - n}
        \mathbb{E}\left\Vert \Q_n\right\Vert_{F_{\bm a}}^2 
        \sum_{l=0,l \neq n}^{j-1}\zeta^{j - l} 
        + 4\eta^2 
        \sum_{l=0}^{j-1} \zeta^{j - l}  
        \mathbb{E}\left\Vert \Q_l\right\Vert_{F_{\bm a}}^2 
    \nonumber \\ &~~~~~~~~~~~~~~~~~~~~~~~~~~~~~~~~~~~~~~~~~~~~~~~~~~~~~~~~~
        + 2\eta^2 
        \mathbb{E}\left\Vert \Q_{j_1}\right\Vert_{F_{\bm a}}^2 
        \sum_{l=0}^{j} \zeta^{j - l}  
        + 2\eta^2 
        \mathbb{E}\left\Vert \Q_{j_2}\right\Vert_{F_{\bm a}}^2 
        \sum_{l=0}^{j} \zeta^{j - l} 
         \label{lemma9t2.eq}
\end{align}
where (\ref{lemma8t2.eq}) follows from Lemma~\ref{lemma7},
and (\ref{lemma9t2.eq}) follows from Lemma~\ref{lemma9}.

We further bound $T_2$:
\begin{align}
    &T_2 
    \leq 2\eta^2\sum_{r=0}^{j-1} \zeta^{j-r}
        \mathbb{E}\left\Vert \Q_r \right\Vert_{F_{\bm a}}^2
        +2\eta^2\mathbb{E}\left\Vert \Q_{j_1}\right\Vert_{F_{\bm a}}^2 
        +2\eta^2\mathbb{E}\left\Vert \Q_{j_2} \right\Vert_{F_{\bm a}}^2  
    \nonumber \\ &~~~~~~~~~~~~~~~~~~~~~~~~
        + 2\eta^2\sum_{n=0}^{j-1}\zeta^{j - n}
        \mathbb{E}\left\Vert \Q_n\right\Vert_{F_{\bm a}}^2 
        \frac{\zeta}{1-\zeta}
        + 4\eta^2 
        \sum_{l=0}^{j-1} \zeta^{j - l}  
        \mathbb{E}\left\Vert \Q_l\right\Vert_{F_{\bm a}}^2 
    \nonumber \\ &~~~~~~~~~~~~~~~~~~~~~~~~~~~~~~~~~~~~~~~~~~~~~~~~~~~~~~~~~~~~~~~
        + 2\eta^2 
        \mathbb{E}\left\Vert \Q_{j_1}\right\Vert_{F_{\bm a}}^2 
        \frac{1}{1-\zeta}
        + 2\eta^2 
        \mathbb{E}\left\Vert \Q_{j_2}\right\Vert_{F_{\bm a}}^2 
        \frac{1}{1-\zeta}
        \label{zeta_sum.eq}\\
    &\leq 2\eta^2\sum_{r=0}^{j-1}
        \left(\zeta^{2(j-r)} + 2\zeta^{j-r} + \frac{\zeta^{j-r+1}}{1-\zeta}\right)
        \mathbb{E}\left\Vert \Q_r \right\Vert_{F_{\bm a}}^2
    \nonumber \\ &~~~~~~~~~~~~~~~~~~~~~~~~~~~~
        +2\eta^2 \left(\frac{2-\zeta}{1-\zeta} \right) 
        \mathbb{E}\left\Vert \Q_{j_1}\right\Vert_{F_{\bm a}}^2 
        +2\eta^2 \left(\frac{2-\zeta}{1-\zeta} \right) 
        \mathbb{E}\left\Vert \Q_{j_2} \right\Vert_{F_{\bm a}}^2  
\end{align}
where (\ref{zeta_sum.eq}) follows from the summation formulae of a power series in (\ref{seriessum.eq}).
%\begin{align}
%    \sum_{l=0}^{j} \zeta^{j-l} 
%    \leq \sum_{l=-\infty}^{j} \zeta^{j - l} 
%    \leq \frac{1}{1-\zeta}.
%\end{align}
%\begin{align}
%    \sum_{l=0}^{j-1} \zeta^{j-l} 
%    \leq \sum_{l=-\infty}^{j-1} \zeta^{j - l} 
%    \leq \frac{\zeta}{1-\zeta}.
%\end{align}

After applying the definition of $\Q$ to (\ref{zeta_sum.eq}), we obtain:
\begin{align}
    &T_2 
    = 2\eta^2 \sum_{r=0}^{j-1} 
        \left(\zeta^{2(j-r)} + 2\zeta^{j-r} + \frac{\zeta^{j-r+1}}{1-\zeta}\right)
        \mathbb{E}\left\Vert \sum_{s=1}^{q\tau} \nabla F(\X_{rq\tau+s}) \right\Vert_{F_{\bm a}}^2
    \nonumber \\ &~~~
        + 2\eta^2 \left(\frac{2-\zeta}{1-\zeta} \right) 
        \mathbb{E}\left\Vert \sum_{s=1}^{l\tau} 
        \nabla F(\X_{jq\tau+s}) \right\Vert_{F_{\bm a}}^2 
        + 2\eta^2 \left(\frac{2-\zeta}{1-\zeta} \right) 
        \mathbb{E}\left\Vert \sum_{s=1}^{f-1} 
        \nabla F(\X_{jq\tau+l\tau+s}) \right\Vert_{F_{\bm a}}^2 
        \label{t2_prejensen.eq} \\
    &\leq 2\eta^2q\tau \sum_{r=0}^{j-1} 
        \left(\zeta^{2(j-r)} + 2\zeta^{j-r} + \frac{\zeta^{j-r+1}}{1-\zeta}\right)
        \sum_{s=1}^{q\tau} \mathbb{E}\left\Vert \nabla F(\X_{rq\tau+s}) \right\Vert_{F_{\bm a}}^2
    \nonumber \\ &~~~~~~~~~~~~~~~~~~~~~~~~~~~~~~~~~~~~~~~~~~~~~
        + 2\eta^2l\tau \left(\frac{2-\zeta}{1-\zeta} \right) 
        \sum_{s=1}^{l\tau} \mathbb{E}\left\Vert 
        \nabla F(\X_{jq\tau+s}) \right\Vert_{F_{\bm a}}^2 
    \nonumber \\ &~~~~~~~~~~~~~~~~~~~~~~~~~~~~~~~~~~~~~~~~~~~~~~~~~~~~~~
        + 2\eta^2(f-1) \left(\frac{2-\zeta}{1-\zeta} \right) 
        \sum_{s=1}^{f-1} \mathbb{E}\left\Vert 
        \nabla F(\X_{jq\tau+l\tau+s}) \right\Vert_{F_{\bm a}}^2 
        \label{t2_jensen.eq}
\end{align}
where (\ref{t2_jensen.eq}) follows from (\ref{t2_prejensen.eq}) by Jensen's inequality.

Summing over all iterates in the $j$-th sub-network update period, we obtain:
\begin{align}
    \sum_{l=0}^{q-1} \sum_{f=1}^{\tau} T_2 
    &\leq 2\eta^2 q^2\tau^2 \sum_{r=0}^{j-1} \left( 
        \left(\zeta^{2(j-r)} + 2\zeta^{j-r} + \frac{\zeta^{j-r+1}}{1-\zeta}\right)
        \sum_{s=1}^{q\tau} \mathbb{E}\left\Vert 
        \nabla F(\X_{rq\tau+s}) \right\Vert_{F_{\bm a}}^2 \right)
    \nonumber \\ &~~~~~~~~~~~~~~~~~~~~~~~~
        + \eta^2q\tau(q-1) \left(\frac{2-\zeta}{1-\zeta} \right) 
        \sum_{s=1}^{q\tau} \mathbb{E}\left\Vert 
        \nabla F(\X_{jq\tau+s}) \right\Vert_{F_{\bm a}}^2 
    \nonumber \\ &~~~~~~~~~~~~~~~~~~~~~~~~
        + \eta^2q\tau(\tau-1) \left(\frac{2-\zeta}{1-\zeta} \right) 
        \sum_{s=1}^{\tau-1} \mathbb{E}\left\Vert 
        \nabla F(\X_{jq\tau+q\tau+s}) \right\Vert_{F_{\bm a}}^2 \\
    &\leq 2\eta^2 q^2\tau^2 \sum_{r=0}^{j} \Gamma_r 
        \sum_{s=1}^{q\tau} \mathbb{E}\left\Vert 
        \nabla F(\X_{rq\tau+s}) \right\Vert_{F_{\bm a}}^2 .
        \label{zetasum2.eq}
\end{align}

Summing over all iterations and applying the 
summation bound in (\ref{summation2Gamma.eq}) to (\ref{zetasum2.eq}):
\begin{align}
    \sum_{j=0}^{K/(q\tau)-1} \sum_{l=0}^{q-1} \sum_{f=1}^{\tau} T_2
    &\leq  2\eta^2q^2\tau^2 \Gamma 
        \sum_{k=1}^{K} \mathbb{E}\left\Vert 
        \nabla F(\X_{k}) \right\Vert_{F_{\bm a}}^2. 
\end{align}

Summing $T_1$ and $T_2$, we obtain
\begin{align}
    \frac{2L^2}{K}&\sum_{k=1}^K \mathbb{E}\Vert \X_k(\I - \A) \Vert_{F_{\bm a}}^2 
    \leq  \frac{2L^2}{K}\sum_{k=1}^K T_1 
        + \frac{2L^2}{K}\sum_{k=1}^K T_2 \\
    &\leq 
    4L^2 \eta^2\sigma^2  
        q^3\tau^3 \left(\frac{1}{q\tau}-\frac{1}{K} \right) 
        \left(\frac{\zeta^2}{1 - \zeta^2} + \frac{2\zeta}{1-\zeta} + \frac{1}{(1-\zeta)^2} \right) \Ppos
    \nonumber \\ &~~~
        + 4L^2\eta^2\sigma^2 \left( \frac{2-\zeta}{1-\zeta} \right)
        \left(\tau^2\frac{(q-1)(2q-1)}{6}
        +\frac{(\tau-1)(2\tau-1)}{6}\right) \Ppos 
    \nonumber \\ &~~~
        + 8L^2\eta^2q^2\tau^2 \Gamma
        \frac{1}{K}\sum_{k=1}^{K} \sum_{i=1}^N \bm a_i 
        \left(\bm p_{i}(\beta-1)+1\right)\mathbb{E}\left\Vert \nabla F(\bm x_k^{(i)}) \right\Vert^2 .
\end{align}

Plugging $T_1$ and $T_2$ back into Lemma \ref{lemma3}, we arrive at
\begin{align}
    &\mathbb{E}\left[ \frac{1}{K} \sum_{k=1}^K \Vert\nabla F(\bm u_k)\Vert^2 \right]
    \leq 
        \frac{2\left(F(\bm x_1) - F_{inf}]\right)}{\eta K}
        + \sigma^2 \eta L\sum_{i=1}^N  \bm a_i^2 \bm p_i  
        + \frac{2L^2}{K}\sum_{k=1}^K T_1 
        + \frac{2L^2}{K}\sum_{k=1}^K T_2 
    \nonumber \\ &~~~
        - \frac{1}{K}\sum_{k=1}^K
        \sum_{i=1}^N \bm a_i  
        \left((4\bm p_i - \bm p_i^2 - 2) - \eta L \left(\bm a_i\bm p_{i}(\beta+1)-\bm a_i\bm p_i^2 + \bm p_i^2\right) \right)
        \mathbb{E} \left\Vert \nabla F(\bm x_k^{(i)}) \right\Vert^2 \\
    &= 
        \frac{2\left(F(\bm x_1) - F_{inf}]\right)}{\eta K}
        + \sigma^2 \eta L\sum_{i=1}^N  \bm a_i^2 \bm p_i  
    \nonumber \\ &
        + 4L^2 \eta^2\sigma^2  
        q^3\tau^3 \left(\frac{1}{q\tau}-\frac{1}{K} \right) 
        \left(\frac{\zeta^2}{1 - \zeta^2} + \frac{2\zeta}{1-\zeta} + \frac{1}{(1-\zeta)^2} \right) \Ppos
    \nonumber \\ &
        + 4L^2\eta^2\sigma^2 \left( \frac{2-\zeta}{1-\zeta} \right)
        \left(\tau^2\frac{(q-1)(2q-1)}{6}
        +\frac{(\tau-1)(2\tau-1)}{6}\right) \Ppos 
    \nonumber \\ &
        - \frac{1}{K}\sum_{k=1}^K
        \sum_{i=1}^N \bm a_i  
        \left((4\bm p_i - \bm p_i^2 - 2) - \eta L \left(\bm a_i\bm p_{i}(\beta+1)-\bm a_i\bm p_i^2 + \bm p_i^2\right)
        - 8L^2\eta^2q^2\tau^2 \Gamma \right)
        \mathbb{E} \left\Vert \nabla F(\bm x_k^{(i)}) \right\Vert^2 
        \label{needeta.eq}
\end{align}

If $\eta$ satisfies the following for  $i=1, \ldots, N$,
\begin{align}
       (4\bm p_i - \bm p_i^2 - 2) \geq \eta L \left(\bm a_i\bm p_{i}(\beta+1)-\bm a_i\bm p_i^2 + \bm p_i^2\right) + 8L^2\eta^2q^2\tau^2 \Gamma
\end{align}
then we can simplify (\ref{needeta.eq}):
\begin{multline}
    \mathbb{E}\left[ \frac{1}{K} \sum_{k=1}^K \Vert\nabla F(\bm u_k)\Vert^2 \right]
    \leq \frac{2\left(F(\bm x_1) - F_{inf}]\right)}{\eta K}
        + \sigma^2 \eta L\sum_{i=1}^N  \bm a_i^2 \bm p_i  \\
        + 4L^2 \eta^2\sigma^2  
        q^3\tau^3 \left(\frac{1}{q\tau}-\frac{1}{K} \right) 
        \left(\frac{\zeta^2}{1 - \zeta^2} + \frac{2\zeta}{1-\zeta} + \frac{1}{(1-\zeta)^2} \right) \Ppos \\
        + 4L^2\eta^2\sigma^2 \left( \frac{2-\zeta}{1-\zeta} \right)
        \left(\tau^2\frac{(q-1)(2q-1)}{6}
        +\frac{(\tau-1)(2\tau-1)}{6}\right) \Ppos .
\end{multline}
\end{proof}

\subsection{Comparison to Cooperative SGD} \label{coop.sec}
We note that when setting $a_i=1/N$ and $p_i=1$ for all workers $i$, and 
setting $q=1$, MLL-SGD reduces to Cooperative SGD~\citep{wang2018cooperative}. 
However, the bound in Theorem~\ref{main.thm}
differs when compared to the bound of Cooperative SGD. 
Specifically, Theorem~\ref{main.thm} 
has error terms dependent on $\tau^2$ as opposed to $\tau$.

This is due to the formulation of $\bm g_k^{(i)}$. Namely:
\begin{align}
    \mathbb{E}_k[\bm g_k^{(i)}] &= \bm p_{i}\mathbb{E}_k[g(\bm x_k^{(i)})]\\
    &= \bm p_{i}\nabla F(\bm x_k^{(i)}). 
\end{align}

Because we cannot assume $p_i=1$, there are cross terms in the expressions 
in equations (\ref{needtrbound1.eq}) and (\ref{bad2.eq}) that do not cancel out.
Thus, we needed to use a more conservative analysis at these steps on the proof. 
This is the reason that plugging in a value of $p_i=1$ is not enough to 
recover the same bound as in Cooperative SGD. 
A similar discrepancy can be observed when comparing with~\citet{koloskova2020unified}.

\end{document}